\let\Algorithm\algorithm
\renewcommand\algorithm[1][]{\Algorithm[#1]\setstretch{1.05}}
\let\hat\widehat
\def\crossmark{\textcolor{Maroon}{\ding{55}}}
\def \iid {\stackrel{\textnormal{i.i.d.}}{\sim}}
\def \defn {\,:=\,}
\def \ndim {{\textnormal{N-dim}}}
\def \hgamma {\widehat{\Gamma}}
\def \hmu {\widehat{\mu}}
\def \eps {\varepsilon}
\def \vs {V_{\rm s}}
\def \vp {V_{\rm p}}
\def \vh {V_{\rm h}}
\def \Gams {\Gamma^{\rm s}}
\def \Gamp {\Gamma^{\rm p}}
\def \dpi {{\pi^{\dagger}}}
\def \xat {\{X_t,A_t,Y_t\}_{t=1}^T}
\def \afactor {{(\log T)^{\alpha/2}}}
\def \worst {{\textnormal{worst}}}
\def \greedy {{\textnormal{greedy}}}
\newcommand{\pushright}[1]{\ifmeasuring@#1\else\omit\hfill$\displaystyle#1$\fi\ignorespaces}
\newcommand{\pushleft}[1]{\ifmeasuring@#1\else\omit$\displaystyle#1$\hfill\fi\ignorespaces}
\def\##1\#{\begin{align}#1\end{align}}
\def\$#1\${\begin{align*}#1\end{align*}}
\definecolor{darkred}{rgb}{.75,0,0}
\newcommand{\printfnsymbol}[1]{%
  \textsuperscript{\@fnsymbol{#1}}%
}
\title{Policy learning ``without'' overlap: \\ 
Pessimism and generalized empirical Bernstein's inequality}
\author[1]{Ying Jin}
\author[2]{Zhimei Ren\printfnsymbol{1}}
\author[3]{Zhuoran Yang}
\author[4]{Zhaoran Wang}
\affil[1]{Data Science Initiative and Department of Health Care Policy, Harvard University}
\affil[2]{Department of Statistics and Data Science, University of Pennsylvania}
\affil[3]{Department of Statistics and Data Science, Yale University}
\affil[4]{Department of Industrial Engineering \& Management Sciences, Northwestern University}
\date{}
\begin{document}

\maketitle

\begin{abstract}
  This paper studies offline policy learning, which aims at 
  utilizing observations collected a priori 
  (from either fixed or adaptively evolving behavior policies) to 
  learn an optimal individualized decision rule 
  that  achieves the best overall outcomes for a given population.   
  Existing policy learning methods  
  rely on a uniform overlap assumption, i.e., 
  the propensities of exploring 
  \emph{all} actions for \emph{all} individual characteristics 
  must be lower bounded; put differently,
  the performance of the existing methods depends
  on the worst-case propensity in the offline dataset.
  As one has no control over the data collection process, 
  this assumption can be unrealistic in many situations, 
  especially when  the behavior policies are allowed to evolve over time with diminishing 
  propensities for certain actions. 

  In this paper, we propose 
  Pessimistic Policy Learning (PPL), a new algorithm 
  that optimizes  lower confidence bounds (LCBs) --- instead of 
  point estimates --- of the  policy values. 
  The LCBs are constructed using knowledge of the behavior policies 
  for collecting the offline data. 
  Without assuming any uniform overlap condition, 
  we establish a data-dependent upper bound for the suboptimality 
  of our algorithm, which only depends on 
  (i) the overlap for the \emph{optimal} policy, and (ii) the complexity 
  of the policy class we optimize over.    
  As an implication, for adaptively collected data, 
  we ensure efficient policy learning as long as 
  the propensities for optimal actions are lower bounded over time, 
  while those for suboptimal ones are allowed to diminish arbitrarily fast. 
  In our theoretical analysis,
  we develop a new self-normalized type concentration inequality 
  for inverse-propensity-weighting estimators, generalizing
  the well-known empirical Bernstein's inequality 
  to unbounded and non-i.i.d.~data. 
  We complement our theory with an efficient 
  optimization algorithm via Majorization-Minimization and policy tree search, 
  as well as extensive simulation studies and real-world applications that 
  demonstrate the efficacy of PPL.
\end{abstract}


\section{Introduction} \label{sec:intro}
 
Policy learning aims at constructing an optimal individualized
decision rule  that achieves the best overall outcomes for a 
given population~\citep{manski2004statistical,hirano2009asymptotics,stoye2009minimax,stoye2012minimax}. 
Due to its 
broad application in healthcare~\citep{murphy2003optimal,kim2011battle,bertsimas2017personalized}, 
advertising~\citep{bottou2013counterfactual,farias2019learning}, 
recommendation systems~\citep{schnabel2016recommendations,li2011unbiased,swaminathan2017off}, etc.,
how to utilize pre-collected data 
for efficient policy learning 
has received extensive attention in various fields
 including economics, causal inference, and statistical machine learning. 

This paper studies policy learning from 
an offline  dataset $\cD = \{ ( X_t , A_t , Y_t )\}_{t=1}^T$  
collected by some agent a priori.
To be precise, $ X_t \sim \PP_X $ are i.i.d.~contexts 
(i.e., individual characteristics), and
$ A_t $ is the action taken according to 
the {\em behavior policy} (also known as the propensity) 
$e_t(x,a):=\PP(A_t=a\given X_t=x)$;
the outcome (reward) is $Y_t = \mu(X_t,A_t)+\epsilon_t$, where 
$\mu(\cdot,\cdot)$ is an unknown function,
and $\{\epsilon_t\}_{t=1}^T$ are independent mean-zero random variables.
We assume knowledge of $e_t(\cdot,\cdot)$ for all $t\in[T]$, but have no control over 
how it is specified by the experimenter. 
%
A policy $\pi$ is a mapping from
the context space to the action space,
and its performance is measured by the {\em policy value}
$Q(\pi)=\EE[\mu(X,\pi(X))]$, i.e., 
the expected reward by executing $\pi$ on 
future contexts. 
Given a policy class $\Pi$ and 
the offline dataset $\cD$, our goal  is to select a policy $\pi \in \Pi$  
whose  performance  is as close as possible 
to the optimal policy $\pi^* = \argmax_{\pi\in\Pi}Q(\pi)$. 
We consider two types of data collection processes that are ubiquitous in
practice. 
\begin{enumerate}[(a)]
\item \textbf{Batched data}: the dataset  is collected by executing a fixed and potentially suboptimal policy, 
so that $\{(X_t,A_t,Y_t)\}_{t=1}^T$ are i.i.d.~tuples. 
Such datasets may come from randomized controlled trials using a pre-fixed treatment rule~\citep{murphy2003optimal,athey2017econometrics}, A/B testing in online platforms~\citep{kohavi2020trustworthy}, 
online recommendation systems with a static rule~\citep{schnabel2016recommendations}, to name a few.
\item \textbf{Adaptively collected data}: the dataset is collected by running an adaptive learning algorithm, where the behavior policies are progressively 
modified to achieve certain goals of the experimenter, e.g., 
improving statistical efficiency in adaptive experiments~\citep{anscombe1963sequential,simon1977adaptive,murphy2005experimental,collins2007multiphase}, or maximizing 
cumulative performance  by bandit algorithms 
in production systems~\citep{lai1985asymptotically,thompson1933likelihood,agrawal2013thompson}. 
This setting strictly generalizes the first one 
by allowing for adaptivity. 
It is more challenging due to data dependence and possibly diminishing exploration, 
and is studied only very recently~\citep{zhan2021policy,bibaut2021risk}.
\end{enumerate}

Existing  algorithms for offline policy learning 
are based on the principle of welfare maximization~\citep{kitagawa2018should}, which 
typically consist of two components: 
evaluation and optimization~\citep[e.g.,][]{athey2021policy,zhou2022offline,zhan2021policy,bibaut2021risk}. 
They first construct policy value estimators $\hat Q(\pi)$ for all $\pi \in \Pi$, 
and then optimize the point estimates
to return $\hat \pi = \argmax_{\pi\in\Pi} \hat Q (\pi)$.  
As we shall discuss in Section~\ref{subsec:challenge}, 
a good performance of such ``greedy'' learners 
relies crucially on accurate 
evaluation of {\em all} policies in $\Pi$. 
This further
requires   that $e_t(x,\pi(x))$
is sufficiently large for all $\pi \in \Pi$: 
intuitively, all actions that are of interest 
are taken sufficiently many times 
in the offline dataset. 
We call it the  \emph{uniform}  overlap assumption, 
as $e_t(x,\pi(x))$ measures the \emph{overlap} between 
the behavior policies and $\pi$. 
Accordingly, 
the worst-case overlap $\inf_{\pi\in\Pi} e_t(x,\pi(x))$ determines 
the learning performance of those methods.

Although uniform overlap is commonly assumed in the policy learning literature, 
it can be violated in many 
practical scenarios.
When learning from  rich policy classes, 
it essentially posits that  $e_t(x,a)$ must be large for all
$(x,a) \in \cX \times \cA$. 
In the adaptive setting, 
imagining the data is collected by  a bandit algorithm
that gradually decreases exploration~\citep{thompson1933likelihood},
the propensities 
$e_t(x, a)$ will diminish  over time for some $(x,a)$ 
pairs, often at a rate faster
than  required by existing policy learning methods~\citep{bembom2008data,hadad2021confidence}.
In the batched setting,  physical constraints (e.g., some 
actions are expensive to take) may lead to   
highly unbalanced sampling of different actions. 
In these cases, the greedy algorithms may only provide vacuous guarantees
due to the lack of uniform overlap. 
It is worth noting several recent efforts to overcome the overlap issue broadly in the causal inference literature; 
they either modify the evaluated policy~\citep{kennedy2019nonparametric} 
or clip tiny propensities~\citep{swaminathan2015batch}, both introducing considerable bias. 
As such, the fundamental role of overlap in policy learning remains an open question.
To fill in this gap, we aim to advance the 
understanding of the fundamental limit of policy learning 
in the absence of uniform overlap.


\subsection{Pessimistic policy learning}

In this paper, we propose a new framework, Pessimistic Policy Learning (PPL),  
to demonstrate that uniform overlap, while sufficient, is 
not a necessary  condition for efficient policy learning from a given policy class. 
PPL learns a high-quality policy whose gap from the optimal policy $\pi^*$
scales with the overlap of $\pi^*$, thereby 
strictly improving upon existing methods and 
automatically adapting to the quality of the offline data. 

The crux of PPL is a 
first-of-the-kind instantiation of the \emph{pessimism} principle~\citep{jin2021pessimism} 
in policy learning.  
PPL optimizes a carefully designed class of 
confidence lower bounds (LCBs) -- instead of point estimates -- of the policy values, where  
the LCBs are constructed to reflect the estimation error 
of commonly-used inverse-propensity-weighting (IPW)-type estimators.

Table~\ref{tab:summary_up_lo} summarizes the key features of PPL 
in comparison to prior works.
The upper half of it contrasts PPL 
with representative works in the policy learning literature. 
Besides providing  a comprehensive treatment for both batched and adaptive data, PPL 
achieves a so-called oracle property where the performance adapts to the overlap of $\pi^*$, 
and the general guarantees do not rely on any overlap condition. By introducing pessimism, PPL uniformly improve upon these methods~\cite{kitagawa2018should,zhan2021policy,swaminathan2015batch} without paying a price.
We note the comparison to~\cite{swaminathan2015batch}  which relies on clipping tiny propensities: 
while clipping is central to facilitate their analysis of IPW-type estimators, 
it loses the oracle adaptivity provided by pessimism. 
This highlights the importance of careful treatment of unbounded propensities.

Moreover, PPL applies 
without structural assumptions on the regression function $\mu(x,a)$.  
This is the benefit of the unbiasedness and consistency of IPW-type estimators, 
which is in contrast to existing algorithms under the hood of pessimism listed 
in the lower half of Table~\ref{tab:summary_up_lo} (these works mostly address
offline reinforcement learning; see Section~\ref{subsec:related_work} for a summary).
This line of works 
exclusively rely on structural assumptions on quantities similar to $\mu(x,a)$ 
to obtain valid quantification of uncertainty. 
However, as pointed by~\citep{dimakopoulou2017estimation}, the risk of 
mis-specification is an important issue in policy learning especially in critical domains, 
where tradeoff between knowledge of behavior policy and structural assumptions becomes necessary.  
As such, our statistical techniques are in sharp contrast to these works.

\begin{table}[ht]\renewcommand{\arraystretch}{1.2}
    \centering
    \resizebox{\textwidth}{!}{\begin{tabular}{c|c|c|c|c|c|c} 
        \toprule
         & \makecell{Adaptive\\data} & \makecell{Structure\\free} & \makecell{Oracle\\property} &\makecell{No\\overlap}  & \makecell{Variance\\adaptive} & \makecell{Knowledge of\\behavior policy} \\
        \midrule
       \cite{kitagawa2018should}
       & \crossmark & \checkmark & \crossmark  & \crossmark & \crossmark &  {assumed} \\
        \cite{swaminathan2015batch}&\crossmark & \checkmark & \crossmark &  \textcolor{gray!40}{\checkmark} &  \checkmark &  {assumed}\\ 
        \cite{zhan2021policy} & \checkmark & \checkmark & \crossmark  & \crossmark & \crossmark &  {assumed}\\ 
        \hline
         \cite{jin2021pessimism}  & \checkmark &  \crossmark & \checkmark & \checkmark&  \checkmark & not required\\ 
         \cite{xie2021bellman} & \crossmark & \crossmark&  \checkmark &  \checkmark &  \checkmark & not required\\ 
         \hline
         \textbf{This work} & \checkmark & \checkmark&\checkmark &\checkmark & \checkmark &  {assumed}\\
        \bottomrule
    \end{tabular}}
    \caption{Summary of related works. In each column, we denote whether the method achieves the property by \checkmark = achievement, \crossmark = no achievement, and \textcolor{gray!40}{\checkmark} = partial achievement. 
    The first three methods are from the policy learning literature, all based on the greedy approach. 
    The fourth and fifth lines are from the offline reinforcement learning literature which implement the pessimism principle by  
    leveraging parametric/functional structures of the outcomes, highlighting the tradeoff between outcome modeling and design knowledge.
    }
    \label{tab:summary_up_lo}
\end{table}
 

\subsection{Preview of contributions}

We summarize the contributions of this work as follows.

\begin{enumerate}[(i)]
\item For both batched and adaptively collected data, we propose the generic recipe for PPL, 
which outputs 
\$
\hat\pi = \argmax_{\pi\in\Pi} \big\{ \hat{Q}(\pi) - R(\pi)\big\},
\$
where $\hat{Q}(\pi)$ is a commonly-used policy value estimator, 
and $R(\pi)$ is a policy-dependent, data-dependent quantity that obeys 
$|\hat{Q}(\pi)-Q(\pi)|\leq R(\pi)$ simultaneously for all $\pi\in\Pi$ with high probability.  

\item We then propose to construct uniformly-valid 
upper bound $R(\pi)$ (hence the LCBs)  via
a novel concentration inequality
    for offline policy evaluation (OPE), which roughly states that with high probability,
    \$
    \big| \hat{Q}(\pi) - Q(\pi) \big| \lesssim \textrm{Complexity}(\Pi) \cdot 
\sqrt{ \hat\Var(\hat{Q}(\pi)) }, \quad \textrm{for     all }\pi\in\Pi.
    \$
    Here $\hat{Q}(\pi)$ is
    the augmented-inverse-propensity-weighting (AIPW)  estimator for $Q(\pi)$,
    and $\hat\Var(\hat{Q}(\pi))$
    is an estimator for the variance of 
    $\hat{Q}(\pi)$ (the definition is given in Section~\ref{subsec:generic_algo}).
    We establish this result using tools from tree symmetrization~\citep{rakhlin2015sequential} and
    self-normalized processes~\citep{de2004self}.
    Its single-policy version can be viewed as
    a generalization of the well-known empirical Bernstein's inequality~\citep{bartlett2005local,bartlett2006convexity,koltchinskii2006local}
    to unbounded and dependent data, which may be of independent interest for other 
    reweighting-based methods~\citep{jin2023sensitivity,gui2022conformalized};
    we discuss the technical novelty in more details in Section~\ref{subsec:pess_idea} and compare our generalized empirical Bernstein's inequality with existing ones in Section~\ref{subsec:related_work}.

    \item With (i) as the key technical tool, 
    we establish the particular benefit of PPL in 
    terms of a generic data-dependent upper bound on its performance, in the form of 
    \$
    Q(\pi^*) - Q(\hat \pi) 
    ~\lesssim ~\textrm{Complexity}(\Pi) \cdot                    \textrm{EstErr}(\pi^*)
        ~\lesssim ~\textrm{Complexity}(\Pi) \cdot \sqrt{\hat\Var(\hat{Q}(\pi^*))},
        \$
    where $\hat\pi$ is the learned policy, and $\pi^*$ is the optimal policy. 
    The first term in 
    the upper bound is a complexity measure of $\Pi$ (see Section~\ref{subsec:nata} for its precise definition). For any $\pi\in\Pi$,
    $\hat\Var(\hat{Q}(\pi))$ is an estimator for the 
    policy evaluation error; 
    it is small as long as the overlap for $\pi$ is large. 
    Thus, the learned policy $\hat{\pi}$ 
    has a small suboptimality gap
    as long as the overlap for $\pi^*$ is sufficiently good, while no assumption is needed for suboptimal policies. This 
    shows an `oracle' property of our approach (see Figure~\ref{fig:overlap}), 
    i.e., it  
    adapts to the overlap of $\pi^*$ in the data, even if $\pi^*$ is not known a priori.

    \item When specialized to batched data, in Section~\ref{sec:fix}, 
    we achieve the standard $O(1/\sqrt{T})$ rate for the learning performance when
    $e(X,\pi^*(X))$ is lower bounded by a constant almost surely. This rate is 
    sharper than the
    existing results and hold under weaker conditions.
    We also provide a matching minimax lower bound in this setting, showing that
    pessimism is the best effort given the dataset.

    \item For adaptively collected data, in Section~\ref{sec:adaptive}, 
    we show that our algorithm achieves
    polynomial learning rate when the propensity for the \emph{optimal} policy
    is decaying at a polynomial rate.
    As a special case, we achieve $O(1/\sqrt{T})$ rate if $e_t(X,\pi^*(X))$ is
    uniformly lower bounded by a constant, while propensities of suboptimal
    policies are allowed to decay at a much faster (and potentially non-polynomial) rate.
    We complement our polynomial upper bound with a matching minimax
    lower bound; in the information-theoretic sense,
    the overlap of the \emph{optimal} policy
    captures the fundamental difficulty in offline policy learning with adaptively collected     data.

    \item  Section~\ref{sub:alg} addresses
    practical challenges in implementing pessimistic policy learning. 
    To tackle the challenge that our objective is non-convex, 
    we devise an efficient approximate optimization algorithm 
    that combines Majorization-Minimization~\citep{van2016gensvm} 
    and policy tree search~\citep{sverdrup2020policytree}. 
    Additionally, we develop a cross-validation strategy specialized to sequential data 
    to choose penalty parameters 
    with favorable performance. 
    Finally, in Section~\ref{sec:exp}, we evaluate our methods via extensive simulation studies 
    and application to 33 real-world datasets. 
    Our results confirm  the benefit of pessimism in 
    policy learning, as well as the efficacy of our optimization algorithms. 
\end{enumerate}

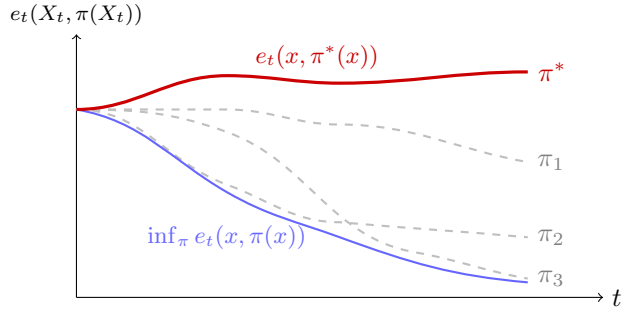
\begin{figure}[ht!] 
    \centering
    \begin{tikzpicture}
        \draw[-> ] (-1, -1) -- (6, -1) node[right] {$t$};
    \draw[-> ] (-1, -1) -- (-1, 2.5) node[above] {\footnotesize$e_t(X_t,\pi(X_t))$};
        
        \coordinate (c1) at (-1,1.5);
        \coordinate (c2) at (1,1.5);
        \coordinate (c3) at (2.5,1.3);
        \coordinate (c4) at (5,0.8);  
        \draw [thick,white!75!black,dashed] (c1) to[out=0,in=180] (c2) to[out=0,in=180](c3)
        to[out=0,in=170] (c4) node[right,white!50!black] { $\pi_1$}; 
    
        \coordinate (a1) at (-1,1.5);
        \coordinate (a2) at (1,0.5);
        \coordinate (a3) at (2.5,0);
        \coordinate (a4) at (5,-0.2);  
        \draw [thick,white!75!black,dashed] (a1) to[out=0,in=160] (a2) to[out=-20,in=180](a3)
        to[out=-5,in=175] (a4) node[right,white!50!black] { $\pi_2$}; 
    
        \coordinate (b1) at (-1,1.5);
        \coordinate (b2) at (1,1.2);
        \coordinate (b3) at (3.5,-0.4);
        \coordinate (b4) at (5,-0.75);  
        \draw [thick,white!75!black,dashed] (b1) to[out=0,in=160] (b2) to[out=-20,in=170](b3)
        to[out=-10,in=170] (b4) node[right,white!50!black] { $\pi_3$}; 
        \coordinate (g1) at (-1,1.5);
        \coordinate (g2) at (1,0.4);
        \coordinate (g3) at (2.5,-0.2);
        \coordinate (g4) at (5,-0.8);  
        \draw [thick,white!40!blue] (g1) to[out=-10,in=150] (g2) to[out=-30,in=160](g3)
        to[out=-20,in=175] (g4) ; 
    
        \coordinate (o1) at (-1,1.5);
        \coordinate (o2) at (1,1.95);
        \coordinate (o3) at (2.5,1.85);
        \coordinate (o4) at (5,2);  
        \draw [very thick,red!80!black] (o1) to[out=0,in=180] (o2) to[out=0,in=180](o3)
        to[out=0,in=180] (o4) node[right] {$ \pi^* $}; 
    
        \node[text=red!80!black] at (2.2,2.2) {\small $  e_t(x,\pi^*(x))$};
        \node[text=white!40!blue] at (1,-0.2) {\small $\inf_\pi e_t(x,\pi(x))$};
    \end{tikzpicture}  
    \caption{An illustration of the oracle property. The performance of 
    pessimistic policy learning only depends on the overlap for the optimal policy $\pi^*$ (red), while 
    the worst-case overlap $\inf_\pi e_t(x,\pi(x))$ is allowed to be small (or vanish over time). 
    See Sections~\ref{sec:fix} and~\ref{sec:adaptive} for more detailed discussion.}
    \label{fig:overlap}
    \end{figure}

\section{Background}
\label{sec:prelim}

Let $\cX$ denote the space of the contexts and
$\cY$ that of the reward.
A contextual bandit model $\cC$ 
is specified by an action set $\cA=\{a_1,\dots,a_K\}$ for $K=|\cA|$,
a context distribution $\PP_X$ and
a set of reward distributions 
$\{\PP_{Y \given X = x, A=a}\}_{x\in \cX, a\in \cA}$. 
We consider policy learning from an offline dataset that is 
collected from $\cC$. 
%

\subsection{Data collecting process}
 
At each time $t=1,2,\dots,T$, a context $X_t$ is independently drawn 
from $\PP_X$. The experimenter then takes an action 
$A_t\in \cA$ according to the behavior policy, and 
receives a reward $Y_t =\mu(X_t,A_t) + \epsilon_t$, 
where $\epsilon_t$ are independent mean-zero noise. 
Throughout, we denote $\cH_t=(X_1,A_1,Y_1,\dots,X_{t-1},A_{t-1},Y_{t-1})$ 
as the history {\em before} time $t$, 
so that $\PP(A_t=a\given X_t=x, \cH_t) = e_t(x,a\given \cH_t)$. 
We use $e_t(\cdot,\cdot\given \cH_t)$ to additionally emphasize  
that the behavior policy is potentially adaptive. 
In the context of causal inference~\citep{imbens2015causal}, $A_t$ can be understood as 
the treatment option taken for $X_t$, 
and the reward is  $Y_t=Y_t(A_t)$, 
where $\{Y_t(a)\}_{a\in \cA}$ are potential outcomes with 
$(X_t,Y_t(a)_{a\in \cA})$ following a population joint distribution, 
and we implicitly assume the conditional independence condition: $Y_t(a)\indep A_t\given X_t,\cH_t$. 
The two types of data generating processes we introduced in Section~\ref{sec:intro} 
are formally defined as follows. 
\begin{enumerate}[(a)]
    \item Batched   data: 
    the behavior policy is fixed and known, where 
    $\PP(A_t=a\given X_t = x,\cH_t) \equiv e(x,a)$ for all $(x,a)\in \cX \times \cA$ 
    and all $t\in [T]$.
    \item Adaptively collected data: 
    at each time $t\in [T]$, the action is taken according to 
    a history-dependent rule, where $\PP(A_t = a \given X_t = x,\cH_t) = e_t(x,a \given \cH_t)$
    for all $(x,a)\in \cX \times \cA$.
    As the behavior policy may depend on previous observations, 
    $\{(X_t,A_t,Y_t)\}_{t=1}^T$ are no longer i.i.d.  
\end{enumerate}

For simplicity, in both cases we write $e_t(x,a\given \cH_t)$ 
as the treatment probability 
at time $t\in[T]$, with the 
convention that $e_t(x,a\given \cH_t)\equiv e(x,a)$ for batched data. 
Throughout, We impose the following minimal assumption
on the data generating process. 

\begin{assumption}\label{assump:data}
The data generating process satisfies the following properties. 
\begin{enumerate}[(a)]
    \item Bounded reward: the rewards satisfy $Y_t \in [0,1]$ almost surely. 
    \item Known behavior policy: the behavior policies $e_t(x,a\given \cH_t)$ for all $(x,a)\in \cX\times \cA$
     are known to the learner. 
\end{enumerate}
\end{assumption}

The bounded reward assumption is standard in the literature  and 
the upper bound $1$ can be simply achieved by normalization.
Assumption~\ref{assump:data}(b) is typically satisfied for data
resulting from running bandit algorithms or online experiments, 
since the agent/experimenter often has access to the policy 
she  uses to collect the data (see e.g.,~\cite{collins2007multiphase,
li2010contextual,offer2021adaptive}).

\subsection{Policy learning and the performance metric}

Let $\Pi$ be a collection of policies, with 
each element in $\Pi$ being a mapping $\pi\colon \cX\to \cA$. 
For any $\pi\in\Pi$, we define its average policy value 
under the contextual bandit model $\cC$ as
\$
Q(\pi ;\cC) = \EE_{\cC}\big[ \mu(X,\pi(X)) \big] = \sum_{a\in \cA} \EE_{\cC}\big[  \ind\{\pi(X)=a\} Y(a)  \big],
\$
where the expectation is taken with respect to
the context distribution. 

The optimal policy $\pi^* \in \Pi$ 
maximizes the policy value, i.e., 
\#\label{eq:def_pi*}
\pi^*(\cC,\Pi) = \argmax_{\pi\in \Pi}~Q(\pi ;\cC).
\#
Our goal is to learn from the offline data a policy $\hat\pi\in \Pi$,
where the performace of $\hat \pi$ is measured by 
its {\em suboptimality}, defined as 
\$
\cL(\hat\pi ; \cC,\Pi) := Q(\pi^*(\cC,\Pi) ;\cC) - Q(\hat\pi ;\cC). 
\$
The above notations emphasize the dependence 
of our performance measure on the underlying contextual bandit $\cC$ 
and policy class $\Pi$. 
In the sequel, we omit $\cC$ and $\Pi$ when 
no confusion arises from the context. 

\subsection{Policy class complexity}
\label{subsec:nata}
We use the Natarajan dimension~\citep{natarajan1989learning} 
to quantify the complexity of a policy
class $\Pi$, which plays an important role 
in characterizing the performance of policy learning algorithms. 

\begin{definition}[Natarajan dimension]\label{def:nat}
Given a policy class $\Pi = \{\pi: \cX \mapsto \cA\}$,
we say $\{x_1,x_2,\ldots,x_m\}$
is shattered by $\Pi$ if there exist two functions 
$f_1,f_2$: $\{x_1,x_2,\ldots,x_m\} \mapsto \cA$
such that the following holds: 
\begin{enumerate}[(1)]
    \item For any $j\in [m]$, $f_{1}(x_j) \neq f_2(x_j)$;
    \item For any subset $S\subseteq [m]$, there exists some $\pi\in\Pi$ 
    such that  $\pi(x_j)=f_1(x_j)$ for all $j\in S$ and 
    $\pi(x_j) = f_2(x_j)$ for all $j\notin S$.   
\end{enumerate} 
The Natarajan dimension of $\Pi$, denoted as $\ndim(\Pi)$, 
is the largest size of a set of points  shattered by $\Pi$.
\end{definition}

The Natarajan dimension is a generalization of the 
well-known Vapnik-Chervonenkis (VC) dimension~\citep{vapnik2015uniform}
in the multi-action classification problem. 
Finite upper bounds for the Natarajan dimension  have been established
for many commonly used policy classes including
linear function classes, reduction trees, 
decision trees, random forests 
and neural networks~\citep{daniely2011multiclass,jin2023upper}.

\subsection{Related work}
\label{subsec:related_work}
 
\noindent\textit{Policy learning with a policy class.}  
This work adds to the literature of policy learning, 
%
where existing works have primarily focused on 
i.i.d.~data collected 
by a fixed behavior policy 
that is known~\citep{zhang2012estimating,zhao2012estimating,swaminathan2015batch,zhou2017residual,kitagawa2018should}, 
or  unknown but estimable~\citep{zhao2015doubly,athey2021policy,zhou2022offline,kallus2018balanced}.
These works all 
assume a uniform overlap condition, i.e., the propensity for all actions 
in the behavior policy is uniformly lower bounded by a constant. 
The overlap assumption is standard in causal inference for 
estimating average treatment effects~\citep{imbens2015causal}, 
and is essential for unknown behavior policy due to 
the necessity of controlling the error in its estimation~\citep{athey2021policy,zhou2022offline}. 
However, we show that when the 
behavior policy is known, the uniform overlap assumption is not necessary 
for efficient policy learning; our approach 
is based on an algorithmic change to the standard greedy approach in this literature. 
Recently, there has been an increasing interest in policy learning from
adaptively collected data~\citep{zhan2021policy,bibaut2021risk}. 
This setting is more challenging because the observations are no longer i.i.d., 
and further, adaptively collected data often 
have diminishing propensities of certain arms, which causes 
huge variance in the IPW-type estimators and translates to 
slow rates for the learning performance. 
To tackle these challenges, existing works assume the 
propensities for \emph{all} actions 
are deterministically lower bounded over time; as we introduced, this 
may be unrealistic for some commonly adopted 
algorithms. 
In contrast to the existing literature, we introduce a novel algorithm 
to policy learning and show  that  
the overlap of the \emph{optimal} policy is sufficient for efficient policy learning. 
Our results improve the learning performance 
and broaden the  scenarios where efficient policy learning is feasible in both settings. \\[-1ex]
 
\noindent\textit{Inference and learning without overlap.}  
This work adds to the literature 
on relaxing the overlap condition, a fundamental assumption in causal inference, 
policy evaluation and learning. 
Earlier works use strategies like sample trimming to rule out extreme 
propensities in treatment effect estimation and study asymptotic inference~\citep{yang2018asymptotic,branson2023causal}. 
There have also been recent works on policy evaluation and learning without overlap, 
concurrent with or after the first appearance of this work.  
For example, \cite{khan2023off} studies policy evaluation (i.e., without learning)  by extrapolating under smoothness assumptions; \cite{liu2023average}
shifts the target to overlap weighted average treatment effect on the treated;
\cite{mou2023kernel} studies  policy evaluation  
via regularized kernel regression, 
focusing on the minimax optimality under certain structural assumptions. 
For policy learning, \cite{zhao2023positivity} overcomes the overlap issue 
by considering an ``incremental'' policy that slightly deviates from the observational policy~\citep{kennedy2019nonparametric}, and study asymptotic, doubly robust policy learning among the class of incremental policies. In contrast, our work is not limited to the incremental policy class, and finite-sample analysis with pessimism and empirical Bernstein's inequality is also drastically different from the strategies in those works. 
\cite{chen2023steel} considers the problem under a Markov Decision Process, where the trajectory is assumed stationary, 
and induced by a fixed behavior policy (similar to our batched data setting).
The main problem therein is the unidentifiable part of policy value due to singularity, 
i.e., part of the space that has zero probability under the behavior policy; 
They employ distributionally robust optimization ideas to optimize the worst-case performance of policies, 
and handle the weak overlap in the estimation of the identifiable part via functional approximation of
the conditional mean function. Both the problem and method therein are different from ours. \\[-1ex]
 
\noindent\textit{Pessimism in offline RL and bandits.}  
The principle of pessimism was initially proposed 
in offline  
reinforcement learning (RL)~\citep{buckman2020importance,jin2021pessimism} 
for learning an optimal policy in Markov decision processes (MDPs) 
using pre-collected datasets~\citep{zanette2021provable,xie2021bellman,chen2022offline,rashidinejad2021bridging,yin2021towards,shi2022pessimistic,yan2022efficacy,xie2021policy,uehara2021pessimistic} 
and has been applied to  
contextual bandits as a special case of MDPs. 
To the best of our knowledge, 
this work is the first to  instantiate 
the pessimism principle 
for learning individualized decision rules
that requires no assumptions on the context space or
the mean outcome model.


To be specific, existing pessimism-based algorithms 
optimize LCBs for conditional value functions (which reduce to 
$\mu(x,a) := \EE[Y(a)\given X=x]$ in our setting),
whose construction either 
relies on finite cardinality of the context space~\citep{buckman2020importance,rashidinejad2021bridging}
or strong modeling assumptions~\citep{jin2021pessimism,xie2021bellman}. 
In contrast, our proposal 
does not rely on any modeling assumption on the conditional mean 
function.
It can be viewed as a `design-based' version 
of the pessimism principle which uses 
the knowledge of the sampling propensities.  
This is particularly suitable for 
continuously-valued contexts and 
for high-stakes applications where 
model misspecification is of concern. \\[-1ex]
 
\noindent\textit{Variance-regularized ERM and empirical Bernstein's inequality.}
The idea of pessimism  also appears in 
empirical risk minimization (ERM),
where the principle is implemented by taking into account the  
estimation uncertainty of empirical risks. 
Among them,~\cite{maurer2009empirical}  
proposed to optimize a variance-regularized empirical risk, 
and was extended by~\cite{duchi2016variance} to a computationally tractable 
formulation. 
The variance regularization is justified by 
empirical Bernstein's inequality  studied 
by a series of works~\citep{bartlett2005local,bartlett2006convexity,koltchinskii2006local}. 
Recently,~\cite{xu2020towards} studied 
instance-dependent risk without explicitly using a regularization term. 
These works all consider i.i.d.~data and bounded loss. 
Our method extends this line of work to 
\emph{unbounded} random variables 
and adaptively collected hence mutually \emph{dependent} 
data.
Closer to our setting,~\cite{swaminathan2015batch} studies 
counterfactual risk minimization using i.i.d.~data and unbounded weights. 
However, they used clipping to exclude extremely small weights 
and reduced the analysis to bounded random variables. The clipping  
causes bias, so the ``oracle'' property 
of pessimism (the sole dependence on the \emph{optimal} instance) may not hold.
Our work can also be of interest for providing a novel solution 
to counterfactual risk minimization with unbounded weights.

To deal with non-i.i.d.~data and unbounded weights, 
we develop a new self-normalized-type uniform concentration inequality 
for empirical risks.  
When applied to one single policy (i.e., without uniform concentration), 
it can be viewed as a generalization of 
the  empirical Bernstein's inequality~\citep{maurer2009empirical}. 
Several recent indepenendet works also generalize on empirical Bernstein's inequality, 
including~\cite{waudby2020estimating} for bounded data 
and~\cite{waudby2022anytime} for estimating a policy value 
using adaptively collected data.
While their methods and analysis are complementary to ours,
it will be interesting to see whether these bounds are applicable to 
providing uniform concentration over policy classes to enable policy learning in our approach.\\[-1ex]
 
\noindent\textit{Other reweighting-based methods.}
Since we use an IPW-type
estimator, this work is generally related to reweighting-based approaches for 
off-policy evaluation (OPE) in offline RL and contextual bandits such 
as marginal importance sampling~\citep{liu2018breaking,xie2019towards,uehara2020minimax,zhang2020gendice}, whose goal 
is to estimate the value of a policy from a pre-collected data set.  
Several works in this strand also consider policy optimization~\citep{nachum2019algaedice,lee2021optidice,zhan2022offline,rashidinejad2022optimal} 
with some pessimism (or conservativeness) features.  
These works operate under uniformly bounded weights, i.e., 
the uniform overlap condition. 
Our empirical Bernstein's inequality applies to unbounded weights, 
which may also be of interest for extending these works to broader settings.  
Finally, similar to our motivation,~\cite{kuzborskij2021confident} studies 
estimation of $Q(\pi)$ for a small number of policies $\pi$
with i.i.d.~data without uniform overlap between $\pi$ and the behavior policy. 
They use  self-normalized importance weighting 
to trade bias for reduced variance. Compared with them, 
our work uses a different estimator, applies to adaptively collected data, 
and studies policy optimization instead of policy evaluation. 
It would be interesting to see whether their ideas may lead to other 
construction of LCBs for similar ideas as PPL 
that works also for the adaptive case.

We summarize our contributions and their relation to existing literature 
in Figure~\ref{fig:diagram}.
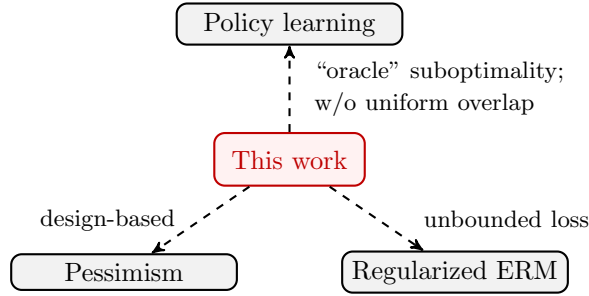
\begin{figure}[h!]
  \centering  \hspace{6em}
  \begin{tikzpicture}[->,>=stealth', thick, main node/.style={circle,draw}]
  
  \node[rounded corners, rectangle, text=black, draw=black, fill=black!5, minimum width = 3cm] (1) at  (0,1.8) {Policy learning} ;
  \node[rounded corners, rectangle, text=red!75!black, draw=red!75!black, fill=red!5,minimum width = 2cm, minimum height=0.7cm] (2) at  (0,0) {This work} ;
  \node[rounded corners, rectangle, text=black, draw=black, fill=black!5, minimum width = 3cm] (3) at  (-2.2,-1.5) {Pessimism} ;
  \node[rounded corners, rectangle, text=black, draw=black, fill=black!5, minimum width = 3cm] (4) at  (2.2, -1.5) {Regularized ERM} ;

  \draw[->, dashed] (2) -- (1) node [text width = 5cm, right = 0.2cm, midway] {\small ``oracle''  suboptimality; \\ w/o uniform overlap};
  \draw[->,dashed] (2) -- (3) node [text width = 2cm, left, midway] {\small design-based};
  \draw[->,dashed] (2) -- (4) node [text width = 4.5cm, right =0.5cm, midway] {\small unbounded loss};
  \end{tikzpicture}  
  \caption{Our contributions and their relation to existing literature.}
  \label{fig:diagram}
  \end{figure}

\section{Pessimistic policy learning} 


\subsection{What makes offline policy learning difficult?}
\label{subsec:challenge}

Before presenting our method, we first discuss 
the necessity of 
controlling the worst-case estimation error for greedy approaches, 
which explains the dependence on stringent uniform overlap assumptions. 

As introduced earlier, a greedy learner returns
$
\hat\pi_{\greedy} = \argmax_{\pi\in\Pi} ~ \hat{Q}(\pi).
$
Here, $\hat{Q}(\pi)$ is typically 
an IPW-type (unbiased) point estimator, 
which takes the form
$
\hat{Q}(\pi) = \frac{1}{T}\sum_{t=1}^T \frac{\ind\{A_t=\pi(X_t)\}}{e_t(X_t,\pi(X_t)\given \cH_t)} Y_t.
$
We can decompose the suboptimality of $\hat\pi =\hat\pi_{\greedy}$  into
\#\label{eq:subopt_decomp}
Q(\pi^*) - Q(\hat\pi ) = \underbrace{Q(\pi^*) - \hat{Q}(\pi^*)}_{\textrm{(i) intrinsic uncertainty}} + \underbrace{\hat{Q}(\pi^*) - \hat{Q}(\hat\pi)}_{\textrm{(ii) optimization error}}
+ \underbrace{\hat{Q}(\hat\pi) - Q(\hat\pi)}_{\textrm{(iii) greedy uncertainty}}.
\#
Above, (i) is the estimation uncertainty for the 
optimal policy $\pi^*$, (ii) is the nonpositive optimization error,
and (iii) is the estimation uncertainty of the 
greedily learned policy. Among these three sources
of uncertainty, it is most challenging to control (iii), since 
it might well be the case that
$\hat{\pi}_{\greedy}$ appears to be optimal
{\em as a result of} a large estimation error.

In Figure~\ref{fig:w01}, we consider a simple case 
with two policies $\Pi=\{\pi_1,\pi_2\}$ and a fixed behavior policy $e(x,a)$
to illustrate how poor estimation of
suboptimal policies confuses the greedy learner. 
In this example, $\pi^*=\pi_1$
is well-explored by the behavior policy --- i.e., its 
propensities $e(X_t,\pi^*(X_t))$
are all large --- and as a result, $\hat{Q}(\pi_1)-Q(\pi_1)$ is of a small scale; 
in contrast, the suboptimal policy $\pi_2$ is less explored by the
behavior policy and has small propensities, 
which translates to a large scale of $\hat{Q}(\pi_2)-Q(\pi_2)$. 
Thus, even though  $Q(\pi_2)$ is smaller than $Q(\pi_1)$, 
a greedy learner may still choose $\hat\pi_\greedy = \pi_2$ 
since there is a considerable chance that $\hat{Q}(\pi_2)$ 
appears to be larger than $\hat{Q}(\pi_1)$. 
In other words, to ensure a small suboptimality of $\hat\pi_\greedy$, 
we need both policies to be accurately estimated. 

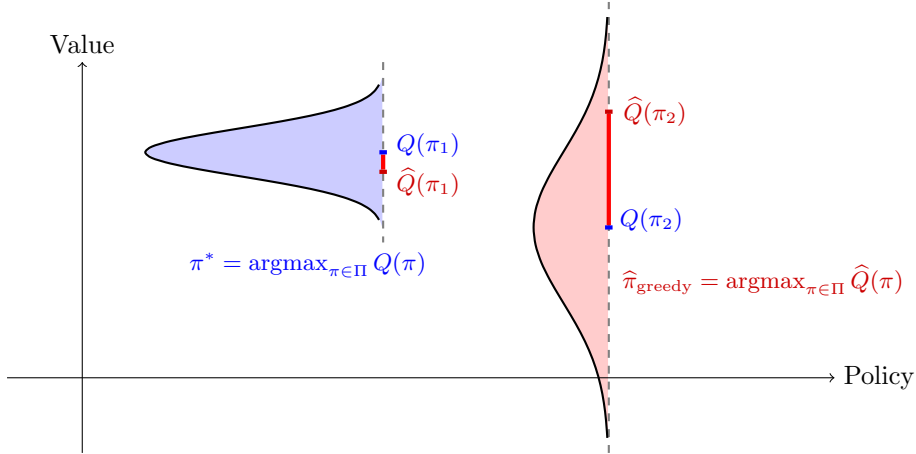
\begin{figure}[h!]
\centering
\begin{tikzpicture}
\fill [blue!20, domain=1:3, variable=\y]
        (2.99, 1.1)
        -- plot ({-exp(-(\y-2)*(\y-2)*5) * sqrt(10) + 3.01},{\y})
        -- (2.99, 1.9)
        -- cycle; 
\fill [red!20, domain=-1.8:3.8, variable=\y]
        (5.99, -1.8)
        -- plot ({-exp(-(\y-1)*(\y-1)/2) + 6}, {\y})
        -- (5.99, 3.8)
        -- cycle; 
\draw[->] (-2, -1) -- (9, -1) node[right] {Policy};
\draw[->] (-1, -2) -- (-1, 3.2) node[above] {Value};
\draw [gray, dashed, thick] (3,3.2) --  (3,0.8) ;
\draw[domain=1.1:2.9, smooth, variable=\y, black, thick]  plot ({-exp(-(\y-2)*(\y-2)*5) * sqrt(10) + 3}, {\y}); 
\draw [gray, dashed, thick] (6,-2) -- (6,4);
\draw[domain=-1.8:3.8, smooth, variable=\y, black, thick]  plot ({-exp(-(\y-1)*(\y-1)/2)  + 6}, {\y});

\draw [blue, ultra thick] (5.95,1) -- (6.05,1) ;
\draw [blue, ultra thick] (6,1.1) node[right]{\small $Q(\pi_2)$};
\draw [red!80!black, ultra thick] (5.95,2.54) -- (6.05,2.54) node[right]{\small $\hat Q(\pi_2)$};

\draw [red, ultra thick] (3,1.77) -- (3,1.97); 
\draw [red, ultra thick] (6,1.03) -- (6,2.51);

\draw [blue, ultra thick] (2.95,2) -- (3.05,2) ; 
\node[text=blue] at (3.6,2.1) {\small $Q(\pi_1)$}; 
\draw [red!80!black, ultra thick] (2.95,1.74) -- (3.05,1.74);  
\node[text=red!80!black] at (3.6,1.6) {\small $\hat Q(\pi_1)$}; 

\node[text=red!80!black] at (8.05,0.3) {\small $\hat\pi_{\textrm{greedy}} = \argmax_{\pi\in\Pi}\hat{Q}(\pi)$};
\node[text=blue] at (2,0.5) {\small $\pi^*  = \argmax_{\pi\in\Pi} Q(\pi)$};
\end{tikzpicture}
\caption{An illustration of how poor estimation of a suboptimal policy $\pi_2$ (red) leads to 
large suboptimality of the greedy approach. Here, $Q(\pi_i)$ is the actual 
policy value, and $\hat{Q}(\pi_i)$ the IPW estimator 
for $i=1,2$. The shaded area represents the distribution of two estimators. 
In this case, the greedy learner picks $\pi_2$ because $\hat{Q}(\pi_2)$ appears to 
be large due to its large estimation error. 
}
\label{fig:w01}
\end{figure}

The standard approach to controlling (iii) is by 
bounding the worst-case estimation error $V_\worst := \sup_{\pi\in\Pi} |\hat{Q}(\pi)-Q(\pi)|$,
which is indeed necessary for the greedy approach
as illustrated by the above example. 
Since the estimation error (or roughly, the standard deviation) of $\hat{Q}(\pi)$ depends on 
the inverse propensity for $\pi$, controlling $V_\worst$ 
often relies on assuming 
sufficiently large $e_t(X_t,\pi(X_t)\given \cH_t)$ for all $\pi\in\Pi$ and all $t\in[T]$~\citep{kitagawa2018should,zhan2021policy,bibaut2021risk}. 
As we discussed earlier, such assumptions often fail to hold in practice. 
In contrast, term (i) is much easier to control since 
it is the estimation error of one single policy. 
Indeed, we will show that for both batched and 
adaptively collected data,  (i) is the key quantity that 
captures the intrinsic difficulty of policy learning, 
as it arises in the information-theoretic lower bound.

\subsection{Pessimistic policy learning}
\label{subsec:pess_idea}

To eliminate the reliance on the uniform overlap assumption, 
we make an algorithmic change to the greedy learning paradigm
by introducing the pessimism principle~\citep{jin2021pessimism}.

For every policy $\pi\in\Pi$,
we complement the point estimator $\hat{Q}(\pi)$
with a policy-dependent regularization term
$R(\pi)$,
and optimize the {\em pessimistic policy value}
\#\label{eq:def_pess}
\hat\pi \defn \argmax_{\pi\in\Pi} ~ \big\{ \hat{Q}(\pi) - R(\pi) \big\}.
\#
Roughly speaking, we expect $R(\pi)$ to reflect 
the estimation error of $\hat{Q}(\pi)$, and therefore~\eqref{eq:def_pess} can be 
viewed as optimizing the lower confidence bounds (LCBs) for the policy values. 
The core merit of the pessimism principle is captured 
by the following proposition.

\begin{prop}\label{lem:pess}
Let $\hat\pi$ be obtained as in~\eqref{eq:def_pess}, 
and let $\pi^* =\pi^*(\cC,\Pi)$ be 
the optimal policy among $\Pi$.
Then  
\$
\cL(\hat\pi;\cC,\Pi) \leq 2 R(\pi^*)
\$ 
holds on the event 
\#\label{eq:event}
\cE \defn \big\{ |\hat{Q}(\pi) - Q(\pi ) |\leq R(\pi),~\forall \pi\in\Pi   \big\}.
\# 
\end{prop}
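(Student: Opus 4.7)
The plan is to prove this directly on the event $\cE$ by a standard pessimism-style three-line chain that exploits (a) the two-sided concentration at $\pi^*$, (b) the defining optimality of $\hat\pi$ as the maximizer of the penalized estimator, and (c) the one-sided lower-confidence-bound property at $\hat\pi$. The key observation is that on $\cE$, the quantity $\hat Q(\pi) - R(\pi)$ is a valid lower bound for $Q(\pi)$ uniformly over $\pi\in\Pi$, since $\hat Q(\pi) - R(\pi) \le \hat Q(\pi) - |\hat Q(\pi) - Q(\pi)| \le Q(\pi)$.

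Working on $\cE$, I would write out the suboptimality and insert $\pm\bigl(\hat Q(\hat\pi) - R(\hat\pi)\bigr)$ and $\pm\bigl(\hat Q(\pi^*) - R(\pi^*)\bigr)$ in the natural way. Concretely, start from
\$
Q(\pi^*) - Q(\hat\pi) \;\le\; Q(\pi^*) - \bigl(\hat Q(\hat\pi) - R(\hat\pi)\bigr),
\$
which uses the LCB property at $\hat\pi$ (applying $\cE$ in its lower-bound direction, so that $R(\hat\pi)$ gets absorbed and never appears with a bad sign). Then use the defining optimality of $\hat\pi$, namely $\hat Q(\hat\pi) - R(\hat\pi) \ge \hat Q(\pi^*) - R(\pi^*)$, to replace $\hat\pi$ by $\pi^*$ inside the penalized estimator, yielding
\$
Q(\pi^*) - Q(\hat\pi) \;\le\; Q(\pi^*) - \hat Q(\pi^*) + R(\pi^*).
\$
Finally, apply $\cE$ once more at $\pi^*$ in its upper-bound direction, giving $Q(\pi^*) - \hat Q(\pi^*) \le R(\pi^*)$, and the two $R(\pi^*)$ terms combine to the claimed factor of $2$.

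There is no real obstacle here, since the argument is a clean adaptation of the pessimism template (cf.\ \cite{jin2021pessimism}); the only subtlety worth underlining in the write-up is that the penalty $R(\hat\pi)$ of the \emph{learned} policy cancels rather than accumulates, which is precisely why the final bound depends only on $R(\pi^*)$ and not on $\sup_{\pi\in\Pi} R(\pi)$. This encapsulates the ``oracle'' feature advertised in Section~\ref{sec:intro}: one-sided concentration is needed only at the single reference policy $\pi^*$, with no uniform-overlap requirement elsewhere. All remaining work in the paper will then be to exhibit an $R(\pi)$ such that $\cE$ holds with high probability and such that $R(\pi^*)$ is small when $\pi^*$ has good overlap.
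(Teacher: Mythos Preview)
Your proof is correct and follows essentially the same three-step argument as the paper's own proof: apply $\cE$ at $\hat\pi$ to get $Q(\hat\pi)\ge \hat Q(\hat\pi)-R(\hat\pi)$, invoke the optimality of $\hat\pi$ to pass to $\hat Q(\pi^*)-R(\pi^*)$, and apply $\cE$ at $\pi^*$ to bound $Q(\pi^*)-\hat Q(\pi^*)\le R(\pi^*)$. The only cosmetic difference is that the paper lower-bounds $Q(\hat\pi)$ directly while you upper-bound the suboptimality gap, but the chain of inequalities is identical.
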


\begin{proof}[Proof of Proposition~\ref{lem:pess}]
The uniform concentration on $\cE$ ensures
$
Q(\pi^*) \leq \hat{Q}(\pi^*) + R(\pi^*)
$
and 
$
Q(\hat\pi) \geq \hat{Q}(\hat\pi) - R(\hat\pi)
$.
On the other hand,  the optimality of $\hat\pi$ implies
$
 \hat{Q}(\hat\pi) - R(\hat\pi) \geq \hat{Q}(\pi^*) - R(\pi^*).
$
Collectively,
\$
Q(\hat\pi) \geq \hat{Q}(\pi^*) - R(\pi^*) \geq  {Q}(\pi^*) - 2R(\pi^*).
\$
By the definition of suboptimality, we conclude the proof of Proposition~\ref{lem:pess}.
\end{proof}

A crucial consequence of Proposition~\ref{lem:pess} is that 
the suboptimality of $\hat\pi$ in~\eqref{eq:def_pess} only 
depends on the regularization term (roughly, the estimation uncertainty) 
of the \emph{optimal} policy. 
This result shows the particular role of pessimism 
in eliminating the most challenging term (iii) in~\eqref{eq:subopt_decomp}. 
We will see in Sections~\ref{sec:fix} and~\ref{sec:adaptive} 
that with a proper construction, $R(\pi^*)$ 
is small as long as $\pi^*$ has sufficiently large propensities, 
and thus we achieve efficient policy learning under a 
significantly weaker condition than the uniform overlap.  


Both the learning performance guarantee and the uniform concentration event $\cE$
in our approach significantly differ from the greedy algorithms in the literature. For the greedy algorithm, the theoretical guarantee 
often relies on the fact that $\cL(\hat\pi;\cC,\Pi) \leq 2 R$ on the event 
\$ 
\cE_\greedy:= \big\{  |\hat{Q}(\pi) - Q(\pi ) |\leq R ,~\forall \pi\in\Pi \big\},
\$
where $R\in \RR$ is a constant that upper bounds the worst-case estimation error 
but does not appear in the optimization problem.  
In contrast, we seek a data-dependent bound $R(\pi)$ for every policy $\pi\in\Pi$. 
Incorporating this piece of information into our optimization problem 
leads to an improved data-dependent suboptimality upper bound. 

Proper construction of   $R(\pi)$ is key to efficient policy learning bounds as in Proposition~\ref{lem:pess}: we need it to tightly reflect the estimation quality of $\hat{Q}(\pi)$ for each individual policy $\pi\in\Pi$. In the rest of this paper, our efforts are devoted to 
constructing the regularization term and proving the uniform concentration 
result in the form of~\eqref{eq:event}. 
  The key technical milestone is a high-probability upper bound on the self-normalized-type quantity 
  \$ 
  \sup_{\pi\in\Pi}\frac{|\frac{1}{T}\sum_{t=1}^T \hat\Gamma_t(\pi)-Q(\pi)|}{V(\pi)} \leq \beta
  \$ 
  for both i.i.d.~and adaptively collected data without overlap conditions, where $V(\pi)$ is an empirical proxy of the estimation error that does not rely on unknown quantities.


\subsection{Construction of the estimator and the regularizer}
\label{subsec:generic_algo}

The key contribution of PPL is to instantiate the generic idea in Section~\ref{subsec:pess_idea} 
into a concrete algorithm
by constructing the estimator $\hat{Q}(\pi)$ and 
the regularizer $R(\pi)$ for all $\pi\in\Pi$. 
We begin with a standard AIPW-type~\citep{robins1994estimation} estimator for the policy value: 
\#\label{eq:def_hatQ_generic}
\hat{Q}(\pi) = \frac{1}{T}\sum_{t=1}^T \hat\Gamma_t(\pi),\quad 
\hgamma_t(\pi) = \hat\mu_t(X_t,\pi(X_t)) + \frac{\ind\{A_t=\pi(X_t)\}}{e_t(X_t,\pi(X_t)\given \cH_t)} \cdot\big(Y_t-\hat\mu_t(X_t,\pi(X_t))\big),
\# 
and we set $\hat{Q}(\pi)=-\infty$ if 
$e_t(X_t,\pi(X_t)\given \cH_t)=0$ for any $t\in[T]$. 
Here $\hat\mu_t(\cdot,\cdot)\colon \cX\times\cA\to [0,1]$ is 
an estimator for $\mu(x,a)$.
With batched data,  we assume 
for simplicity that 
$\hat\mu_t \equiv \hat\mu (\cdot,\cdot)\colon \cX\times\cA\to [0,1]$ is an estimator
that is independent of the data. 
This is only for the convenience of exposition; in practice, this 
can be achieved by a cross-fitted
version of our algorithm~\citep{schick1986asymptotically,chernozhukov2018double} 
described in Section~A of the supplementary material with the same theoretical guarantee.
With adaptively collected data, 
we allow $\hat\mu_t(x,a)$ to be  constructed using 
$\{X_i,A_i,Y_i\}_{i=1}^{t-1}$. 
The estimator~\eqref{eq:def_hatQ_generic} is standard in the 
policy learning literature. 
Taking $\hat\mu(\cdot,\cdot)\equiv 0$ 
reduces~\eqref{eq:def_hatQ_generic} to the 
IPW estimator  
used in~\cite{zhao2012estimating,kitagawa2018should,swaminathan2015batch,bibaut2021risk}. 
For general $\hat{\mu}_t(\cdot,\cdot)$, 
it 
is similar to the AIPW-type estimator 
in~\cite{athey2021policy,zhou2022offline,zhan2021policy}. 


The second  ingredient 
is the regularizer $R(\pi)$. 
PPL takes inspirations from empirical Bernstein's inequality, in that $R(\pi)$ shall capture the standard deviation of $\hat{Q}(\pi)$. 
However, naively taking $R(\pi)$ to be the empirical standard deviation 
as in the classical empirical Bernstein's inequalities~\citep{bartlett2005local,bartlett2006convexity,koltchinskii2006local}
does not lead to the desired bound.
There remains significant challenges in establishing the inequalities in our context, 
including (i) unbounded $\hat\Gamma_t(\pi)$, 
(ii) dependent data, and (iii) uniform convergence.

We construct 
$R(\pi) = \beta\cdot V(\pi)$ for 
a scaling constant $\beta>0$ to be decided later (with additional log-terms for adaptive data), and  
\#\label{eq:def_V_generic}
V(\pi) \defn  
\max\big\{\vs(\pi), \vp(\pi), \vh(\pi)\big\} 
\# 
with the three data-dependent components
\#\label{eq:V_terms}
\mbox{(sample deviation)} \qquad & \vs(\pi) = \frac{1}{T}  
\bigg( \sum^T_{t=1} \frac{\ind \{A_t = \pi(X_t)\}}{e(X_t,\pi(X_t)\given \cH_t)^2}\bigg)^{1/2}, \notag  \\
\mbox{(population deviation)} \qquad & \vp(\pi) = \frac{1}{T}
\bigg( \sum^T_{t=1} \frac{1}{e(X_t,\pi(X_t)\given \cH_t)}\bigg)^{1/2},  \\
  \mbox{(higher-order deviation)} \qquad & \vh(\pi) = \frac{1}{T}
\bigg( \sum^T_{t=1} \frac{1}{e(X_t,\pi(X_t)\given \cH_t)^3}\bigg)^{1/4}. \notag
\#
As we will see later, the scaling factor $\beta>0$ depends on the 
complexity of the policy class $\Pi$, and 
the choice of $\beta$ differs slightly for batched and adaptively collected data. 

Let us discuss the intuitions in these three terms. 
The idea is to leverage knowledge of the behavior policy 
to estimate the 
fluctuation of $\hat{Q}(\pi)$ around $Q(\pi)$.
The uncertainty term $V(\pi)$ can be 
interpreted as a proxy  
for the standard deviation of $\hat{Q}(\pi)$. 
To be more specific, 
$\vs(\pi)$ roughly upper bounds the 
uncertainty of $\hat{Q}(\pi)$ conditional on all but $\{Y_t\}_{t=1}^T$:
\#\label{eq:sn_ineq}
\frac{1}{T^2}\sum^T_{t=1}  \frac{\ind\{A_t = \pi(X_t)\}}{e_t(X_t,\pi(X_t)\given \cH_t)^2} \cdot
\Var(Y_t\given X_t,A_t) \le \vs(\pi)^2.
\#
Similarly,  
$\vp(\pi)$ approximately upper bounds the 
uncertainty conditional on $\{X_t\}_{t=1}^T$:
\$
\frac{1}{T^2}\sum^T_{t=1}  \frac{\Var(Y_t\given X_t,\pi(X_t))}{e_t(X_t,\pi(X_t)\given \cH_t)}
  \le \vp(\pi)^2.
\$
Finally, $\vh(\pi)$ upper bounds the 
standard deviation of $\vs(\pi)$
conditional on $\{X_t\}_{t=1}^T$,
and is a higher-order term
dominated by $\vp(\pi)$ in general, in the sense that 
$\vh(\pi) \le \vp(\pi)$ if $e(x,a) \ge 1/T$ for 
all $(x,a) \in \cX \times \cA$.  


  In Sections~4 and~5, we will demonstrate that  $R(\pi)$ is valid uncertainty quantifier obeying~(4).  
  This technical route diverges significantly from existing works in the policy learning 
  literature~\citep{zhao2012estimating,zhou2022offline,zhan2021policy},
   which rely heavily on lower bounded propensities -- hence upper bounded $\hat\Gamma_t(\pi)$ -- to establish uniform concentration of $\hat{Q}(\pi)$. 
   Without imposing lower bounds on $e_t(x,a\given \cH_t)$, it is unclear whether, and at what rate, $\hat{Q}(\pi)$ would converge to $Q(\pi)$. 
   In contrast, our approach seeks for a data-dependent characterization of the estimation quality of $\hat{Q}(\pi)$. 
   This strategy not only facilitates analysis in  the absence of overlap 
   but also enables instantiating the pessimism principle for IPW-type policy value estimators.
  
                Our proof leverages self-normalization inequalities to 
                derive the deviation bound of the martingale $\hat\Gamma_t(\pi)-\mu(X_t,A_t)$.
                Importantly, the unbounded $\hat\Gamma_t(\pi)$ and dependent data structure 
                requires going beyond a straightforward application of existing self-normalized 
                concentration inequalities. This is because many exponential-tail self-normalized 
                inequalities (see, e.g.,~\cite[Section 9.3]{pena2009self}) require constant bounds 
                on $|\hat\Gamma_t(\pi)|$ or summation of the second moments of $|\hat\Gamma_t(\pi)|$,
                which is not infeasible in the absence of uniform overlap. 
                To address these challenges, 
                we employ a suite of techniques to obtain a specialized class of martingales, 
                which then enables the application of  a particular self-normalized concentration inequality 
                that does not require any bounded constraints~\citep{de2004self}. Further details on these techniques are provided in our discussion of the theoretical results.

Finally, we highlight PPL as the first-of-the-kind instantiation of the general pessimism principle in optimizing AIPW-type estimates over a policy class. While the pessimism principle has led to various algorithms in value-based offline RL~\citep{jin2021pessimism,rashidinejad2021bridging,xie2021bellman}, these results rely on well-specified function classes for the value function ($\mu(x,a)$ in our context). Instead, by leveraging knowledge of the sampling policies, we enable a design-based, model-free implementation. As such, the analysis techniques and theoretical guarantees are quite different.

\section{Learning from a fixed behavior policy}
\label{sec:fix}


In this section, we consider the i.i.d.~setting 
where the offline data are collected by a fixed behavior policy, i.e., 
$e_t(x,a\given H_t) \equiv e(x,a)$ 
for a \emph{known} policy $e(\cdot,\cdot)$ and all $(x,a)\in \cX\times\cA$, $t \in [T]$. 
We begin in Section~\ref{sec:iid_upper_bnd} with an upper
bound on the suboptimality of our pessimistic learning
algorithm, and provide a matching lower bound
in Section~\ref{sec:iid_lower_bnd}.


\subsection{Suboptimality upper bound}
\label{sec:iid_upper_bnd}
 
 We set the scaling constant $\beta$ to be
approximately $\sqrt{\ndim(\Pi)}$ in order to achieve efficient
learning. The suboptimality of the resulting algorithm
is characterized by Theorem~\ref{thm:fix_upper}, 
whose proof is in Appendix~\ref{subsec:sketch_iid}. 

\begin{theorem}\label{thm:fix_upper}
Fix  $\delta\in(0,1)$. 
For any policy class $\Pi$, 
suppose we set $\hat{Q}(\pi)$ as in~\eqref{eq:def_hatQ_generic}, $V(\pi)$ as in~\eqref{eq:def_V_generic}, and 
\$R(\pi)=\beta\cdot V(\pi), \quad \mbox{ for }
\beta \geq 10\sqrt{2(\ndim(\Pi) \log(TK^2) + \log(16/\delta))}.
\$ 
Then with probability at least $1-\delta$, it holds that 
\$
\big| \hat{Q}(\pi) - Q(\pi) \big| \leq \beta\cdot V(\pi) \quad \textrm{for all }\pi\in \Pi.
\$ 
Furthermore, with such a choice, we have 
\#\label{eq:fix_upper}
\cL(\hat\pi;\cC,\Pi) \leq  \min\big\{2 \beta \cdot V(\pi^*),1\big\} 
\# 
with probability at least $1-\delta$, 
where $\hat\pi$ is defined in~\eqref{eq:def_pess} and $\pi^* = \argmax_{\pi\in\Pi} Q(\pi;\cC)$. 
\end{theorem}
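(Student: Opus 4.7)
The plan is to first establish the uniform concentration statement $|\hat Q(\pi)-Q(\pi)|\le\beta V(\pi)$ for all $\pi\in\Pi$, since the suboptimality bound then follows from Proposition~\ref{lem:pess} (applied with $R(\pi)=\beta V(\pi)$ on the event $\cE$ of~\eqref{eq:event}), together with the trivial observation that $Y_t\in[0,1]$ forces $\cL(\hat\pi;\cC,\Pi)\le 1$ deterministically, yielding the $\min$. So the entire work is in proving the uniform concentration, and the logical order is therefore: single-policy concentration $\to$ uniform upgrade over $\Pi$ $\to$ invoke the pessimism lemma.

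For a fixed $\pi\in\Pi$, I would decompose
\[
\hat Q(\pi)-Q(\pi)=\tfrac{1}{T}\sum_{t=1}^T\xi_t(\pi)+\tfrac{1}{T}\sum_{t=1}^T\eta_t(\pi),
\]
where $\xi_t(\pi)=\hgamma_t(\pi)-\mu(X_t,\pi(X_t))$ is the IPW-residual term (mean zero given $X_t$, but unbounded on the scale $1/e(X_t,\pi(X_t))$) and $\eta_t(\pi)=\mu(X_t,\pi(X_t))-Q(\pi)$ is bounded i.i.d.\ centered. The $\eta$-sum yields a Hoeffding/Bernstein bound of order $T^{-1/2}$ that is dominated by $V_p(\pi)$ (since $e\le 1$ implies $V_p(\pi)\ge 1/\sqrt{T}$). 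The real work is on the $\xi$-sum: conditioning on $\{X_t\}_{t=1}^T$, the randomness comes from the Bernoulli $\ind\{A_t=\pi(X_t)\}$ and the noise $\eps_t$, and the conditional second moment is exactly $\sum_t \ind\{A_t=\pi(X_t)\}/e(X_t,\pi(X_t))^2\le T^2V_s(\pi)^2$ on a high-probability event; its population version is bounded by $T^2V_p(\pi)^2$, while $V_h(\pi)$ controls the concentration of $V_s$ around $V_p$. Applying a self-normalized (empirical Bernstein-type) inequality\textemdash the crux being that the normalizer $V_s(\pi)$ absorbs the unboundedness of $\xi_t$\textemdash delivers the single-policy bound $|\hat Q(\pi)-Q(\pi)|\lesssim V(\pi)\sqrt{\log(1/\delta)}$.

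To upgrade to a uniform bound over $\Pi$, the idea is to reduce to a finite effective policy class by conditioning on the realized data. Since both $\hgamma_t(\pi)$ and the three quantities $V_s,V_p,V_h$ depend on $\pi$ only through the vector $(\ind\{\pi(X_t)=A_t\},\,\pi(X_t))_{t=1}^T\in(\{0,1\}\times\cA)^T$, the number of distinct behaviors of $\pi$ on the sample is at most the Natarajan growth function at size $T$, which for a class of Natarajan dimension $d=\ndim(\Pi)$ is at most $(TK^2)^d$ by the Natarajan/Haussler-Long lemma. A union bound of the single-policy inequality over this finite effective class pays a $\log$-factor of $d\log(TK^2)+\log(1/\delta)$, producing exactly the stated $\beta\ge 10\sqrt{2(\ndim(\Pi)\log(TK^2)+\log(16/\delta))}$ (the constants $10$ and $16$ absorb the Bernstein constants and the union of the several high-probability events controlling $V_s$ vs.\ $V_p$, plus the $\eta$-part). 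With the uniform bound in hand, Proposition~\ref{lem:pess} gives $\cL(\hat\pi;\cC,\Pi)\le 2\beta V(\pi^*)$, and intersecting with the deterministic bound $\cL\le 1$ yields~\eqref{eq:fix_upper}.

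The main obstacle is the self-normalized concentration of the $\xi$-sum: standard empirical Bernstein requires bounded summands, whereas $|\xi_t(\pi)|$ can be as large as $1/e(X_t,\pi(X_t))$, which is not uniformly bounded under our assumption set. Two technical devices are needed. First, one conditions on $\{X_t\}_{t=1}^T$ so that the envelope $\max_t 1/e(X_t,\pi(X_t))$ becomes data-dependent and can be paid for by the $V_h$ term through a peeling/Bennett argument. Second, one must handle the discrepancy between the empirical variance $V_s(\pi)^2$ (used in the regularizer) and the conditional variance, which is the role of $V_h(\pi)$ as a higher-order deviation bound; showing $V_s(\pi)\lesssim V_p(\pi)+V_h(\pi)\cdot\sqrt{\log(1/\delta)}$ uniformly over $\pi$ requires another Bernstein application and is responsible for the three-way $\max$ defining $V(\pi)$. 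Once these two pieces are in place, assembling the proof is bookkeeping.
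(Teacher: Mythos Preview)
Your high-level architecture is correct and matches the paper: decompose $\hat Q(\pi)-Q(\pi)$ into the IPW-residual part $\xi_t(\pi)=\hgamma_t(\pi)-\mu(X_t,\pi(X_t))$ and the bounded part $\eta_t(\pi)=\mu(X_t,\pi(X_t))-Q(\pi)$, reduce the uniform bound to a finite effective class via the Natarajan lemma (size $\le (TK^2)^{\ndim(\Pi)}$), and then invoke Proposition~\ref{lem:pess}. The treatment of the $\eta$-part is also essentially what the paper does.

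The gap is in your plan for the $\xi$-part. You propose to condition on $\{X_t\}$ and apply a Bernstein/Bennett inequality, ``paying for'' the envelope $M=\max_t 1/e(X_t,\pi(X_t))$ with the $V_h$ term via peeling. This does not work in the regime the theorem covers, where no lower bound on $e$ is assumed. Concretely, take all $e(X_t,\pi(X_t))=1$ except one index $t_0$ with $e(X_{t_0},\pi(X_{t_0}))=\epsilon$. Then $V_h(\pi)\asymp (T\epsilon^{3/4})^{-1}$ while the Bernstein envelope contribution is $M\log(1/\delta)/T=(T\epsilon)^{-1}\log(1/\delta)$, so the ratio $M/(T V_h)\asymp \epsilon^{-1/4}$ is unbounded. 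No peeling over levels of $1/e$ rescues this, because a single extreme $e_t$ already breaks the comparison. Likewise, a direct self-normalized inequality of de la Pe\~na type needs $\EE[e^{\lambda A-\lambda^2 B^2/2}]\le 1$, which requires conditional symmetry or a sub-Gaussian envelope that $\xi_t$ does not have: you cannot condition on $A_t$ (which is where the large values come from) without destroying $\EE[\xi_t\mid X_t]=0$.

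The paper's key device, which your plan is missing, is \emph{symmetrization} with independent copies $(A_t',Y_t')$ of $(A_t,Y_t)$ given $X_t$. Two Chebyshev bounds (not Bernstein) show that, conditional on the original data, the copies satisfy $|\tfrac{1}{T}\sum\hgamma_t'(\pi)-\mu(X_t,\pi(X_t))|<2V_p(\pi)$ and $V_s'(\pi)<2\max\{V_p(\pi),V_h(\pi)\}$ with probability at least $1/2$; this is the actual role of $V_p$ and $V_h$. On that event one replaces $V(\pi)$ by the symmetric quantity $\{\sum_t(\ind\{A_t=\pi(X_t)\}+\ind\{A_t'=\pi(X_t)\})^2/e^2\}^{1/2}$ and, by exchangeability of $(A_t,Y_t)$ and $(A_t',Y_t')$, inserts i.i.d.\ Rademacher signs $\eps_t$. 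Now only $\eps_t$ is random; the summands $\eps_t(\hat\gamma_t-\hat\gamma_t')$ are bounded by $2(\ind+\ind')/e$ \emph{given all data}, so Hoeffding applies with exactly the right normalizer, and the union over $\Pi(x)$ finishes the job. In short, the self-normalization you want is manufactured by symmetrization rather than obtained from a direct empirical Bernstein bound; once you add that step, the remainder of your outline goes through.
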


The constant term in the above upper bound is trivial since
$0 \le Q(\pi;\cC) \le 1$ for all $\pi \in \Pi$ by assumption;
in the term $2\beta \cdot V(\pi^*)$, the regularizer $V(\pi^*)$ is roughly 
the estimation error for $\hat{Q}(\pi^*)$, and 
the choice of $\beta$ arises from uniform 
concentration over the policy class $\Pi$.  
Theorem~\ref{thm:fix_upper} implies that
the suboptimality gap of pessimistic learning
is small when $V(\pi^*)$ is small --- that is, 
when the value of $Q(\pi^*)$ can be well estimated.
This is the case, for example, when there
is enough overlap for the optimal policy $\pi^*$. 

Theorem~\ref{thm:fix_upper} also shows an 
``oracle'' property~\citep{donoho1994ideal,fan2001variable,zou2006adaptive} 
of PPL, in the sense that it 
automatically ``adapts'' to the overlap condition of $\pi^*$, even though $\pi^*$ is 
not known a priori~\citep{jin2021pessimism}. 

\begin{remark}\normalfont 
Theorem~\ref{thm:fix_upper} holds without any assumptions on the behavior policy. 
Instead, the burden of assumptions is 
``shifted'' to the upper bound, which honestly reflects 
how well the offline data collecting process is for enabling learning the optimal policy. 
\end{remark}

To better understand the upper bounds in Theorem~\ref{thm:fix_upper}, 
we consider in the following corollary a specific setting where
there is sufficient overlap for the \emph{optimal} policy, 
but not necessarily for all $\pi\in\Pi$.  

\begin{corollary}\label{cor:fixed}
Fix any $\delta\in(0,\exp(-1))$. 
Assume there exists some $C_*>0$ (allowed to depend on $T$)
such that $e(x,\pi^*(x))\geq C_*$  for $\PP_X$-almost all $x\in \cX$. 
Suppose we choose $\beta = c \cdot \sqrt{  \ndim(\Pi) \log(TK^2) + \log(16/\delta) }$ 
for some $c\geq 10\sqrt{2}$, 
and suppose 
$T\geq 3$. 
Then with probability at least $1-2\delta$, 
\#\label{eq:cor_fix_upper}
\cL(\hat\pi;\cC,\Pi) \leq   
\min\bigg\{ 2c  \cdot \sqrt{ \frac{ \ndim(\Pi) \log(TK^2)}{C_* T}   }  \cdot  \log(2/\delta)^{3/2}
,1\bigg\},
\# 
where $\hat\pi$ is defined in~\eqref{eq:def_pess} and $\pi^* = \argmax_{\pi\in\Pi} Q(\pi;\cC)$. 
\end{corollary}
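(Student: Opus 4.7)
The plan is to invoke Theorem~\ref{thm:fix_upper} to get $\cL(\hat\pi;\cC,\Pi) \leq \min\{2\beta V(\pi^*), 1\}$ on an event $\cE_1$ with $\PP(\cE_1) \geq 1-\delta$, and then bound $V(\pi^*) = \max\{\vs(\pi^*), \vp(\pi^*), \vh(\pi^*)\}$ under the single-policy overlap assumption $e(x,\pi^*(x)) \geq C_*$ almost surely.

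Two of the three components are deterministic. Since $1/e(X_t,\pi^*(X_t))\leq 1/C_*$ a.s., we obtain $\vp(\pi^*) \leq 1/\sqrt{T C_*}$ and $\vh(\pi^*) \leq 1/(T^{3/4} C_*^{3/4})$; note that whenever $T C_* \geq 1$ we have $\vh(\pi^*) \leq \vp(\pi^*)$, so these two components are already of the desired order.

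The main probabilistic step is controlling $\vs(\pi^*)$. I would condition on $\{X_t\}_{t=1}^T$ and apply a one-sided Bernstein (or Freedman) inequality to the independent nonnegative summands $Z_t = \ind\{A_t = \pi^*(X_t)\}/e(X_t,\pi^*(X_t))^2$. Given $X_t$, the action $A_t$ is a single coin flip, giving the moment bounds
$$\EE[Z_t \given X_t] = \frac{1}{e(X_t,\pi^*(X_t))}, \qquad \Var(Z_t\given X_t) \leq \frac{1}{e(X_t,\pi^*(X_t))^3}, \qquad |Z_t| \leq \frac{1}{C_*^2}.$$
Summing, $\sum_t \EE[Z_t\given X_t]/T^2 = \vp(\pi^*)^2$ and $\sum_t \Var(Z_t\given X_t)/T^4 \leq \vh(\pi^*)^4$. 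Bernstein then yields, with probability at least $1-\delta$ (call the event $\cE_2$),
$$\vs(\pi^*)^2 \leq \vp(\pi^*)^2 \;+\; \sqrt{2\log(1/\delta)}\,\vh(\pi^*)^2 \;+\; \frac{2\log(1/\delta)}{3 T^2 C_*^2}.$$
Substituting the deterministic bounds on $\vp, \vh$, taking square roots, and using the regime $T C_* \geq 1$ (otherwise the stated bound exceeds $1$ and the trivial bound $\cL(\hat\pi)\leq 1$ applies), I obtain $\vs(\pi^*) \lesssim \sqrt{\log(1/\delta)/(T C_*)}$, hence $V(\pi^*) \lesssim \sqrt{\log(2/\delta)/(T C_*)}$ on $\cE_1 \cap \cE_2$.

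Finally, a union bound gives $\PP(\cE_1\cap\cE_2) \geq 1-2\delta$. On this event,
$$2\beta V(\pi^*) \,\lesssim\, c\sqrt{\ndim(\Pi)\log(TK^2) + \log(16/\delta)} \cdot \sqrt{\log(2/\delta)/(T C_*)},$$
and expanding $\sqrt{a+b}\leq \sqrt{a}+\sqrt{b}$ and absorbing the residual $\sqrt{\log(16/\delta)}$ term into the main term via $\ndim(\Pi)\log(TK^2)\geq 1$ yields the stated rate $2c\sqrt{\ndim(\Pi)\log(TK^2)/(C_* T)}\cdot \log(2/\delta)^{3/2}$. The main technical obstacle is really only bookkeeping: verifying that the Bernstein remainder and the $\log(16/\delta)$ contribution from $\beta$ combine cleanly into a single $\log(2/\delta)^{3/2}$ factor rather than introducing additional additive terms; the assumption $\delta<e^{-1}$ ensures $\log(2/\delta)\geq 1$ so that sub-logarithmic powers may be freely upper bounded by $\log(2/\delta)^{3/2}$.
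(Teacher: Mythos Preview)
Your proposal is correct and follows essentially the same route as the paper: invoke Theorem~\ref{thm:fix_upper}, bound $\vp(\pi^*)$ and $\vh(\pi^*)$ deterministically from $e(x,\pi^*(x))\ge C_*$, control $\vs(\pi^*)$ via Bernstein's inequality on the summands $\ind\{A_t=\pi^*(X_t)\}/e(X_t,\pi^*(X_t))^2$, and take a union bound. The only cosmetic difference is that the paper applies Bernstein unconditionally to these i.i.d.\ terms (using $\Var(D_t)\le C_*^{-3}$ and $|D_t|\le C_*^{-2}$), while you condition on $\{X_t\}_{t=1}^T$ first; both routes land on the same estimate $\vs(\pi^*)\lesssim \sqrt{\log(1/\delta)/(C_*T)}$ in the regime $C_*T\ge 1$, and your observation that the bound is trivial when $C_*T<1$ matches the paper's handling of that case.
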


The proof is provided in Appendix~\ref{app:subsec_cor_fixed}. 
Corollary~\ref{cor:fixed} implies that efficient policy learning with 
a rate of $T^{-1/2}$ is feasible 
as long as the optimal policy is explored well. 
It is helpful to compare Corollary~\ref{cor:fixed} 
to state-of-the-art results in the literature to gain more intutitions.~\cite{kitagawa2018should}
show that the greedy policy obeys $\cL(\hat\pi_{\text{greedy}};\cC,\Pi) \lesssim \sqrt{\ndim(\Pi) /(\underline{C}T)}$ under the condition that $\bar{C}>0$, where $\lesssim$ hides logarithmic terms, and $\underline{C}=\inf_{x,a}e(x,a)$. 
Compared with this result, 
PPL broadens the scenarios where efficient policy learning is feasible, 
as it could happen that $\bar{C}=0$ but $C_*>0$.  
In addition, PPL achieves sharper statistical rates as it holds
deterministically that $C_* \geq \bar{C}$.

Finally, we extend Theorem~\ref{thm:fix_upper} and show 
that the performance of PPL is comparable to any reference policy 
under consideration. Its proof slightly modifies that of Theorem~\ref{thm:fix_upper}, 
and can be found in Appendix~\ref{app:cor_general}. 

\begin{corollary} 
\label{cor:general}
Under the same setup as in Theorem~\ref{thm:fix_upper}, 
with probability at least $1-\delta$, 
\$
\cL(\hat\pi;\cC,\Pi) \leq \inf_{\pi\in\Pi}\big\{ \cL(\pi;\cC,\Pi) + 2\beta\cdot V(\pi) \big\}.
\$
\end{corollary}

    Corollary~\ref{cor:general} can be viewed as a ``softened'' version 
    of Theorem~\ref{thm:fix_upper}, which further broadens the possibility of efficient policy learning: 
    even if the optimal policy does not have sufficient overlap so that $V(\pi^*)$ is large, 
    the suboptimality of PPL would still be small  
    as long as there is some near-optimal policy with good overlap. 
    This is an additional ``oracle'' property that PPL is comparable to any well-covered policy in the class. 

\begin{remark}[Extension to unknown propensities]\normalfont
     We have assumed knowledge of the true propensities $e_t(\cdot,\cdot)$. 
            For batched data, this is a standard setting considered in the literature~\citep{kitagawa2018should,zhao2012estimating}. 
            For future work, it would be interesting to extend PPL to  
            observational studies where the fixed behavior policy $e(\cdot,\cdot)$ has 
            to be estimated from data~\citep{athey2021policy}. 
            We expect this to be a  nontrivial task, especially 
            when it is estimated via nonparametric regression or machine learning algorithms 
            that do not rely on strong modeling assumptions~\citep{chernozhukov2018double}. We defer a detailed discussion on the technical challenges to Appendix~\ref{app:subsec_unknown_propensity}. 
\end{remark}

\subsection{Pessimistic policy learning is minimax optimal}
\label{sec:iid_lower_bnd}
We now show that 
PPL
is the ``best effort'' given the fixed behavior policy. 
To be specific, given an action set $\cA$,
a policy class $\Pi$,
a (possibly $T$-dependent) constant $C_*>0$,
and a sample size $T\in \NN_+$,
we let $\cR(C_*,T,\cA,\Pi)$ denote the set of instances we consider
--- each element of $\cR(C_*, T, \cA, \Pi)$ corresponds to 
a contextual bandit model $\cC$ defined on $\cA$, 
a behavior policy $e(\cdot,\cdot)$ obeying
\#\label{eq:lower_bd}
 \inf_{x\in\cX} e(x,\pi^* (x)) \geq C_*,
\#
which together generates the offline data $\cD = \{(X_t,Y_t,A_t)\}_{t=1}^T$.
Here $\pi^* =\pi^*(\cC,\Pi)$ is the optimal policy among $\Pi$ 
for the contextual bandit model $\cC$ defined in~\eqref{eq:def_pi*}.    

The next theorem establishes the minimax lower bound 
for policy learning in $\Pi$ from any behavior policy 
obeying~\eqref{eq:lower_bd}. 

\begin{theorem}\label{thm:lower}
For any action set $\cA$, sample size $T \in \NN_+$, any policy class 
$\Pi$ and any $C_*>0$ with 
$ \frac{\ndim(\Pi)}{ C_* T} \leq 1.5$, one has
\#
\inf_{\hat\pi} \sup_{(\cC,e)\in \cR(C_*,T,\cA,\Pi)} \EE_{\cC,e} \big[  \cL(\hat\pi\,;  \cC,\Pi ) \big]
\geq 0.12   \sqrt{ \frac{ \ndim(\Pi)}{ C_* T} },
\#
where $\ndim(\Pi)$ is the Natarajan dimension of $\Pi$, $\hat\pi$ is any 
data-dependent policy, and $\EE_{\cC,e}$ means taking expectation  with respect to
the randomness of the offline data generated under $\cC$ and $e(\cdot,\cdot)$. 
\end{theorem}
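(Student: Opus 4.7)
My plan is to prove the lower bound via a standard Assouad-style reduction that exploits the shattered set witnessing the Natarajan dimension. First, write $d = \ndim(\Pi)$ and invoke Definition~\ref{def:nat} to fix points $x_1, \dots, x_d \in \cX$ and maps $f_1, f_2$ with $f_1(x_j) \neq f_2(x_j)$ such that for every $S \subseteq [d]$ some $\pi_S \in \Pi$ realizes $\pi_S(x_j) = f_1(x_j)$ on $S$ and $\pi_S(x_j) = f_2(x_j)$ off $S$. For each $\theta \in \{-1,+1\}^d$ I define a contextual bandit $\cC_\theta$ with $\PP_X$ uniform on $\{x_1,\dots,x_d\}$ and Bernoulli rewards whose means satisfy $\mu_\theta(x_j, f_1(x_j)) = 1/2 + \theta_j \eps$, $\mu_\theta(x_j, f_2(x_j)) = 1/2 - \theta_j \eps$, and $\mu_\theta(x_j, a) = 0$ for any other action. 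Then $\pi^*(\cC_\theta,\Pi) = \pi_{S_\theta}$ with $S_\theta = \{j:\theta_j = +1\}$ and, assigning any $\hat\pi$ a bit vector $\hat\theta(\hat\pi)$ that agrees with $\hat\pi$ on $\{f_1(x_j),f_2(x_j)\}$-valued predictions (and is set adversarially otherwise), one checks $\cL(\hat\pi;\cC_\theta,\Pi) \geq (2\eps/d)\cdot\mathrm{Ham}(\hat\theta,\theta)$ provided $\eps \leq 1/6$ so the ``other-action'' cost $1/2+\eps$ exceeds $2\eps$.

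Next, I pick a single $\theta$-independent behavior policy $e$ with $e(x_j,f_1(x_j)) = e(x_j,f_2(x_j)) = C_*$ and the remaining mass $1-2C_*$ placed on an auxiliary arm (valid for $|\cA|\geq 3$; the $|\cA|=2$ boundary pins $C_* = 1/2$, which still matches the target rate up to constants). A common $e$ places every $(\cC_\theta, e)$ in $\cR(C_*, T, \cA, \Pi)$ and prevents the learner from reading off $\theta$ from the policy functional itself. For $\theta,\theta'$ differing only in one coordinate $j_0$, the joint law of $(X_t,A_t)$ is identical and only the reward distributions at $(x_{j_0}, f_i(x_{j_0}))$, $i=1,2$, contribute; each pair is sampled with probability $C_*/d$, and $\mathrm{KL}(\mathrm{Ber}(1/2+\eps)\,\|\,\mathrm{Ber}(1/2-\eps)) \leq C_0\eps^2$ for a universal constant $C_0$ (assuming $\eps \leq 1/4$), so tensoring gives $\mathrm{KL}(\PP_\theta^{\otimes T} \,\|\, \PP_{\theta'}^{\otimes T}) \leq 2 C_0 C_* T \eps^2/d$. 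Pinsker then yields $\mathrm{TV} \leq c_1 \eps\sqrt{C_* T/d}$.

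Applying Assouad's lemma (in Yu's neighbourhood form) gives $\inf_{\hat\pi} \sup_\theta \EE_\theta[\mathrm{Ham}(\hat\theta,\theta)] \geq (d/2)(1 - c_1\eps\sqrt{C_* T/d})$, and combining with the per-coordinate regret inequality produces $\inf_{\hat\pi} \sup_\theta \EE_\theta[\cL(\hat\pi;\cC_\theta,\Pi)] \geq \eps\bigl(1 - c_1\eps\sqrt{C_* T/d}\bigr)$. Choosing $\eps = c_2\sqrt{d/(C_* T)}$, which lies in the valid range precisely because of the hypothesis $d/(C_* T) \leq 1.5$, holds the TV factor bounded away from one and yields the advertised $\Omega\bigl(\sqrt{d/(C_* T)}\bigr)$ bound. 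The rest is arithmetic: tightening $c_1,c_2$ and the Bernoulli-KL constant to hit the stated coefficient $0.12$.

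The main obstacle, in my view, is not the Assouad machinery but two bookkeeping hazards. The first is ensuring that the hard family sits inside $\cR(C_*, T, \cA, \Pi)$ for \emph{every} $\theta$ under a \emph{common} $e$; otherwise knowledge of $e$ leaks $\theta$ and the lower bound collapses. The second is that $\hat\pi \in \Pi$ can place mass on actions outside $\{f_1(x_j),f_2(x_j)\}$, so I must argue the per-coordinate regret $\geq 2\eps$ survives under such deviations (handled by requiring $\eps$ small enough that $1/2+\eps \geq 2\eps$). Once these two points are secured, the constant $0.12$ is a matter of being slightly careful with Pinsker's inequality and the Bernoulli KL expansion.
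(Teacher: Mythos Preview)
Your proposal is correct and follows essentially the same Assouad-type hypercube reduction as the paper: a uniform context distribution on a shattered set, a $\theta$-indexed Bernoulli reward family, a common $\theta$-independent behavior policy with mass $C_*$ on each of $f_1,f_2$, and a per-coordinate KL bound of order $C_*T\eps^2/d$. The only differences are cosmetic---the paper uses an asymmetric reward family (arm $f_1$ fixed at $1/2$, arm $f_2$ at $1/2+v_i\delta$) and converts KL to TV via the Bretagnolle--Huber inequality $1-\mathrm{TV}\geq\tfrac12\exp(-\mathrm{KL})$ rather than Pinsker, the latter choice being what delivers the specific constant $0.12$; with Pinsker alone you will land on a slightly smaller absolute constant.
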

The lower bound in Theorem~\ref{thm:lower} 
matches the upper bound in Corollary~\ref{cor:fixed}
up to logarithm factors of $T$ and $K$: when $\ndim(\Pi) \ge C_*T$,
the upper and lower bound are both of the constant order;
when $\ndim(\Pi) < C_* T$, both bounds scale as $\sqrt{\ndim(\Pi)/C_* T}$
(up to logarithmic factors). Through matching upper and lower bounds, we have shown
that pessimistic policy learning is the best effort  given 
offline data from a fixed behavior policy. Theorem~\ref{thm:lower} can 
in fact be implied by~\cite[Theorem 1]{zhan2021policy},
and we provide its proof in Appendix~\ref{app:subsec_lower} 
for completeness.

\section{Learning from  adaptive behavior policies}
\label{sec:adaptive}
 

In this section, we turn our attention to pessimistic 
policy learning from adaptively collected data. Here, 
the behavior policy is allowed to change over time 
depending on previous observations. 
%
%



\subsection{Suboptimality upper bound}
The following theorem provides 
a data-dependent upper bound for the suboptimality 
of pessimistic policy learning,
and its proof can be found in Appendix~\ref{subsec:proof_upper_adapt}. 


\begin{theorem}\label{thm:adapt_upper}
  Fix any $\delta\in(0,1)$. 
  For any policy class $\Pi$, we set $\hat{Q}(\pi)$ as in~\eqref{eq:def_hatQ_generic}, 
  $V(\pi)$ as in~\eqref{eq:def_V_generic}, and let
  \$
  R(\pi) = \beta\cdot V(\pi) \sqrt{ 2\log (30T\cdot V(\pi))},
  \quad \mbox{for } \beta\geq  6 \sqrt{\ndim(\Pi)\log(TK^2) + \log(16/\delta)}.
  \$
  Then  with probability at least $1-\delta$, it holds that 
  \#\label{eq:ineq_adapt}
  \big|\hat{Q}(\pi) - Q(\pi;\cC) \big|\leq \beta\cdot V(\pi),\quad \textrm{for all }\pi\in\Pi.
  \#
  Furthermore, 
  \$
  \cL(\hat\pi;\cC,\Pi) \leq \min \big\{2\beta \cdot V(\pi^*)\sqrt{ 2\log (30T\cdot V(\pi^*))},1 \big\}
  \$
  holds with probability at least $1-\delta$, where $\hat\pi$ 
  is defined in~\eqref{eq:def_pess} and $\pi^* = \argmax_{\pi\in\Pi} Q(\pi;\cC)$. 
\end{theorem}

As before, the term $V(\pi^*)$ in the upper bound roughly 
reflects the estimation error for $\hat{Q}(\pi^*)$, and $\beta$ arises from uniform 
concentration over the policy class $\Pi$.
The implication of Theorem~\ref{thm:adapt_upper} is that the suboptimality is small as long as the value of $\pi^*$ can be well estimated, 
which generalizes the ``oracle'' property we stated for the batched case to the 
much more challenging adaptive setting. 
Again, similar to Corollary~\ref{cor:general}, 
we can derive a generalized oracle property which states that with probability at least $1-\delta$, 
$Q(\hat\pi) - Q(\pi) \leq \min\{1, 2\beta \cdot V(\pi)\sqrt{ 2\log (30T\cdot V(\pi))}\}$ simultaneously 
for all $\pi\in\Pi$. This means 
the policy learned by PPL is comparable to any policy that is 
explored well in the adaptively collected dataset.


  The key result in Theorem~\ref{thm:adapt_upper} is $|\hat{Q}(\pi)-Q(\pi)|\preceq V(\pi)\sqrt{\log(V(\pi))}$. We call it a ``generalized empirical Bernstein's inequality'' because $V(\pi)^2$ is an empirical estimator for the variance of $\hat{Q}(\pi)$, and it generalizes existing empirical Bernstein's inequalities~\citep{bartlett2005local,bartlett2006convexity,koltchinskii2006local} to adaptive data and unbounded loss. 
  Without requiring any lower bounds on the adaptive sampling probability $e_t(x,a\given \cH_t)$, this result is applicable to any adaptive data collection process. 

  Compared with Theorem~\ref{thm:fix_upper} for non-adaptive data, the additional $\log(V(\pi))$ term  arises from applying a special self-normalized inequality (c.f.~Lemma~\ref{lemma:self_normal} in the supplementary material) we use to address unbounded loss. As discussed in Section~\ref{subsec:generic_algo}, we utilize the martingale structure of $\hat\Gamma_t(\pi)=\ind\{A_t=\pi(X_t)\}/e_t(X_t,A_t\given \cH_t)\cdot Y_t$ to address the adaptive nature of data. However, existing self-normalized inequalities often need to rely on constant bounds on the moments of martingale differences~\cite{de2004self}, which is not available  as we allow $1/e_t(X_t,A-t\given \cH_t)$ to grow arbitrarily and adaptively as $t$ proceeds. 
  To circumvent this limitation, we transform the original problem to bounding another class of martingales. Derived through a series of techniques, including symmetrization, their randomness is solely from a tree Rademacher process while conditioned on the data. This structure enables the application of  Lemma~\ref{lemma:self_normal} to unbounded martingales; a detailed proof based on these ideas is available in Appendix~\ref{subsec:proof_upper_adapt}. 

\begin{remark}\normalfont
  \label{rm:bound_adapt}
  The $\log V(\pi)$ term in Theorem~\ref{thm:adapt_upper} is small compared with $V(\pi)$, and can be upper bounded by poly-log$(T)$ terms under mild conditions. 
  For instance, if there exists a fixed constant $\alpha \geq 1$ 
  such that  
  $\log e_t(x,a\given \cH_t) \geq - (\log T)^\alpha$ 
  deterministically for all $t\in[T]$ and all $(x,a)\in \cX\times\cA$, 
  then it holds that $\log V(\pi)\leq (\log T)^\alpha$ and the upper bound in Theorem~\ref{thm:adapt_upper} becomes $O(V(\pi)\cdot (\log T)^{\alpha/2})$. Taking $\alpha=2$, this is to require $e_t(x,a\given \cH_t)\geq 1/T^{\log T}$, which is minimal for obtaining any meaningful exploration. 
  \end{remark}

\begin{remark}\normalfont
Since Theorem~\ref{thm:adapt_upper} does not require any assumption  on the
adaptive data collecting process, we obtain concrete upper bounds on the suboptimality 
even in situations where the conditions in the literature do 
not hold~\citep{zhan2021policy,bibaut2021risk}. 
The data-dependent upper bound frees us from 
imposing stringent assumptions on the adaptive propensities; as a result,  
the learning performance directly depends on the ``quality'' of the offline data.
\end{remark}

To gain a more quantitative understanding of our upper bound, 
we study a special case where the overlap of
the \emph{optimal}  
policy admits a polynomial-decaying lower bound.

\begin{assumption}[Polynomial decay] \label{assump:poly}
  Assume $e_t(X,\pi^*(X_t)\given \cH_t)\geq g_t$ almost surely 
  for a deterministic sequence $g_t:=\bar{c}\cdot t^{-\gamma}$ 
  for some constant $\bar{c}>0$ and  all $t\in[T]$. 
\end{assumption}

We emphasize that Assumption~\ref{assump:poly} does not 
require any of the suboptimal policies to have lower
bounded sampling probability,
and the standard assumption 
that 
$e_t(x,a\given \cH_t)\geq  \bar{c}\cdot t^{-\lambda}$ 
for some constant $\lambda \in(0,1)$ 
for \emph{all} $(x,a)\in \cX\times\cA$ almost surely~\citep{zhan2021policy,bibaut2021risk} 
is strictly stronger  than ours. 
Under Assumption~\ref{assump:poly},
we show below that 
the upper bound on the suboptimality of 
pessimistic policy learning is polynomial in $T$. 
Its proof is in Appendix~\ref{app:cor_adapt}. 

\begin{corollary}\label{cor:adaptive}
Suppose Assumption~\ref{assump:poly} holds, and 
we set  
\$\beta\geq c\cdot (\log T)^{\alpha/2} \cdot \sqrt{\ndim(\Pi)\log(TK^2) + \log(8/\delta)}
\$ 
for some $c\geq 67$, 
and suppose $TK^2\geq 8/\delta$. Then with probability at least $1-2\delta$,  
\$
\cL(\hat\pi;\cC,\Pi) \leq 
\min\bigg\{
12c \cdot \sqrt{ \frac{\ndim (\Pi)}{T^{1-\gamma}} } \cdot \frac{(\log (TK^2))^{(1+\alpha)/2} \cdot \log(1/\delta)}{\max\{1,\bar c^{ 3/4} \}},
1\bigg\}.
\$
\end{corollary}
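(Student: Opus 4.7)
The plan is to reduce the proof to an application of Theorem~\ref{thm:adapt_upper} and then estimate $V(\pi^*) = \max\{\vs(\pi^*), \vp(\pi^*), \vh(\pi^*)\}$ under the polynomial-decay assumption. First I would invoke Theorem~\ref{thm:adapt_upper} with the specified $\beta$, giving that with probability at least $1-\delta$,
\begin{align*}
\cL(\hat\pi;\cC,\Pi) \leq \min\bigl\{2\beta \cdot V(\pi^*),\,1\bigr\}.
\end{align*}
It then remains to show that on an event of probability at least $1-\delta$, $V(\pi^*)$ is bounded by $O\bigl(T^{-(1-\gamma)/2}/\max\{1,\bar c^{3/4}\}\bigr)$ up to logarithmic factors; a union bound over the two events yields the claimed $1-2\delta$ probability.

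The next step is to handle $\vp(\pi^*)$ and $\vh(\pi^*)$ deterministically. Under Assumption~\ref{assump:poly} we have $e_t(X_t,\pi^*(X_t)\given \cH_t) \geq g_t = \bar c\, t^{-\gamma}$ almost surely, so by the integral comparison $\sum_{t=1}^T t^{k\gamma} \lesssim T^{1+k\gamma}/(1+k\gamma)$,
\begin{align*}
\vp(\pi^*) \leq \frac{1}{T}\!\left(\sum_{t=1}^T \frac{t^\gamma}{\bar c}\right)^{\!1/2} \!\!\lesssim\, \frac{T^{-(1-\gamma)/2}}{\sqrt{\bar c}}, \qquad \vh(\pi^*) \leq \frac{1}{T}\!\left(\sum_{t=1}^T \frac{t^{3\gamma}}{\bar c^{3}}\right)^{\!1/4}\!\!\lesssim\, \frac{T^{-3(1-\gamma)/4}}{\bar c^{3/4}}.
\end{align*}

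The only nontrivial piece is $\vs(\pi^*)$, since $\vs(\pi^*)^2 T^2 = \sum_{t=1}^T U_t$ with $U_t = \ind\{A_t=\pi^*(X_t)\}/e_t(X_t,\pi^*(X_t)\given\cH_t)^2$ is a random sum. The key observations are that $\{U_t - \EE[U_t\mid \cF_{t-1}]\}$ is a martingale difference sequence with conditional mean $\EE[U_t\mid \cH_t,X_t] = 1/e_t \leq 1/g_t$, conditional second moment $\EE[U_t^2\mid\cH_t,X_t] = 1/e_t^3 \leq 1/g_t^3$, and uniform bound $|U_t|\leq 1/g_T^2$. Applying Freedman's (or Bernstein's) inequality for martingales and dividing by $T^2$, one gets
\begin{align*}
\vs(\pi^*)^2 \lesssim \vp(\pi^*)^2 \,+\, \sqrt{\log(1/\delta)}\cdot \vh(\pi^*)^2 \,+\, \text{lower-order terms},
\end{align*}
so that $\vs(\pi^*) \lesssim \vp(\pi^*) + (\log(1/\delta))^{1/4}\vh(\pi^*)$ with probability at least $1-\delta$. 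Combining all three components gives $V(\pi^*) \lesssim \vp(\pi^*) + \vh(\pi^*)\cdot\text{polylog}$.

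Finally, I would substitute the rates for $\vp(\pi^*)$ and $\vh(\pi^*)$ into $2\beta V(\pi^*)$, use the assumption $TK^2\geq 8/\delta$ to tidy logarithmic factors, and argue that the $\min\{\cdot,1\}$ with the trivial bound absorbs the regime where $\bar c < 1$, turning the $1/\sqrt{\bar c}$ coefficient from $\vp$ into the claimed $1/\max\{1,\bar c^{3/4}\}$. The main obstacle is Step 3: controlling $\vs(\pi^*)$ requires a careful martingale Bernstein-type bound where the range proxy $1/g_T^2 \sim T^{2\gamma}/\bar c^2$ is large, and one must verify that the resulting deviation terms are dominated by $\vp(\pi^*)$ and $\vh(\pi^*)$ (times polylogarithmic factors) so that the polynomial rate $T^{-(1-\gamma)/2}$ survives with the correct dependence on $\bar c$, $\delta$ and $\ndim(\Pi)$.
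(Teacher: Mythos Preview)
Your proposal is correct and follows essentially the same route as the paper: invoke Theorem~\ref{thm:adapt_upper} to get $\cL(\hat\pi)\le\min\{2\beta V(\pi^*),1\}$, bound $\vp(\pi^*)$ and $\vh(\pi^*)$ deterministically via $e_t\ge \bar c\,t^{-\gamma}$ and the integral comparison $\sum_{t\le T}t^{k\gamma}\lesssim T^{1+k\gamma}$, and control $\vs(\pi^*)$ by Freedman's inequality applied to the martingale differences $U_t-\EE[U_t\mid\cH_t,X_t]$ with exactly the moment/range proxies you list. The paper's choice of $x$ in Freedman is slightly cruder than the $\sigma\sqrt{\log(1/\delta)}+R\log(1/\delta)$ form you sketch (it takes $x\asymp\max\{1,\sigma\}\log(1/\delta)$), but the resulting bounds on $\vs(\pi^*)$ coincide up to constants. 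One small correction: the final $\bar c$ bookkeeping is not handled by the $\min\{\cdot,1\}$ clipping as you suggest, but by the elementary inequality $1/\sqrt{\bar c}\le\max\{1,\bar c^{-3/4}\}$, which the paper uses directly to merge the $\vp$ and $\vh$ rates into a single $1/\max\{1,\bar c^{3/4}\}$ factor.
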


Our upper bound in Corollary~\ref{cor:adaptive}
improves upon the results developed 
in~\cite{zhan2021policy,bibaut2021risk}, where they show  
the greedy approach ensures a suboptimality upper bound 
of order $\sqrt{\ndim(\Pi)/T^{1-\lambda}}$ 
when $\inf_{x,a}e_t(x,a\given \cH_t)\geq \bar{c}\cdot t^{-\lambda}$  
for some $\lambda\in(0,1)$. 
We discuss two examples where 
the improvement is significant.
First, when $\inf_{x,a}e_t(x,a\given \cH_t)\geq \bar{c}\cdot t^{-\lambda}$ does hold, 
and $\pi^*$ is explored better than this lower bound, so that Assumption~\ref{assump:poly} 
holds for some $\gamma < \lambda$. 
By Corollary~\ref{cor:adaptive}, pessimistic policy learning achieves a strictly 
sharper rate than greedy learning.  
Second,  when the assumptions in greedy learning  
do not hold (so that they cannot provide any guarantee),
our method may still enable efficient learning. 
For instance, in Thompson sampling~\citep{thompson1933likelihood,russo2018tutorial},
an arm that is considered suboptimal may have a vanishing sampling propensity. 
However, if the adaptive experiment turns out to 
successfully preserve the arms for the optimal policy $\pi^*$, 
Assumption~\ref{assump:poly} may still be true, and potentially even true for $\gamma=0$, 
leading to an efficient upper bound of order $O(\sqrt{\ndim(\Pi)/T})$.  

\begin{remark}
  With Theorems~\ref{thm:fix_upper} and~\ref{thm:adapt_upper}, the use of PPL may extend beyond contextual bandits to dynamic treatment regimes~\citep{chakraborty2014dynamic}. In particular, suppose there are $T$ units, each subject to a sequence of treatments (actions) $A_{t,k}\in \cA_k$, $k=1,\dots,N$ and receiving an outcome $Y_t$ after all actions are taken. The sampling probability of $A_{t,k}$ depends on $\cH_{t,k}:=\{X_{s,\ell},A_{s,\ell},Y_s\}_{s\leq t-1}\cup\{A_{t,\ell}\}_{\ell\leq k-1}$ and is assumed to be known. There are similar IPW-type estimators for evaluating a policy $\pi$ over the action sequence $(A_1,\dots,A_N)$, yet due to the long sequence of actions the lack of overlap is even more exacerbated. PPL with our data-dependent concentration inequality naturally applies to such estimators to enable efficient time-varying policy learning. On the other hand, IPW-type estimators are also used in offline RL~\citep{xie2019towards}; however, we anticipate that our ideas are only applicable when the state transition probabilities can be accurately estimated. 
\end{remark}

\subsection{Minimax optimality}

Finally, we investigate the minimax optimality of pessimistic policy learning  
when data is adaptively collected. 
As the adaptive propensities are difficult to characterize in general, 
we work under the polynomial decay setting for 
clear interpretability.   
As in the fixed behavior policy case, 
our  minimax lower bound unveils that 
the fundamental difficulty of policy learning with adaptive behavior policy
is actually captured by the sampling propensities for the optimal policy.

Given an action set $\cA$, a policy class $\Pi$, a sample
size $T \in \NN_+$ and some constants $\bar{c} > 0$
and $\gamma \in (0,1)$, we denote $\cR(\bar{c},\gamma,T,\cA,\Pi)$ 
as the set of instances we consider. Each element of 
the instance set corresponds to 
a contextual bandit model $\cC$ defined on $\cA$
and an adaptive sampling process obeying
\#\label{eq:adapt_lower_bd}
e_t(X,\pi^* (X)\given \cH_t) \geq \bar{c}\cdot t^{-\gamma},\quad a.s.,
\#
which together generates the offline data $\cD = \{(X_t,Y_t,A_t)\}_{t=1}^T$ 
as outlined in Section~\ref{sec:prelim}.
Here, $\pi^* =\pi^*(\cC,\Pi)$ is the optimal policy among $\Pi$ 
for the contextual bandit $\cC$ defined in~\eqref{eq:def_pi*}.    
The next theorem provides the minimax lower bound 
for learning any policy in $\Pi$ from an adaptive data collecting process
obeying~\eqref{eq:adapt_lower_bd}, which matches 
our upper bound in Corollary~\ref{cor:adaptive} up to logarithm factors.

\begin{theorem}\label{thm:lower_adapt}
For any action set $\cA$, any policy class $\Pi$,
any sample size $T \in \NN_+$, any $\bar{c}>0$ and any $\gamma\in(0,1)$ with 
$ (1-\gamma)\cdot \ndim(\Pi) \leq 1.5\bar{c} \cdot T^{1-\gamma}$, one has
\#
\inf_{\hat\pi} \sup_{(\cC,e)\in \cR(\bar{c},\gamma,T,\cA,\Pi)} 
\EE_{\cC,e} \big[  \cL(\hat\pi\,;  \cC,\Pi ) \big]
\geq 0.12 \cdot \sqrt{\frac{ \ndim(\Pi) }{ T^{1-\gamma}}} \cdot \sqrt{\frac{1-\gamma}{\bar{c}}} ,
\#
where $\ndim(\Pi)$ is the Natarajan dimension of $\Pi$, $\hat\pi$ is any 
data-dependent policy, and $\EE_{\cC,e}$ means taking expectation over the 
randomness of the offline data under $\cC$ and $e_t(\cdot,\cdot\given \cH_t)$. 
\end{theorem}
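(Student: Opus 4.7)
The plan is to prove the lower bound via an Assouad-style reduction, closely paralleling the proof of Theorem~\ref{thm:lower} for the batched case but substituting the fixed propensity floor $C_*$ with its integrated adaptive analog $\sum_{t=1}^T e_t(x,\pi^*(x)\given \cH_t) \asymp \bar c\, T^{1-\gamma}/(1-\gamma)$. Since the supremum runs over all policies in $\cR(\bar c,\gamma,T,\cA,\Pi)$, it suffices to restrict attention to \emph{non-adaptive} behavior policies $e_t(x,a)$ depending only on $t$ and satisfying $e_t(x,\pi^*(x))\geq \bar c\, t^{-\gamma}$: the restricted set is still contained in $\cR$, so any lower bound over it is valid, and it makes $\{(X_t,A_t,Y_t)\}_{t=1}^T$ conditionally independent, which cleans up the KL computations.

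I would then invoke the Natarajan shattering property: pick $\{x_1,\dots,x_d\}\subseteq\cX$ with $d=\ndim(\Pi)$ and witnesses $f_1,f_2$, and construct a family of bandit instances $\{\cC_\sigma\}_{\sigma\in\{-1,+1\}^d}$ by taking $X$ uniform on $\{x_1,\dots,x_d\}$ and setting $Y\given X=x_j,A=f_1(x_j)\sim\textnormal{Bern}(1/2+\epsilon\sigma_j)$, $Y\given X=x_j,A=f_2(x_j)\sim\textnormal{Bern}(1/2-\epsilon\sigma_j)$, and irrelevant rewards elsewhere, for a small $\epsilon>0$ to be tuned. Shattering guarantees that the policy $\pi^*_\sigma$ defined by $\pi^*_\sigma(x_j)=f_1(x_j)$ iff $\sigma_j=+1$ lies in $\Pi$ and is optimal under $\cC_\sigma$. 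For a common behavior policy I would set $e_t(x_j,f_1(x_j))=e_t(x_j,f_2(x_j))=\bar c\,t^{-\gamma}$ and place any leftover mass on a null action, so that $e_t(X,\pi^*_\sigma(X))\geq \bar c\,t^{-\gamma}$ holds for every $\sigma$.

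The loss is then coordinate-separating, $\cL(\pi^*_{\sigma'};\cC_\sigma,\Pi)\geq (2\epsilon/d)\|\sigma-\sigma'\|_{\mathrm H}$, and the KL between two $\sigma$'s differing only in coordinate $j$ is, using the $8\epsilon^2$ bound for the KL of two $(1/2\pm\epsilon)$ Bernoullis,
\$
\textnormal{KL}(\cC_\sigma\|\cC_{\sigma'})\;\leq\;\frac{1}{d}\sum_{t=1}^T 8\epsilon^2\bigl(e_t(x_j,f_1(x_j))+e_t(x_j,f_2(x_j))\bigr)\;\lesssim\;\frac{\bar c\,\epsilon^2\,T^{1-\gamma}}{d(1-\gamma)}.
\$
Choosing $\epsilon^2\asymp d(1-\gamma)/(\bar c\,T^{1-\gamma})$ keeps each neighboring KL bounded by an absolute constant, hence each neighboring TV is bounded away from $1$. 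Assouad's lemma then yields a lower bound of order $\epsilon\asymp\sqrt{\ndim(\Pi)(1-\gamma)/(\bar c\,T^{1-\gamma})}$, matching the stated bound.

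The main obstacles are bookkeeping: the hypothesis $(1-\gamma)\ndim(\Pi)\leq 1.5\bar c\, T^{1-\gamma}$ is used precisely to certify that the tuned $\epsilon$ satisfies $\epsilon\leq 1/4$, so that the Bernoulli means remain in $[1/4,3/4]$ and the KL-to-TV step goes through with the explicit $0.12$; and the $K=2$ edge case must be handled separately, since one cannot place mass $\bar c\, t^{-\gamma}$ on both $f_1(x_j)$ and $f_2(x_j)$. There I would split the shattered set into two halves on which $f_1$ (respectively $f_2$) plays the role of the rare action, preserving the symmetry of the family. Otherwise the argument is a direct time-integrated version of Theorem~\ref{thm:lower} and fits the framework of~\citet[Theorem~1]{zhan2021policy}.
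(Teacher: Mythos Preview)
Your proposal is correct and takes essentially the same route as the paper: an Assouad-type reduction on a Natarajan-shattered set with a fixed (non-adaptive) time-varying behavior policy putting mass $\bar c\,t^{-\gamma}$ on each witness action, the KL between neighboring hypotheses controlled by $\asymp \bar c\,\epsilon^2 T^{1-\gamma}/\big(d(1-\gamma)\big)$, and the gap tuned so that this KL is $O(1)$. The only cosmetic differences are that the paper perturbs a single arm (Bern$(1/2)$ versus Bern$(1/2+v_i\delta)$) rather than both symmetrically and does the averaging plus $(1-\mathrm{TV})\geq \tfrac12 e^{-\mathrm{KL}}$ step by hand instead of invoking Assouad; you also explicitly flag the $K=2$ edge case, which the paper's construction $(1-2\bar c\,t^{-\gamma})/(K-2)$ leaves implicit.
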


Theorem~\ref{thm:lower_adapt} matches our 
upper bound in Corollary~\ref{cor:adaptive}, 
which shows that PPL is minimax optimal for adaptively collected data.
Our lower bound is implied by 
the lower bound in~\cite{zhan2021policy} established for a family of problems with the uniform lower bound assumption 
$\inf_{x,a}e_t(x,a\given \cH_t)\geq \bar{c}\cdot t^{-\gamma}$. 
Indeed, we take the worst-case suboptimality over a larger set of instances 
obeying~\eqref{eq:adapt_lower_bd}. 
We include the proof in Appendix~\ref{app:subsec_lower_adapt} for completeness.  

In summary, our theoretical results   (1) describe the fundamental difficulty of offline policy learning 
in terms of the overlap of the optimal policy  (Theorem~\ref{thm:lower} for the fixed case 
and Theorem~\ref{thm:lower_adapt} for the adaptive case), and (2) show  that  
PPL is minimax optimal (Theorem~\ref{thm:fix_upper} for the fixed case and Theorem~\ref{thm:adapt_upper}
for the adaptive case). 
More specifically, these results concern a family of contextual bandit problems 
whose propensity of the {\em optimal policy} is lower bounded. 
While this family is larger than existing ones in the literature for both cases,  our lower bounds coincide with them. 
At a high level, this reveals the key role 
of the overlap of the \emph{optimal} policy in the fundamental limit of 
learning from an offline dataset. 
By providing an optimal solution to a broader family of problems, 
we show that some problems are not as hard as they have been perceived in the literature, especially when the uniform overlap is weak yet the overlap for the optimal policy is strong.

\begin{remark}\normalfont
  The improvement of PPL relative to the greedy approach 
  depends on the unknown overlap 
  of the optimal policy $\pi^*$, and   
  our minimax lower bound shows that PPL is the best effort 
  for a given dataset. 
  When one has the power to collect data, 
  an important follow-up question is how to design the behavior policy 
  to exploit the benefit of pessimism
  in subsequent policy learning. 
  By Theorem~\ref{thm:fix_upper} and~\ref{thm:adapt_upper}, 
  one should aim to ensure a good overlap for $\pi^*$.  
  In the batched setting with a fixed behavior policy, one may utilize external data to 
  guess and assign higher probability to the optimal arms. 
  In the adaptive setting, one may use online algorithms   
  to learn the optimal arms; one may also leverage 
  the quantification of uncertainty in~\eqref{eq:ineq_adapt} to guide the online exploration 
  to avoid strong modeling assumptions on the reward regression function. 
  We leave these aspects for future investigation. 
\end{remark}


\section{Practical algorithms}
\label{sub:alg}

Our theoretical results so far 
have addressed the minimax optimal rates of 
policy learning under weak overlap achieved by PPL. 
In complement, we now devise practical PPL algorithms, whose efficacy 
will be demonstrated in the next section via numerical experiments. 
In Section~\ref{subsec:alg}, we propose concrete 
optimization algorithms to approximate our initial 
non-convex program~\eqref{eq:def_pess}. 
In Section~\ref{subsec:cv}, we discuss a cross-validation scheme 
for choosing the hyperparameter in practice. 
It is worth noting that while these methods are heuristic and non-exact by nature, similar optimization and computational challenges are present in existing methods~\citep{zhou2022offline,swaminathan2015batch,jin2021pessimism} (see Appendix~\ref{app:subsec_opt_discuss} of the supplementary material for a detailed discussion). Consequently, we defer exact optimization and the theoretical aspects of PPL algorithm to future work. Nonetheless, our extensive experiments indicate that the proposed algorithms exhibit robust, superior performance and remain a practical choice.

\subsection{Optimization for PPL}
\label{subsec:alg}

The first practical challenge we address 
is that for a given parameter $\beta>0$, 
the penalty term $R(\pi)$ is non-convex in $\pi$, 
adding to the well-recognized  non-convexity of 
greedy empirical welfare maximization (i.e., without our penalty term)~\citep{zhou2022offline}. 
To this end, 
we leverage  a Majorization-Maximization scheme inspired by SVM approximation algorithms~\citep{van2016gensvm,swaminathan2017off} that 
progressively approximate the non-convex, non-additive penalty term~\eqref{eq:V_terms} 
with an additive objective that is compatible with 
the widely adopted tree search algorithm for policy learning~\citep{sverdrup2020policytree}. 

Specifically, tree search~\citep{sverdrup2020policytree} works for optimizing any 
additive objective of the form $\sum_{t=1}^T \Gamma_t(X_t,\pi(X_t))$
such as that in~\cite{zhan2021policy} over a tree policy class.
However, it is not readily applicable to PPL 
since our penalty term $R(\pi)=\beta\cdot V(\pi)$ is non-additive. 
To begin with, we ignore the higher order deviation in~\eqref{eq:V_terms} 
since it is usually of a smaller scale, and further replace  
$V(\pi)$ by $\vs(\pi)+\vp(\pi)$, noting that $V(\pi)\leq \vs(\pi)+\vp(\pi) \leq 2V(\pi)$. 
The following lemma finally bounds it by an additive form 
based on any starting point $\pi_0$. The proof adapts that of~\cite{swaminathan2017off} 
and is omitted here. 

\begin{lemma}
Denote $\Gams_t(\pi ) = \frac{\ind\{A_t = \pi(X_t)\}}{e(X_t,\pi(X_t)\given \cH_t)^2}$
and $\Gamp_t(\pi ) = \frac{1}{e(X_t,\pi(X_t)\given \cH_t)}$, so that 
$\vs(\pi) = \frac{1}{T}\sqrt{ \sum_{t=1}^T \Gams_t(\pi)}$ 
and $\vp(\pi) = \frac{1}{T}\sqrt{\sum_{t=1}^T \Gamp_t(\pi)}$. 
Then for any $\pi_0\colon \cX\to \cA$ such that $\vs(\pi_0 )>0$,
\$
\vs(\pi) \leq G_{\rm s}(\pi;\pi_0):=C_{\rm s} + \frac{B_{\rm s}}{T} \sum_{t=1}^T \Gams_t(\pi)^2,
\quad 
\vp(\pi) \leq G_{\rm p}(\pi;\pi_0):=C_{\rm p} + \frac{B_{\rm p}}{T} \sum_{t=1}^T \Gamp_t(\pi)^2,
\$
where $C_{\rm s}=\vs(\pi_0)/2$, $C_{\rm p} = \vp(\pi_0)/2$, 
$B_{\rm s} = (2T\vs(\pi_0))^{-1}$, and $B_{\rm p} = (2T\vp(\pi_0))^{-1}$.
\end{lemma}

The Majorization-Minimization scheme is proved to converge to local training minimum 
for cost functions in SVM methods~\citep{van2016gensvm}. 
Inspired by this fact, we use the following 
Majorization-Maximization scheme: starting with some $\pi_0$, $i=0$, 
we iteratively replace 
our original objective $\frac{1}{T}\sum_{t=1}^T \hat\Gamma_t(\pi) - \beta\cdot V(\pi)$ 
by its lower bound 
$\frac{1}{T}\sum_{t=1}^T \{\hat\Gamma_t(\pi) - \beta\cdot G_{\rm s}(\pi;\pi_i) - \beta\cdot G_{\rm p}(\pi;\pi_i)\}$, and use tree search to find the optimal solution $\pi_t$, and continue 
this process until a convergence criterion is met. 
We summarize the approximation algorithm in Algorithm~\ref{alg:ppl}, 
where we use $\texttt{Tree}(L,\Pi_D)$ 
to denote the tree search algorithm that maximizes the 
empirical welfare $L$ over the policy class $\Pi_D$, 
which is the collection of depth-$D$ trees~\citep{sverdrup2020policytree}. 
In our numerical experiments, 
we set $\pi_0$ as the greedy policy from tree search, 
and set the convergence criterion to be 
either $i>50$ or $\pi_i(X_t) = \pi_{i+1}(X_t)$ for all $t\in [T]$.

\begin{algorithm}[h] 
  \caption{Approximate Iterative Majorization-Maximization for PPL}\label{alg:ppl}
  \begin{algorithmic}[1]
  \REQUIRE Training data $\{X_t,A_t,Y_t\}_{t=1}^T$, convergence criterion, tree search algorithm $\texttt{Tree}(\cdot;\Pi)$ and depth of tree class $D$, parameter $\beta>0$.
  \vspace{0.05in} 
  \STATE Set $i=0$, and compute $\pi_0 = \texttt{Tree}(\sum_{t=1}^T \hat\Gamma_t(\pi);\Pi_D)$ according to~\eqref{eq:def_hatQ_generic}.  \hfill \texttt{// Initialization}

  \vspace{0.3em}

  \WHILE{Convergence criterion not met} 
  \STATE Compute $B_{\rm s} = (2T \vs(\pi_i))^{-1}$ and $B_{\rm p} = (2T\vp(\pi_i))^{-1}$.  \hfill \texttt{// Iterative update}
  \STATE Set $\Gams_t(\pi ) = \ind\{A_t = \pi(X_t)\}/e(X_t,\pi(X_t)\given \cH_t)^2$.
  \STATE Set welfare function $\hat\Gamma_{t,i}(\pi) = \hat\Gamma_t(\pi) - \beta \frac{B_{\rm s}}{T}\Gams_t(\pi) - \beta \frac{B_{\rm p}}{T}\Gamp_t(\pi)$. 
  \STATE Compute $\pi_{i+1} = \texttt{Tree}(\sum_{t=1}^T \hat\Gamma_{t,i}(\pi);\Pi_D)$.
  \STATE $i \leftarrow i+1$.
  \ENDWHILE
  \vspace{0.3em}
  \ENSURE Learned policy $\pi_i$.
  \end{algorithmic}
\end{algorithm}

\subsection{Cross-validation for hyper-parameter}
\label{subsec:cv}

Having addressed the optimization with a fixed $\beta>0$, 
the second challenge we tackle is the choice of the parameter 
in practice. Note that Theorems~\ref{thm:fix_upper}
and~\ref{thm:adapt_upper} provide recipes for choosing $\beta$ 
that achieves the minimax optimal statistical rates, but 
the specific value of $\beta$ that leads to the actual optimal policy 
value may depend on the problem context. 
Also, the necessity of computing the Natarajan dimension 
can be undesirable in spite of existing bounds~\citep{daniely2011multiclass,jin2023upper}. 

A natural idea is to choose the parameter by cross-validation. 
For batched data, one can directly use classical cross-validation methods~\citep{stone1974cross,hastie2009elements}. 
In addition, here we develop a  heuristic cross-validation strategy 
for adaptively collected data, summarized in Algorithm~\ref{alg:cv}. 
We find it effective in our numerical experiments, whose theoretical properties are left for future work.

\begin{algorithm}[htbp] 
  \caption{Cross validation for PPL with adaptive data}\label{alg:cv}
  \begin{algorithmic}[1]
  \REQUIRE Training data $\{X_t,A_t,Y_t\}_{t=1}^T$, candidate parameter list $\cB$, 
  number of folds $N$, convergence criterion, tree search algorithm $\texttt{Tree}(\cdot;\Pi)$ and depth of tree class $D$.
  \vspace{0.05in} 
  \STATE Split $\{1,\dots,T\}$ into $N$ folds  
  $\cI_j = \{\lfloor (j-1) T/N \rfloor+1,\dots, \lfloor j T/N \rfloor\}$, $j=1,\dots,N$. \hfill \texttt{// Data split}

  \vspace{0.3em}

  \FOR{$\beta$ in $\cB$} 
    \FOR{$j=1,\dots,\lfloor 3N/4\rfloor$}
    \STATE Set training fold $\cI_j^{\rm train} = \cup_{\ell\leq j} \cI_\ell$.
    \STATE Set evaluation fold $\cI_j^{\rm eval} = \cup_{\ell> j} \cI_\ell$.
  \STATE Learn policy $\hat\pi_{j,\beta}$ with Algorithm~\ref{alg:ppl} applied to training data $\{X_t,A_t,Y_t\}_{t\in \cI_j^{\rm train} }$.  \hfill \texttt{// Training}
  \STATE Evaluate $\hat{Q}_{j,\beta} = \frac{1}{|\cI_j^{\rm eval}|}\sum_{t\in \cI_j^{\rm eval}} \hat\Gamma_t(\hat\pi_{j,\beta})$. \hfill \texttt{// Evaluation}
    \ENDFOR
    \STATE Compute average evaluated reward $\hat{Q}_\beta = \frac{1}{\lfloor 3N/4\rfloor}\sum_{j=1}^{\lfloor 3N/4\rfloor} \hat{Q}_{j,\beta}$.
  \ENDFOR
  \vspace{0.3em}
  \STATE Compute $\hat\beta^{\rm cv} = \argmax\{ \hat{Q}_\beta\colon \beta\in \cB\}$. 
  \ENSURE Cross-validated hyper-parameter $\hat\beta^{\rm cv}$. 
  \end{algorithmic}
\end{algorithm}

The adaptive setting requires additional care due to its sequential nature: 
with dependent data, one cannot randomly split the data points. 
Instead, we are to split the data into $N$ folds without permutation, 
so that the folds containing consecutive data points preserve the sequential structure. 
Then, we repeatedly use the first $j$ folds 
to learn the policy with a specific parameter, 
use the remaining data to evaluate the learned policy, 
and aggregate the evaluation to select the parameter. 
In this way, the martingale structure ensures 
unbiased evaluation of the policy learned with earlier folds.
Furthermore, for the stability in weak overlap settings, 
we only take $j\leq 3N/4$ to ensure sufficiently large evaluation folds. 

\section{Numerical experiments}
\label{sec:exp}

In this part, we evaluate the proposed PPL algorithms 
via extensive numerical studies.  
In 
Section~\ref{subsec:simu_mab}, we use several stylized scenarios 
with non-adaptive propensities
to build intuitions for the behavior of PLL.   
Section~\ref{subsec:simu_contextual} evaluate 
the finite-sample performance of PPL and 
shows its efficacy in a wide range of complex scenarios 
where data are adaptively collected by Thompson Sampling.  
Finally, in Section~\ref{subsec:real}, we apply PPL to 33 
real-world datasets on OpenML~\citep{OpenML2013} which shows its 
practical relevance.\footnote{Reproduction code is available in the GitHub repository~\url{https://github.com/ying531/pess-policy-learning}.}

\subsection{Simulations with non-adaptive propensities}
\label{subsec:simu_mab}

We first consider offline datasets collected by non-adaptive propensities, 
i.e., $e_t(x,a\given \cH_t)$ may change with $t$ but does not depend on historical data. 
We use such well-controlled settings to build intuitions on 
what should be expected from the algorithms
and verify our theoretical results. 

\subsubsection{Multi-armed bandits}
\label{subsubsec:mab}
We first consider 5-armed bandits without covariates. 
For a fixed vector $\mu = (\mu_1,\dots,\mu_5)\in \RR^5$, 
we sample $T$ i.i.d.~data from a fixed behavior policy with 
probability $e_k$ for each arm $k\in[5]$.  
To ensure that we are in an interesting asymptotic 
regime, in each experiment with sample size $T$, 
the mean vector for the arms is set to be $\mu/\sqrt{T}$.  
We suppose the data are sampled from a fixed behavior policy with 
probability $e_k$ for arm $k$, $k\in[5]$. 
Given the data, a greedy learner (called GPL hereafter for simplicity) 
takes $\argmax_{k}\{ \hat\mu_k\}$, where 
$\hat\mu_k$ is the   mean reward for data with pulled arm $k$. 
In contrast, PPL takes $\argmax_k \{ \hat\mu_k - \hat\sigma_k/\sqrt{T}\}$, 
where $\hat\sigma_k$ is the standard deviation of 
the observed rewards for arm $k$. 
This is a simplified implementation of PPL, which nevertheless delivers 
essential messages.  
We design three  data-generating processes:

\begin{itemize}
  \item \emph{Setting 1: Optimal}. Set $\mu = (0, 0.05, 0.01, 0, -0.01)$ and $e = (0.07, 0.9, 0.01, 0.01, 0.01)$. 
  The optimal arm is sampled with high probability, while the second-optimal is sampled with a small probability. According to Section~\ref{subsec:challenge}, GPL would be confused by the poor estimation of the second-optimal arm. 
  \item \emph{Setting 2: Suboptimal}. Set $\mu = (0, 0.05, 0.04, 0.01, -0.01)$ and $e =(0.07, 0.1, 0.8, 0.02, 0.01)$. 
  With weak overlap, GPL might confuse the two, but our theory implies that PPL would learn an arm that is very close to arm 3 (as the estimation error of arm 3 is small). 
  \item \emph{Setting 3: Uniform}. Set $\mu=(0, 0.05, 0.03, 0.01, -0.01)$ and $e_k\equiv 0.2$. 
  This means all arms are sampled equally well, and we expect both GPL and PPL to perform well. 
\end{itemize}

Figure~\ref{fig:mab} plots the rescaled suboptimality $\sqrt{T}\cdot(\mu^* - \hat\mu)$ 
for GPL and PPL with various choices of $\beta$ and sample sizes,  
where $\mu^*=\max_k \{\mu_k\}$ and $\hat\mu$ is the true reward for the learned arm. 
Setting 3 confirms the comparable performance of greedy and PPL 
given high-quality data. 
However, once the overlap is not uniformly good (settings 1 and 2), PPL show superior performance especially with reasonably large $\beta$. 
The behavior of PPL and GPL is demonstrated in Appendix~\ref{subsec:map_freq}. 
In setting 1, GPL misses the optimal arm 
even if it is pulled many times due to large estimation uncertainty of 
the second-optimal one (see our discussion in Section~\ref{subsec:challenge}), 
while PPL overcomes this issue. 
In setting 2, although both methods miss the optimal arm due to poor overlap, 
PPL can still find the second-optimal arm whose overlap is sufficiently good 
(see Corollary~\ref{cor:general}). 
Finally,   
 GPL and PPL perform comparably in setting 3 where all arms are sampled uniformly well.

In Appendix~\ref{eq:subsubsec_MAB_clip}, 
we compare PPL with~\cite{swaminathan2015batch} where the inverse propensities are first clipped to a maximum of $M=5$ (which can be viewed as another hyper-parameter to tune in their framework) and then penalized welfare maximization similar to PPL is applied. In setting 1-2, we observe inferior performance of their method due to the bias such clipping introduces, which highlight the importance of properly dealing with weak overlap even in the simplest i.i.d.~settings. In general, tuning the parameter $M$ may introduce another layer of complexity in practice, whereas PPL achieves automatic tradeoff between bias and variance.

\begin{figure}[h]
  \centering 
  \includegraphics[width=0.8\linewidth]{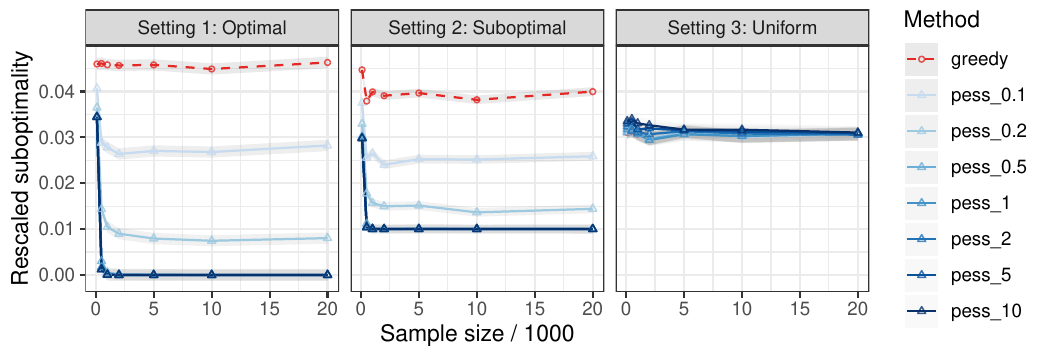}
  \caption{Average rescaled suboptimality of the learned policy  from GPL 
  and PPL with $\beta\in \{0.1, 0.2, 1, 10\}$ at various sample sizes. The confidence bands shows mean $\pm$ two times the empirical standard deviation over $N=1000$ independent runs.}
  \label{fig:mab}
\end{figure}

\subsubsection{Contextual bandits with tree search}
\label{subsec:simu_dt}
We then study  $10$-arm
contextual bandits, with policy learned by 
tree search~\citep{sverdrup2020policytree}. 
In this part, 
we still control the propensity sequence $e_t(x,a)$, 
but the policy search is more complex. 

We fix the number of arms at $K=10$ and data dimension at $p=2$, 
extending the setup of~\cite{zhan2021policy} to a larger problem scale. 
Let $\{a_k\}_{k=1}^{10}$ be equally spaced within $[-1,1]$. 
At each time $t$, one randomly pulls an arm $A_t$ 
with probability $e_t(x,a)$ for $a\in \cA$, 
and the reward is generated via $Y_{t} = \mu(X_t,A_t)+\epsilon_t$, 
with $X_{t,j}\iid \textrm{Unif}[-2,2]$ and $\epsilon_t\iid N(0,0.1^2)$. 
We set nonlinear rewards $\mu(x,a_k) = 1-\alpha_k/2 + \alpha_k x_1^2/2$, 
and vary the time-dependent propensity sequence in three ways:
\begin{itemize}
  \item Setting 1 (Optimal overlap): We fix $e_t(x,a_k) = 0.1 \cdot t^{-\alpha}$ 
  if $k\neq \argmax_{\ell} \{\mu(x,a_\ell)\}$ 
  and $e_t(x,a_k) = 1-0.9\cdot t^{-\alpha}$ 
  for the optimal arm, while varying $\alpha\in \{0.2,0.4,0.6,0.8\}$. 
  Thus, the optimal arms 
  has good overlap, but the suboptimal ones are pulled with decaying rates. 
  \item Setting 2 (Half-half overlap): We split the entire trajectory into 5 batches, 
  and alternately in each batch, we let $e_t(x,a_k) = 0.2(1-   t^{-\alpha})$
  if $\mu(x,a_k)\geq \textrm{Median}( \{\mu(x,a_\ell)\}_{\ell})$ 
  and  $e_t(x,a_k) = 0.2 \cdot t^{-\alpha}$ otherwise, or vice versa, varying $\alpha\in \{0.2,0.4, 0.6,0.8\}$. This mimics a case where the agent alternates between 
  sampling   high-valued  and low-valued arms. 
  \item Setting 3 (Worst overlap): We fix $e_t(x,a_k) = 0.2 \cdot t^{-\alpha}$ 
  if $\mu(x,a_k)\geq \textrm{Median}( \{\mu(x,a_\ell)\}_{\ell})$ 
  and $e_t(x,a_k) = 0.2(1-   t^{-\alpha})$ otherwise, varying $\alpha\in \{0.2,0.4, 0.6,0.8\}$. 
  In this case, the overlap for arms with large rewards decays quickly: 
  for $\alpha=0.2$ and $T=1000$, the optimal arms are only pulled about $25$ times. 
\end{itemize}


Besides four scenarios with various decay rates $\alpha$, we additionally study a ``pure exploration'' scenario 
where  $e_t(x,a_k)=0.001$ for low-propensity arms 
and $e_t(x,a)=0.99$ for high-propensity arms in each setting, in order 
to show the most drastic violation of uniform overlap. 
Given the data, we apply GPL (the method of~\cite{zhan2021policy}) and PPL 
with $\Pi$ being the hybrid policy tree with depth $5$, 
optimized using PolicyTree~\citep{sverdrup2020policytree}. 
As the tree search algorithm is not exact, 
there might be mis-specification in the policy class. 

In addition to GPL and PPL, we evaluate the linear PEVI method in \cite{jin2021pessimism} that addresses the overlap issue via the pessimism principle with linear function approximation  $\mu(x,a) = \phi(x,a)^\top \theta$, where $\theta$ is an unknown parameter (the method of \cite{swaminathan2015batch} evaluated in the preceding part is not applicable to changing behavior policies). Following~\cite{jin2021pessimism}, we set the feature vector as $\phi(x,a)\in \RR^{20}$ where the $(2i-1)$-th and $(2i)$-th elements in $\phi(x,a)$ are $(1,x)$ for the $i$-th action $a\in \cA$, so the linear model is mis-specified. Then, a pessimistic point-wise estimation $\hat\mu(x,a) = \phi(x,a)^\top\hat\theta - \beta\cdot |\phi(x,a)^\top \Lambda^{-1}\phi(x,a)|^{1/2}$ is obtained, where $\hat\theta$ is the ridge regression estimator, $\Lambda$ is the augmented covariance matrix, and  $\beta>0$ is a tuning parameter. The learned policy is given by $\hat\pi(x)=\argmax_{a\in \cA} \hat\mu(x,a)$. 
This method differs from PPL in both the policy class searched over and  the theoretical assumptions needed for efficient learning. 
The performance of the three methods across all settings 
is shown in Figure~\ref{fig:simu_dt}.

\begin{figure}[htbp]
  \centering 
  \includegraphics[width=\linewidth]{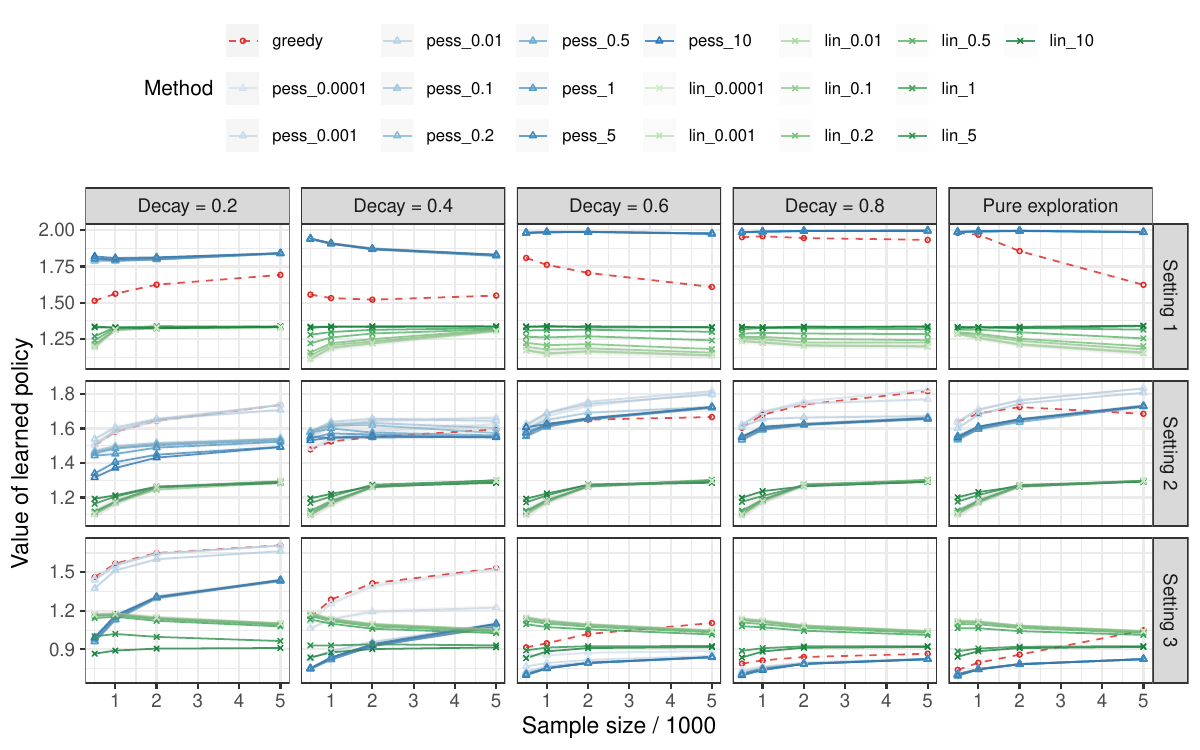}
  \caption{Evaluated value of policy learned from GPL (\texttt{greedy}), PPL (\texttt{pess}), and the linear method of~\cite{jin2021pessimism} (\texttt{lin}) with  $\beta\in \{0.1,0.2,0.5,1,5,10\}$, averaged over $N=200$ runs. Each subplot shows one sampling scheme.}
  \label{fig:simu_dt}
\end{figure}

In setting 1 with optimal overlap,  
PPL stably outperforms both GPL and PEVI.
A moderate decay rate like $0.4$ and $0.6$  
already imposes challenges to a greedy learner. 
GPL performs better under a higher decay rate of $0.8$: 
in this case, suboptimal arms are rarely sampled for any $x$, thus 
the AIPW estimator favors optimal ones. 
However, in the pure exploration setting, with tiny propensities 
and large sample sizes, suboptimal arms that occur in the offline data 
by chance could incur a huge 
estimation error in the AIPW estimator. 
Due to mis-specification in the linear model, PEVI 
suffers from a large bias.
In setting 2 where  each arm 
alternates between good overlap and poor overlap, 
both methods start to struggle, yielding lower values than setting 1. 
In this case, PPL with a small value of $\beta$ outperforms GPL, 
but a large penalty can be harmful. 
PEVI still suffers from mis-specification bias.
Finally, in setting 3 where optimal arms are very rarely taken, 
both GPL and PPL lead to poor learning performance  
(PPL  appears comparable to GPL especially with small penalty), while 
PEVI slightly outperforms the two tree-based methods despite reduced performance compared with settings 1 and 2. While PEVI is only slightly better than random guess (a uniform policy over $\cA$ would yield a reward of $1$), these results demonstrate the stability of linear methods in the most challenging--nearly hopeless--scenarios.

\subsection{Simulations with Thompson Sampling}
\label{subsec:simu_contextual}

We then move on to contextual bandits 
with data collected by an adaptive linear Thompson Sampling (TS) agent~\citep{agrawal2013thompson}, 
following the simulations in~\cite{zhan2021policy}. 
As we now have less control over the sampling process, 
our goal is to demonstrate the \emph{finite-sample} performance 
of PPL in relatively complex settings 
with weaker overlap and various levels of mis-specification in exploration. 


We consider $10$-arm contextual bandits similar to Section~\ref{subsec:simu_dt}. 
Let $\{a_k\}_{k=1}^{10}$ be equally spaced within $[-1,1]$. 
At each time $t$, the linear TS agent randomly pulls arm $A_t$, 
and the reward is generated via $Y_{t} = \mu(X_t,A_t)+\epsilon_t$, 
with $X_{t,j}\iid \textrm{Unif}[-2,2]$ for $j=1,2$, and $\epsilon_t\iid N(0,0.1^2)$.  
We design three data generating processes 
with varied  adaptivity of the agent: 
 
\begin{itemize}
  \item \emph{Setting 1: well-specified exploration.} We set  $\mu(x,a_k) = 1-\alpha_k/2 + x_1/2 - x_2$, so that the linear TS is well-specified and the sampling process settles down to the optimal arms, but the overlap of sub-optimal arms can be poor unless a lower bound is enforced. 
  \item \emph{Setting 2: misspecified with optimal overlap.} We set nonlinear rewards $\mu(x,a_k) = 1-\alpha_k/2 + \alpha_k x_1^2/2$, so that the linear TS is mis-specified. After calculating the adaptive propensities $\bar{e}(x,a\given \cH_t)$ by TS, we linearly rescale it to make sure the optimal arm is pulled with probability at least $0.1$. 
  \item \emph{Setting 3: misspecified exploration.} We set nonlinear rewards $\mu(x,a_k) = 1-\alpha_k/2 + \alpha_k x_1^2/2$, so that the linear TS is mis-specified, and the sampling process might not concentrate on optimal arms. 
\end{itemize}

In each setting, we enforce deterministic lower bounds on 
$e(x,a\given \cH_t)$ in three ways: 
(i) Pure exploration: no lower bound, so $e(x,a\given \cH_t)$ can decay very fast. 
(ii)-(iv): Polynomial decay: linearly rescaling the raw propensity by linear TS 
so that $e(x,a\given \cH_t)\geq   T^{-\alpha}/K$, where 
$\alpha\in \{0.2, 0.5, 0.8\}$. 
The decay rate of $0.5$ is studied in the simulations of~\cite{zhan2021policy}, 
and we investigate a broader class of settings. 
The larger $\alpha$ is, the more imbalanced the overlap is. 
In addition, we set the agent to update its TS sampler (i.e., linear estimators 
for arm means) with batch sizes in $\{10,100\}$, where 
a smaller batch size allows more frequent updates hence stronger adaptivity.  
The tree-based policy search is the same as Section~\ref{subsec:simu_dt}.
The experiments are repeated for $N=200$ times. 


\subsubsection{Well-specified exploration}

Figure~\ref{fig:simu_lin} shows the average reward 
for GPL and PPL when the linear TS sampler is well-specified,
across the  eight configurations 
of propensity lower bound (column) and batch size (row). 
In this setting, the true optimal arms have a good overlap, while uniform overlap 
might not be as good. 

The results confirm the \emph{superior finite-sample performance} of 
PPL across all settings. 
Consistent with Setting 1 (optimal actions have good overlap) 
in Section~\ref{subsec:simu_dt}, 
a polynomial decay rate of $0.5$ 
seems most challenging for GPL with TS sampler, 
which is also the setting where PPL yields the largest improvement. 
In specific, even though the linear TS sampler is well-specified, 
decaying overlap for suboptimal arms still 
confuses the greedy learner, and its performance even decreases with the sample size. 
The pure exploration setting is also challenging for GPL,  
but the performance of PPL improves with the sample size. 
In addition, PPL slightly improves as the exploration becomes more aggressive, 
where we observe that the overlap for optimal arms improves. 
Both algorithms perform stably across 
batch sizes.  

\begin{figure}[h]
  \centering 
  \includegraphics[width=0.9\linewidth]{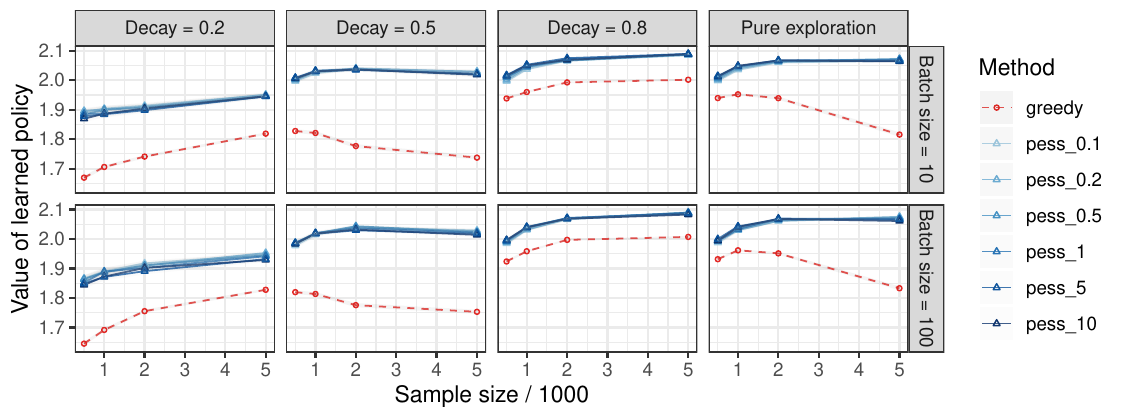}
  \caption{Evaluated value of policy learned from GPL and PPL with $\beta\in \{0.1,0.2,0.5,1,5,10\}$ for well-specified exploration, averaged over $N=200$ runs. Shaded bands are $\pm 2$ times standard deviation. Each column corresponds to one exploration scheme, and each row corresponds to one batch size.}
  \label{fig:simu_lin}
\end{figure}

\subsubsection{Mis-specified with optimal overlap}
\label{subsec:simu_cb_opt}

Figure~\ref{fig:simu_opt} shows the results when 
the linear TS sampler is mis-specified but 
the optimal action is always well-explored, meaning that 
$V(\pi^*)$ is small. 
The pattern we observe is still consistent with 
Setting 1 in Section~\ref{subsec:simu_dt}. 
This setting is strictly more challenging than the preceding one, 
but PPL maintains its superior finite-sample performance for 
appropriate choice of $\beta$. 
We also observe that PPL is now more sensitive to the choice of $\beta$. 
The decay rate of $0.5$ still seems most challenging for GPL, 
under which PPL also yields the most significant improvement.

\begin{figure}[h]
  \centering 
  \includegraphics[width=0.9\linewidth]{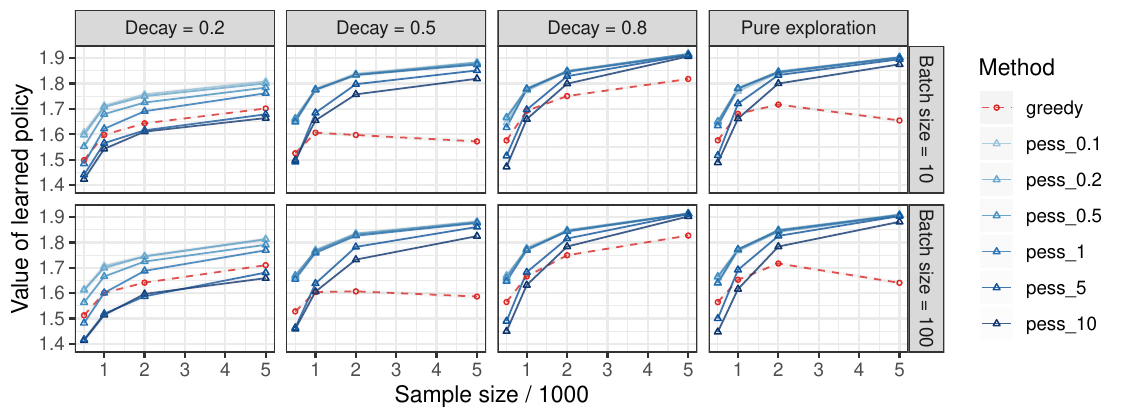}
  \caption{Evaluated value of policy learned from GPL and PPL with mis-specified exploration yet optimal overlap. Details are otherwise the same as Figure~\ref{fig:simu_lin}.}
  \label{fig:simu_opt}
\end{figure}

\subsubsection{Entirely mis-specified exploration}

Figure~\ref{fig:miss} shows the reward for GPL and PPL 
when the TS sampler is entirely mis-specified. 
This is intuitively the most challenging setting for both, since 
the exploration can be highly imbalanced. 
However, we empirically find that 
the optimal arms are still sampled sufficiently many times, 
hence the overlap  for the optimal policy is still 
good, yet not for all arms. 
As such, we still observe a similar pattern as 
Setting 1 in Section~\ref{subsec:simu_dt}: 
GPL performs relatively well with a decay rate of $0.8$ compared with other cases; 
we conjecture 
implicit regularization in the tree search algorithm. 
GPL can again fail with pure exploration 
when the uniform overlap further worsens. 
In contrast, PPL yields reliable performance in all settings.  
In addition, 
GPL already performs well when the decay rate is $0.2$, yet PPL still yields slight improvement 
in finite-sample, which requires a smaller penalty parameter $\beta$. 
We also note that the benefit of pessimism is more significant 
for  larger sample sizes, under which 
the uniform overlap is worse.

\begin{figure}[h]
  \centering 
  \includegraphics[width=0.9\linewidth]{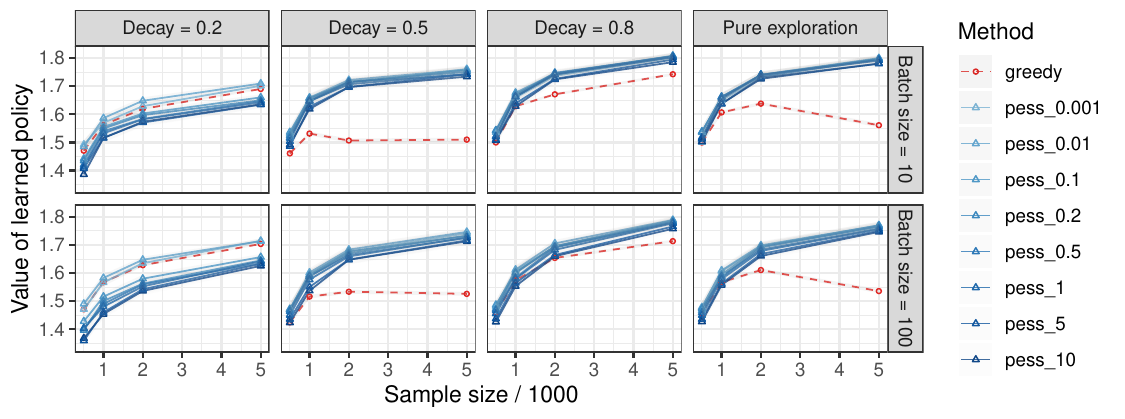}
  \caption{Evaluated value of policy learned from GPL and PPL with entire mis-specified exploration. Details are otherwise the same as Figure~\ref{fig:simu_lin}.}
  \label{fig:miss}
\end{figure}

\subsubsection{Results for cross-validation}

Finally, we evaluate whether cross-validation performs reasonably well 
in our simulations. We focus on Setting 2 (misspecified with optimal overlap)  where the choice of $\beta$ makes a significant difference, 
and only show the results 
with batch size equal to $10$ for brevity. 
We 
apply the 5-fold cross-validation in Algorithm~\ref{alg:cv} to determine the 
parameter $\hat\beta^{\rm cv}$, and run PPL with $\hat\beta^{\rm cv}$ 
on the same training data. 
The policy value is again computed on the evaluation fold.

Figure~\ref{fig:cv} shows the average evaluated reward 
when the hyper-parameter $\beta$ 
are chosen by 5-fold cross-validation. We observe that 
our cross-validation strategy is effective in selecting the 
hyper-paramter with near-optimal choice, leading to significant 
improvement upon GPL. 

\begin{figure}[h]
  \centering 
  \includegraphics[width=0.8\linewidth]{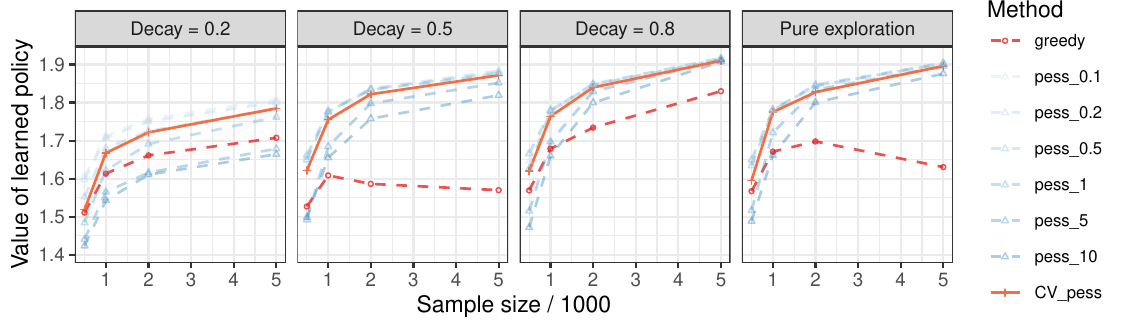}
  \caption{Evaluated value of policy learned from GPL (red, dashed line) and PPL with 5-fold cross-validation in Algorithm~\ref{alg:cv} (orange, solid line) under the setting of Section~\ref{subsec:simu_cb_opt}. The light blue curves are PPL with various fixed hyper-parameters shown for reference. Details are otherwise the same as Figure~\ref{fig:simu_lin}.}
  \label{fig:cv}
\end{figure}

\subsection{Real data experiments}
\label{subsec:real}

Finally, we test the performance of PPL 
on 33 real-world classification datasets on OpenML~\citep{OpenML2013} 
(a subset of datasets used in \cite{zhan2021policy} 
subject to availability at the time of this paper), following 
the classficiation-to-bandit transformation in \cite{zhan2021policy}. 

In specific, we treat each class as an arm, 
and generate $\mu(X_t,a)=1$ if the original label is $a$
and $0$ otherwise, as well as the semi-synthetic rewards 
$Y_t=\mu(X_t,A_t)+\epsilon_t$, where $\epsilon_t\iid N(0,0.1^2)$. 
We randomly split the data into  training   and evaluation folds.
We focus on the adaptive setting where 
a linear Thompson Sampling agent adaptively collects data on 
the training fold 
similar to Section~\ref{subsec:simu_contextual}. 
Finally, we apply GPL and PPL  
to the training fold ($\beta\in\{0.1, 0.2,0.5, 1,2,5,10,15\}$ chosen by 5-fold cross validation in Algorithm~\ref{alg:cv}), 
and compute the average reward of the learned policy on the evaluation fold. 
We plot the absolute improvement in the evaluated policy value 
of PPL upon that of GPL in Figure~\ref{fig:real}.

\begin{figure}[h]
  \centering 
  \includegraphics[width=0.9\linewidth]{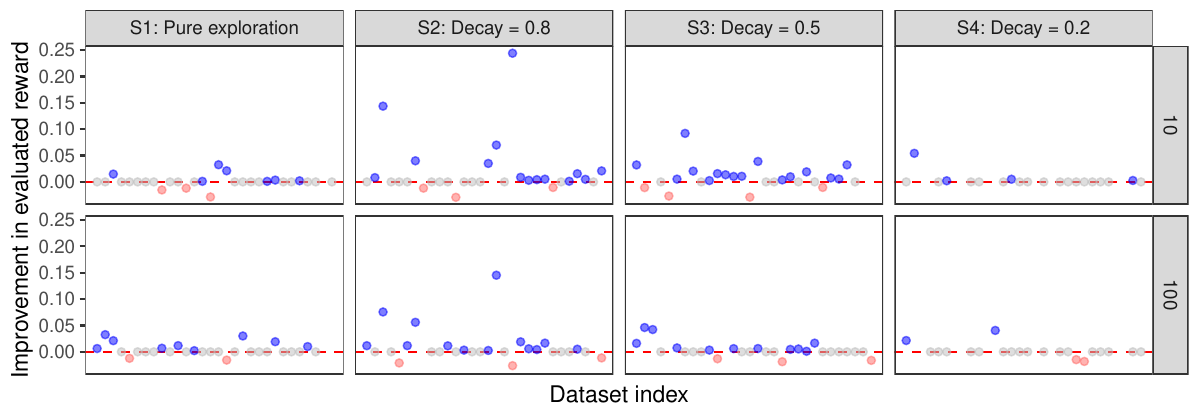}
  \caption{Improvement in the evaluated policy value learned from PPL (with $\beta$ chosen by 5-fold cross validation) compared with that from the greedy algorithm for all real datasets. Each column corresponds to one exploration scheme, and each row corresponds to one batch size ($10$ or $100$). The $x$-axis is the index for a dataset.}
  \label{fig:real}
\end{figure}

The message from the real datasets is largely consistent with 
our simulation studies. 
PPL outperforms GPL in many of the real datasets (marked in blue). 
In particular, the margin is significant 
when the decay rate is moderately large, such as $0.5$ and $0.8$. 
GPL and PPL achieve comparable performance at a decay rate of $0.2$, 
where there is no huge gap between the uniform overlap and optimal overlap. 
The results also show that our cross-validation strategy 
is able to find hyper-parameters that lead to favorable performance.

\section*{Discussion}
In this paper, we introduce a new algorithm based on the pessimism principle 
for policy learning 
from offline data (collected by either fixed or adaptive behavior policies). 
We show that the proposed  
procedure promises efficient learning {\em as long as}
the optimal policy is sufficiently covered by the offline
data set, in stark contrast to existing methods in the literature
that require uniform coverage for all policies. 
Our theoretical guarantee builds upon self-normalized
processes; as the key technical tool, we develop a generalized empirical 
Bernstein inequality that 
provides uncertainty quantification for estimators with unbounded empirical losses. 
We also devise practical algorithms and demonstrate their relevance 
via extensive experiments. 
We  close the paper by pointing out several potential future directions. 

\paragraph{Extension to ERM.}  
In this work, we have primarily focused on 
policy learning, which is a special case
within the general ERM framework. 
An interesting future research avenue is 
to extend our method to more 
general ERM problems such as regression~\citep{swaminathan2015batch}. 
The challenge there is to establish uniform concentration inequalities
for more complex function classes, 
for which additional techniques might be 
needed.

\paragraph{Efficient and scalable algorithms.} 
We present both theoretical aspects of pessimistic policy learning 
and practical implementation with policy tree search. 
A natural follow-up step is to improve the optimization component of our algorithm 
and develop variants that are  applicable to more general 
policy classes while preserving 
similar theoretical properties.  

\paragraph{Extension to sequential decision making.}
This work considers policy optimization in contextual bandits 
using IPW-type estimators, where we 
design a ``model-free'' pessimistic penalty without referring to 
parametric forms or function approximation. 
It will be interesting to see if this approach 
can be applied to reweighting-based methods in sequential decision making such as 
offline reinforcement learning.


\subsection*{Acknowledgement}
Z.~R.~was supported by supported by the Office of Naval Research via grant N00014-20-1-2337.
The authors thank Kevin Guo, Lihua Lei, Aaditya Ramdas, Dominik Rothenh\"ausler, Csaba Szepesv\'ari, 
Stefan Wager, Davide Viviano, 
and Ruohan Zhan
for helpful discussions and feedbacks.

\newpage
\bibliographystyle{apalike}
\bibliography{reference}

\newpage 
\appendix 

\section{Deferred details}


\subsection{Real datasets}
The names of real datasets on the OpenML platform 
used in Section 7.3 are:

\begin{quote}
\texttt{`waveform-5000', `Long', `cmc', `artificial-characters', \\
`Click\_prediction\_small', `skin-segmentation', `allrep', \\
`mfeat-morphological', `satellite\_image',\\ 
`jungle\_chess\_2pcs\_endgame\_elephant\_elephant', `wilt', `Satellite',  `ringnorm', 
`mammography', `delta\_ailerons', `PhishingWebsites', `splice', `pendigits', `texture', `cardiotocography', `volcanoes-d4', `volcanoes-b3', `dis', `optdigits', `electricity', `kr-vs-kp', `bank-marketing', `satimage', `MagicTelescope', `houses', `eeg-eye-state', `car', `segment'}
\end{quote}

\subsection{Cross-fitted pessimistic policy learning}
\label{app:subsec_crossfit}

In this section, we provide a cross-fitted version of 
our algorithm for the fixed behavior policy case. 
We first randomly split the index set $[T]$ into two disjoint folds $\cT_1$ and $\cT_2$
of size $T/2$, where we assume $T/2\in\NN$ without loss of generality. 
For $k=1,2$ we use $\{X_t,A_t,Y_t\}_{t\notin \cT_k}$ 
to obtain an estimator $\hat\mu^{(k)}$ for $\mu(x,a)=\EE[Y(a)\given X=x]$, 
and construct the cross-fitted estimator: for $t\in \cT_k$,  
\$
\hat{Q}(\pi) = \frac{1}{T}\sum_{t=1}^T \hat\Gamma_t(\pi),\quad 
\hat\Gamma_t(\pi) = \hat\mu^{(k)}(X_t,\pi(X_t)) + \frac{\ind\{A_t=\pi(X_t)\}}{e (X_t,\pi(X_t) )} \cdot\big(Y_t-\hat\mu^{(k)}(X_t,\pi(X_t))\big).
\$
We then have $\hat\mu^{(k)}$ is independent of $\{X_t,A_t,Y_t\}_{t\in \cT_k}$. 
The estimation error of $\hat{Q}(\pi)$ can be bounded as 
\$
\big| \hat{Q}(\pi) - Q(\pi)\big| \leq \frac{|\hat{Q}^{(1)}(\pi) -Q(\pi)| }{2} + \frac{|\hat{Q}^{(2)}(\pi) -Q(\pi) | }{2},
\$
where each $|\hat{Q}^{(k)}(\pi)-Q(\pi)$ reduces to the case 
in Section~\ref{sec:fix} with sample size $|\cT_k|=T/2$. 
Using similar ideas as those in Section~\ref{sec:fix}, we construct 
the regularization term 
$R(\pi) = (R^{(1)}(\pi)+R^{(2)}(\pi))/2$, where $R^{(k)}(\pi) = \beta \cdot V^{(k)}(\pi)$, and 
\$
V^{(k)}(\pi) \defn  
\max\big\{\vs^{(k)}(\pi), \vp^{(k)}(\pi), \vh^{(k)}(\pi)\big\} 
\$
for  
\$
 & \vs^{(k)}(\pi) = \frac{2}{T}  
\bigg( \sum_{t\in \cT_k} \frac{\ind \{A_t = \pi(X_t)\}}{e(X_t,\pi(X_t) )^2}\bigg)^{1/2}, \notag  \\
 & \vp^{(k)}(\pi) = \frac{2}{T}
\bigg( \sum_{t\in \cT_k} \frac{1}{e(X_t,\pi(X_t) )}\bigg)^{1/2},  \quad \vh^{(k)}(\pi) = \frac{2}{T}
\bigg( \sum_{t\in \cT_k} \frac{1}{e(X_t,\pi(X_t) )^3}\bigg)^{1/4}. \notag
\$
The uniform concentration results can be similarly obtained 
as in Theorem~\ref{thm:fix_upper} by a union bound over $k=1,2$ and taking 
\$
\beta\geq  10\sqrt{2(\ndim(\Pi) \log(TK^2/2) + \log(4/\delta))}.
\$
We omit the details for brevity. 

\subsection{Deferred numerical experiments}

\subsubsection{Comparison with clipped method in MAB settings}
\label{eq:subsubsec_MAB_clip}

Figure~\ref{fig:mab_clip} compares PPL with the method of \cite{swaminathan2015batch} for i.i.d.~data in MAB, where the inverse propensities are clipped before applying variance-based penalized welfare maximization. We see that clipping introduces a bias which deteriorates the performance (suboptimality). 

\begin{figure}[H]
  \centering 
  \includegraphics[width=0.8\linewidth]{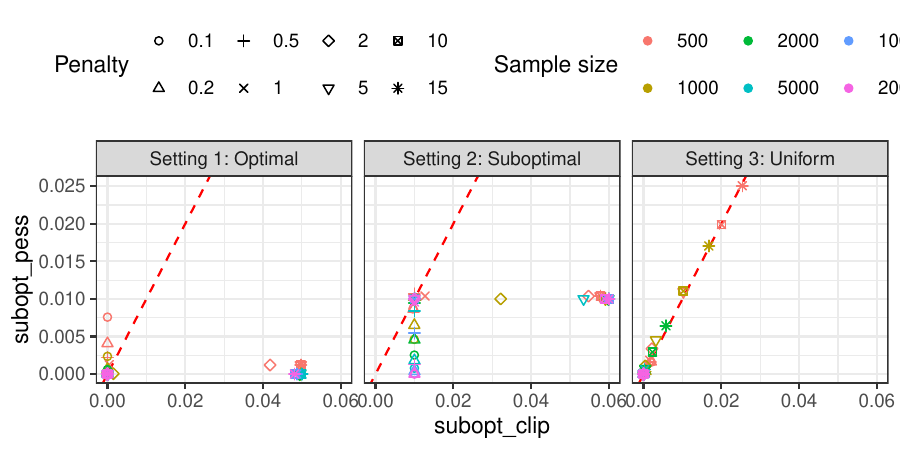}
  \caption{Average rescaled suboptimality of the learned policy  from PPL ($y$-axis) and the method of \cite{swaminathan2015batch} ($x$-axis) with various penalty parameter $\beta$ at various sample sizes in the three settings of Section~\ref{subsubsec:mab} in the main text.}
  \label{fig:mab_clip}
\end{figure}

\subsubsection{Selection frequency in MAB experiments}
\label{subsec:map_freq}

Figure~\ref{fig:mab_freq} plots the frequency of selecting each arm using the learned policy from both GPL and PPL in the simulations in Section~\ref{subsubsec:mab} of the main text.

\begin{figure}[h]
  \centering 
  \includegraphics[width=\linewidth]{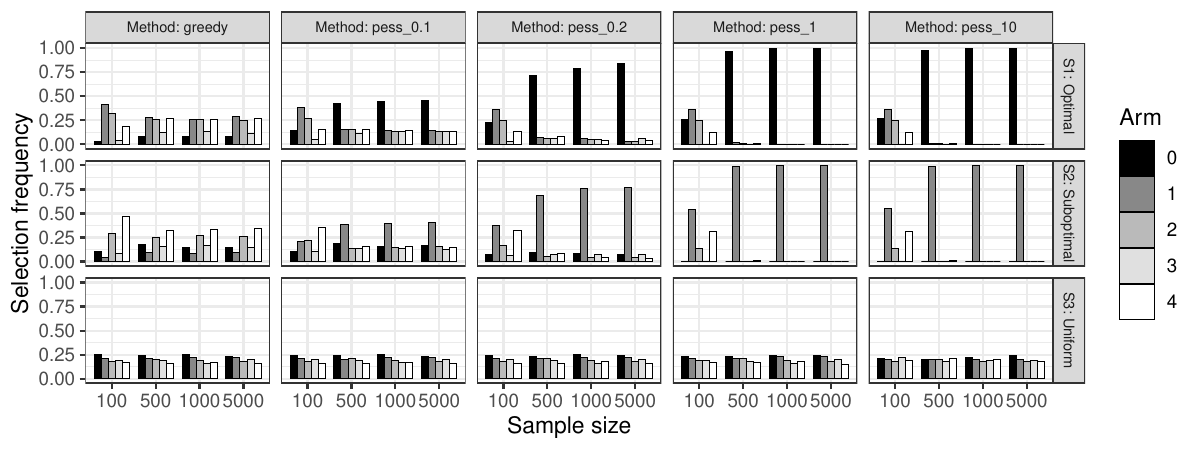}
  \caption{Frequency of selecting each arm using the learned policy from GPL and PPL with $\beta\in \{0.1, 0.2, 1, 10\}$, averaged over $N=1000$ independent runs. Arms are colored from darkest to lightest according to highest to lowest rewards. Each row corresponds to one setting, 
  and each column is a method.}
  \label{fig:mab_freq}
\end{figure}

\subsection{Deferred discussion on optimization}
\label{app:subsec_opt_discuss}

In the following, we compare the optimization aspects of PPL with several genres of related methods:
\begin{itemize}
    \item \textbf{Learning over a policy class.} When using IPW estimator to learn over a policy class, state-of-the-art methods that come with optimization schemes~\citep{athey2021policy,zhou2022offline,zhan2021policy} all rely on tree-based search algorithms to learn policies based on flexible decision trees or random forests. Notably, they need an exhaustive search over all possible trees of a fixed depth, with is the same as ours. To facilitate efficient computation, they also need to rely on approximations without clear guarantees (though these methods tend to work well in practice). As such, our strategy is comparable to them. 
    
    \item \textbf{Penalized ERM-based policy learning.} \cite{swaminathan2015batch} uses a similar variance-penalized ERM (with clipped propensities) for policy learning. Our optimization algorithm is indeed inspired by their strategies, where they also need to use approximate optimization that can only coverge to a local minima. 
    
    \item \textbf{Other structure-based pessimism approach.} Other pessimism-based algorithms that rely on structured reward functions are also either purely theoretical exploration without a practical implementation~\citep{xie2021bellman,yin2021towards}, 
    or admit a closed-form solution that is similarly nonconvex (such as with linear function classes~\citep{jin2021pessimism}). 
    For the latter, the optimization aspects of the nonconvex problem have also inspired a line of research that aims to approximately reduce (but not fully address) the computational issue via e.g., random perturbations~\citep{nguyen2023viper}. 
    As such, we view the optimization issue as pertinent to pessimism-based methods whose exact solution goes way beyond the scope of the current work. 
\end{itemize}

\subsection{Deferred discussion on unknown propensity scores}
\label{app:subsec_unknown_propensity}

\begin{remark}[Extension to unknown propensities, continued]\normalfont
  We have assumed knowledge of the true propensities $e_t(\cdot,\cdot)$. 
         For batched data, this is a standard setting considered in the literature~\citep{kitagawa2018should,zhao2012estimating}. 
         For future work, it would be interesting to extend PPL to  
         observational studies where the fixed behavior policy $e(\cdot,\cdot)$ has 
         to be estimated from data~\citep{athey2021policy}. 
         We expect this to be a highly nontrivial task, especially 
         when it is estimated via nonparametric regression or machine learning algorithms 
         that do not rely on strong modeling assumptions~\citep{chernozhukov2018double}. 
         In specific, standard analysis of AIPW estimators with estimated 
         propensity scores $\hat{e}(\cdot,\cdot)$ heavily relies on the uniform overlap condition, 
         under which the impact of estimation error (which is often characterized 
         by the $L_2$-norm or sup-norm of $\hat{e}(\cdot,\cdot)-e(\cdot,\cdot)$)
         becomes negligible in  inverse-propensity weighting. 
         However, without uniform overlap, it is unclear how to  (i) characterize 
         the estimation error (since actions with very small propensity rarely appear), 
         (ii) characterize the impact of estimation error on the AIPW estimator, 
         and (iii) translate them into a data-dependent, policy-dependent, 
         and finite-sample valid upper bound $R(\pi)$. 
         Even if (i) and (ii) are plausible to be  addressed 
         with assumptions on the model of $\hat{e}(\cdot,\cdot)$, 
         the last step (iii) remains challenging. 
         We thus leave this extension for future work. 
         Finally, we  note that knowledge of $e_t(\cdot,\cdot)$ is necessary for learning from 
         adaptively collected data, which we study in Section~\ref{sec:adaptive}. Estimating $e_t(\cdot,\cdot)$ 
         is infeasible for adaptive data  
 with one trajectory without strong assumptions~\citep{zhan2021policy}, 
 and the issues for uncertainty quantification with estimated propensities 
 we discuss above still persist.
\end{remark}

\section{Proof sketch of main results}


This section provides proof sketches for the main results.
Section~\ref{subsec:sketch_iid} focuses on the upper bound
in the fixed policy case and Section~\ref{subsec:proof_upper_adapt}
that in the adaptive policy case. Throughout, we omit 
the dependence on $\cC$ to simplify the notation
when no confusion can arise given the context.

\subsection{Proof of  upper bound for the fixed policy case}
\label{subsec:sketch_iid}
\begin{proof}[Proof of Theorem 4.1]
We start by showing that the (rescaled) estimation error  
$\frac{\hat{Q}(\pi) - Q(\pi)}{V(\pi)}$ is uniformly 
bounded over $\pi\in \Pi$. Recall that $\mu(x,a) 
\defn \EE\big[Y_t(a) \given X_t =x ]$
for any $(x,a) \in \cX \times \cA$, 
and we decompose the error as follows.
\$
\bigg|\frac{\hat{Q}(\pi) - Q(\pi)}{V(\pi)}\bigg|
& = \bigg|\frac{1}{T} \sum^T_{t=1} \frac{\hat \Gamma_t(\pi) - Q(\pi) }{V(\pi)} \bigg|\\
& \le \underbrace{\bigg|\frac{1}{T}\sum^T_{t=1}\frac{ \hat{\Gamma}_t(\pi) - \mu(X_t,\pi(X_t))}{V(\pi)} \bigg|}_{\displaystyle \rm term (i)}
+ \underbrace{\bigg|\frac{1}{T}\sum^T_{t=1} \frac{ \mu(X_t,\pi(X_t)) - Q(\pi)}{V(\pi) }\bigg|}_{\displaystyle\rm term (ii)}.
\$ 
We show in steps I and II how to bound term (i),
and establish the upper bound for term (ii) in step III. 

\noindent\textit{Step I: Symmetrization via a Rademacher process.}
For each $t\in [T]$, we let $(A_t',Y_t')$ be an independent copy
of $(A_t,Y_t)$ conditional on $X_t$, i.e.,
\$
(A_t',Y_t') \indep (A_t,Y_t) \given X_t 
\quad \mbox{and}\quad
(A_t',Y_t')\given X_t \stackrel{\rm d}{=} (A_t,Y_t) \given X_t. 
\$
Based on these independent copies, we define
\$
& \hat \Gamma'_t(\pi) = \hat{\mu}(X_t,\pi(X_t))
+ \frac{\ind\{A_t' = \pi(X_t)\}}{e(X_t,\pi(X_t))}
\cdot \big(Y_t' - \hat{\mu}(X_t,\pi(X_t))\big),\\ 
\mbox{ and }~  
& V_{\rm s}'(\pi) = \frac{1}{T} \bigg(\sum^T_{t=1} \frac{\ind\{A_t' = \pi(X_t)\}}
{e(X_t,\pi(X_t))}\bigg)^{1/2}.
\$
We are to use the exchangeability between $\{A_t,Y_t\}_{t=1}^T$ 
and $\{A_t',Y_t'\}_{t=1}^T$ conditional on $\{X_t\}_{t=1}^T$ 
to turn the uniform concentration to tail bounds 
on a Rademacher process. 

To this end, we let $x = (x_1,\ldots,x_T)$,
$a = (a_1,\ldots,a_T)$,
$a' = (a'_1,\ldots,a'_T)$,
$y = (y_1,\ldots,y_T)$ 
and $y' = (y_1',\ldots,y_T')$ 
denote any realized values
of $\{X_t\}_{t=1}^T$,
$\{A_t\}_{t=1}^T$,
$\{A_t'\}_{t=1}^T$,
$\{Y_t\}_{t=1}^T$,
and $\{Y_t'\}_{t=1}^T$,
respectively. We then define
\$
              & \gamma_t(\pi) = \hmu(x_t,\pi(x_t)) + 
\frac{\ind\{a_t = \pi(x_t)\}}{e(x_t,\pi(x_t))}
\cdot\{y_t - \hmu(x_t,\pi(x_t))\}\\
\mbox{ and}\quad &
\gamma_t'(\pi) = \hmu(x_t,\pi(x_t)) + 
\frac{\ind\{a_t' = \pi(x_t)\}}{e(x_t,\pi(x_t))}
\cdot\{y_t' - \hmu(x_t,\pi(x_t))\}
\$
as the corresponding realization of $\hat\Gamma_t(\pi)$ and $\hat\Gamma_t'(\pi)$. 
The following lemma bounds the probability of term (i)
exceeding a constant $\xi \ge 4$ by the tail probability 
of a Rademacher process, and its proof is in Appendix~\ref{app:proof_symmetrization}.
\begin{lemma}
\label{lemma:symmetrization}
For any constant $\xi \ge 4$, it holds that 
\$
& \PP\bigg(\sup_{\pi \in \Pi} \Big| \frac{1}{T} 
\sum^T_{t=1} \frac{\hgamma_t(\pi) - \mu(X_t,\pi(X_t))}
{V(\pi)}\Big| \ge \xi 
\bigg)\\
&  \le \sup_{x,a,a,',y,y'}
2 \PP_{\eps}\Bigg(    \bigg|\sum^T_{t=1} \eps_t \cdot (\hat{\gamma}_t(\pi) - \hat{\gamma}'_t(\pi))\bigg|  \\
& \qquad \qquad \qquad \qquad \ge \frac{\xi}{8}\cdot \bigg\{\sum^T_{t=1} \frac{(\ind\{a_t = \pi(x_t)\} + \ind\{a_t' = \pi(x_t)\})^2}
{e(x_t,\pi(x_t))^2} \bigg\}^{1/2}  ~\textnormal{for all }\pi\in\Pi\Bigg),
\$
where $\PP_{\eps}$ is the probability over 
i.i.d.~Rademacher random variables $\eps_1,\ldots,\eps_T
\stackrel{\textnormal{i.i.d.}}{\sim} \textnormal{Unif}\{\pm 1\}$ while holding other quantities fixed at their realized values.
\end{lemma}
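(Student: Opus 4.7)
The plan is to execute the classical two-step symmetrization---ghost sample followed by Rademacher signs---adapted to the self-normalized deviation at hand. Write $Z_t(\pi) := \hat\Gamma_t(\pi) - \mu(X_t,\pi(X_t))$ and $Z'_t(\pi) := \hat\Gamma'_t(\pi) - \mu(X_t,\pi(X_t))$. Unbiasedness of the AIPW estimator gives $\EE[Z_t(\pi)\mid X_t] = \EE[Z'_t(\pi)\mid X_t] = 0$, and by construction $(A'_t,Y'_t)$ is a conditional i.i.d.\ copy of $(A_t,Y_t)$ given $X_t$, so the ghost variables are also conditionally mean-zero and independent across $t$.

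For the first (ghost-sample) step I would choose a measurable selector $\pi^\dagger$ of the supremum on the event $A := \{\sup_{\pi\in\Pi}|\sum_t Z_t(\pi)|/[T V(\pi)] \ge \xi\}$. A short calculation using $Y_t \in [0,1]$ and $\hmu \in [0,1]$ bounds $\Var(Z'_t(\pi)\mid X_t) \le 1/e(X_t,\pi(X_t))$, so by conditional independence across $t$ one obtains $\Var(\sum_t Z'_t(\pi^\dagger)\mid X,A,Y) \le T^2 \vp(\pi^\dagger)^2 \le T^2 V(\pi^\dagger)^2$. Chebyshev's inequality applied conditionally on the original sample then gives $\PP(|\sum_t Z'_t(\pi^\dagger)| \ge \xi T V(\pi^\dagger)/2 \mid \mathrm{sample}) \le 4/\xi^2 \le 1/4$ for $\xi \ge 4$, and the triangle inequality yields $\PP(A) \le 2 \PP(\exists \pi \in \Pi: |\sum_t (Z_t - Z'_t)(\pi)| \ge \xi T V(\pi)/2)$.

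For the Rademacher step I would exploit that, conditional on $X_1,\ldots,X_T$, the pair $((A_t,Y_t),(A'_t,Y'_t))$ is exchangeable for each $t$, so introducing i.i.d.\ Rademacher signs $\eps_t$ and swapping the $t$-th pair when $\eps_t = -1$ leaves the joint law unchanged. The ghost-event probability therefore equals that of the event obtained by replacing $\sum_t(\hat\Gamma_t-\hat\Gamma'_t)(\pi)$ with $\sum_t \eps_t(\hat\Gamma_t-\hat\Gamma'_t)(\pi)$ and $V(\pi)$ with the swapped $V^\eps(\pi)$. The components $\vp,\vh$ depend only on $\{X_t\}$ and are invariant under the swap, while the sample deviation obeys $(V_{\rm s}^\eps(\pi))^2 T^2 \le \sum_t (\ind\{A_t=\pi(X_t)\}+\ind\{A'_t=\pi(X_t)\})^2 / e_t(X_t,\pi(X_t))^2$ via $\ind\{A^\eps_t=\pi(X_t)\} \le \ind\{A_t=\pi(X_t)\} + \ind\{A'_t=\pi(X_t)\}$. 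Conditioning on a realization $(x,a,a',y,y')$ of the data then reduces everything to a purely Rademacher statement, and taking a supremum over realizations gives the form of the bound.

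The step I expect to be the main obstacle is matching the constant $\xi/8$. Since $V(\pi)$ contains the swap-invariant $\vp,\vh$ alongside the swap-dependent $\vs$, the $\xi TV(\pi)/2$ threshold coming out of the ghost step does not directly dominate $(\xi/8)\sqrt{\sum(\ind_a+\ind_{a'})^2/e^2}$; one needs an additional symmetrization of the normalizer. A natural route is to lower-bound $V(\pi) \ge \tfrac12(V(\pi)+V'(\pi))$ after a union bound across the $A\leftrightarrow A'$ symmetry---which costs a further factor of $2$---and then apply $T(\vs+V'_{\rm s}) \ge \sqrt{\sum_t(\ind_a+\ind_{a'})^2/e^2}$ (a consequence of $\sum \ind_a \ind_{a'}/e^2 \le \sqrt{\sum\ind_a/e^2 \cdot \sum\ind_{a'}/e^2}$). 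Stitching these pieces together, while verifying the measurable selector is compatible with the subsequent conditioning, is what ultimately produces the claimed $\xi/8$ scaling and prefactor $2$.
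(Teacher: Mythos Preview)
Your overall architecture (ghost sample to pass from $\sum Z_t$ to $\sum(Z_t-Z'_t)$, then Rademacher signs via exchangeability of $(A_t,Y_t)\leftrightarrow(A'_t,Y'_t)$ conditional on $X_t$) is the same as the paper's. You also correctly identify the crux: after the ghost step you arrive at a threshold $(\xi/2)\,T\,V(\pi)$, and you must convert this into $(\xi/8)\{\sum_t(\ind_{a_t}+\ind_{a'_t})^2/e^2\}^{1/2}$, which amounts to showing $V(\pi^\dagger)\ge\tfrac14(V_{\rm s}(\pi^\dagger)+V'_{\rm s}(\pi^\dagger))$.

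The gap is in your proposed fix. The inequality $V(\pi)\ge\tfrac12(V(\pi)+V'(\pi))$ is equivalent to $V(\pi)\ge V'(\pi)$, which is not generally true, and the $A\leftrightarrow A'$ symmetry argument goes the wrong way: writing $P=\PP(\exists\pi:|\sum|\ge(\xi/2)TV)$ and $P'=\PP(\exists\pi:|\sum|\ge(\xi/2)TV')$, symmetry gives $P=P'$, but a union bound yields $P+P'\ge\PP(|\sum|\ge(\xi/2)T\min(V,V'))$, which does \emph{not} imply $P\le 2\,\PP(|\sum|\ge(\xi/4)T(V+V'))$. (And even if it did, your prefactor budget would be $4$, not $2$.)

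The paper resolves this by a \emph{second} Chebyshev applied to the ghost normalizer itself. Define $\cE_2=\{V'_{\rm s}(\pi^\dagger)\ge 2\max(V_{\rm h}(\pi^\dagger),V_{\rm p}(\pi^\dagger))\}$. Since $T^2 V'_{\rm s}(\pi)^2=\sum_t\ind\{A'_t=\pi(X_t)\}/e(X_t,\pi(X_t))^2$ has conditional mean $T^2 V_{\rm p}(\pi)^2$ and conditional variance at most $T^4 V_{\rm h}(\pi)^4$ given the original sample, Chebyshev gives $\PP(\cE_2\mid\{X_t,A_t,Y_t\})\le 1/4$. Combining this with your $\cE_1$ bound (also $\le 1/4$) yields $\PP(\cE_1^c\cap\cE_2^c\mid\text{sample})\ge 1/2$ and hence the single prefactor $2$. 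On $\cE_2^c$ one has $V'_{\rm s}\le 2\max(V_{\rm h},V_{\rm p})\le 2V$, so $V\ge\max(V_{\rm s},V'_{\rm s}/2)\ge\tfrac14(V_{\rm s}+V'_{\rm s})$, and the $\ell_2$ triangle inequality finishes as you outlined. This is precisely why the higher-order term $V_{\rm h}$ appears in the definition of $V(\pi)$: it is what makes the ghost normalizer controllable.
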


\noindent\textit{Step II: Uniform concentration.}  
The next lemma establishes a uniform upper bound for 
the tail probability of the Rademacher process in terms of the sample size $T$ and 
the complexity of $\Pi$, whose proof is provided in Appendix~\ref{app:subsec_rademacher}.

\begin{lemma}
\label{lemma:tail_rademacher}
For any fixed $\delta \in (0,1)$ 
and any fixed realizations $x,a,a',y,y'$ 
of $\{X_t\}_{t=1}^T$,
$\{A_t\}_{t=1}^T$,
$\{A_t'\}_{t=1}^T$,
$\{Y_t\}_{t=1}^T$,
and $\{Y_t'\}_{t=1}^T$, the event
\$
\Big|\sum^T_{t=1} \eps_t \cdot (\hat{\gamma}_t(\pi) - \hat{\gamma}'_t(\pi))\Big|
& \le 2\sqrt{2\big(\ndim(\Pi) \log (TK^2) + \log(2/\delta)\big)}\\
&   \times \Big\{\sum^T_{t=1} \frac{(\ind\{a_t = \pi(x_t)\} + \ind\{a_t' = \pi(x_t)\})^2}
{e(x_t,\pi(x_t))^2} \Big\}^{1/2},
\quad \mbox{for all }\pi \in \Pi
\$ 
holds 
with probability at 
least $1-\delta$ over the randomness of $\varepsilon_1,\dots,\varepsilon_T$.
\end{lemma}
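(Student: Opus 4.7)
The plan is to reduce the uniform statement over $\pi \in \Pi$ to a finite union bound, exploiting the fact that both the Rademacher sum and its normalizer depend on $\pi$ only through the induced labeling $(\pi(x_1), \ldots, \pi(x_T))$ on the fixed design points $x_1, \ldots, x_T$.

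First, I would fix an arbitrary $\pi \in \Pi$ and apply Hoeffding's inequality to the Rademacher sum $\sum_{t=1}^T \varepsilon_t (\hgamma_t(\pi) - \hgamma_t'(\pi))$. Since $y_t, y_t', \hmu(x_t, \pi(x_t)) \in [0,1]$ by Assumption~\ref{assump:data}(a), we have the per-term bound
\[
\bigl|\hgamma_t(\pi) - \hgamma_t'(\pi)\bigr| \le \frac{\ind\{a_t = \pi(x_t)\} + \ind\{a_t' = \pi(x_t)\}}{e(x_t, \pi(x_t))},
\]
which makes the signed sum sub-Gaussian with variance proxy at most $S(\pi)^2 := \sum_{t=1}^T \frac{(\ind\{a_t = \pi(x_t)\} + \ind\{a_t' = \pi(x_t)\})^2}{e(x_t, \pi(x_t))^2}$. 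Hoeffding's inequality then yields a tail of the form $2\exp(-u^2/2)$ for the event that $\bigl|\sum_t \varepsilon_t(\hgamma_t(\pi) - \hgamma_t'(\pi))\bigr|$ exceeds $u \cdot S(\pi)$.

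Second, the key combinatorial observation is that $\hgamma_t(\pi) - \hgamma_t'(\pi)$ and $S(\pi)$ depend on $\pi$ only through the vector $(\pi(x_1), \ldots, \pi(x_T)) \in \cA^T$, so any two policies agreeing on $\{x_1, \ldots, x_T\}$ induce identical Rademacher sums and identical normalizers. Hence the supremum over $\pi \in \Pi$ is really a maximum over the finite quotient set $\Pi_{|x} := \{(\pi(x_1), \ldots, \pi(x_T)) : \pi \in \Pi\}$. Invoking the multi-class Sauer--Shelah-type growth function bound (Natarajan; Haussler--Long), this quotient satisfies $|\Pi_{|x}| \le (TK^2)^{\ndim(\Pi)}$. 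Applying the single-policy Hoeffding bound with $\delta' = \delta/|\Pi_{|x}|$ and taking a union bound over the (at most) $(TK^2)^{\ndim(\Pi)}$ equivalence classes produces the stated inequality, with the leading constant $2\sqrt{2}$ in the claim absorbing the slack between $\sqrt{2(\ndim(\Pi)\log(TK^2)+\log(2/\delta))}$ from Hoeffding and the constants arising in the Natarajan growth function bound.

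The main technical input I would rely on is the Natarajan-style growth function bound $|\Pi_{|x}| \le (TK^2)^{\ndim(\Pi)}$ for multi-action policy classes; this is the classical combinatorial ingredient replacing Sauer--Shelah in the binary setting, and it is the step that is not entirely routine to derive from scratch (though it can be cited directly). Everything else---Hoeffding's inequality for Rademacher sums with bounded coefficients, and the elementary observation that the relevant random quantities factor through the labeling of $\{x_1, \ldots, x_T\}$---is straightforward given the bounded-reward assumption and the algebraic form of $\hgamma_t(\pi)$.
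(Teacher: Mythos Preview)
Your proposal is correct and follows essentially the same approach as the paper's proof: bound each summand $|\hat\gamma_t(\pi)-\hat\gamma_t'(\pi)|$ using $y_t,y_t',\hmu\in[0,1]$, apply Hoeffding's inequality for a single $\pi$, then union-bound over the finite set $\Pi_{|x}$ using the Natarajan growth bound $|\Pi_{|x}|\le (TK^2)^{\ndim(\Pi)}$. Your direct triangle-inequality bound $|\hat\gamma_t(\pi)-\hat\gamma_t'(\pi)|\le (\ind\{a_t=\pi(x_t)\}+\ind\{a_t'=\pi(x_t)\})/e(x_t,\pi(x_t))$ is in fact a factor of $2$ sharper than the paper's squared-term decomposition, so your argument actually delivers the constant $\sqrt{2}$ rather than the stated $2\sqrt{2}$, which is only better.
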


Taking $\xi=16\sqrt{2\big(\ndim(\Pi) \log (TK^2) + \log(8/\delta)\big)}$, 
combining Lemmas~\ref{lemma:symmetrization} and~\ref{lemma:tail_rademacher}, 
we know that 
\#\label{eq:fix_bd_i}
 \sup_{\pi \in \Pi} \frac{1}{T} \cdot 
\frac{|\sum^T_{t=1} \hgamma_t(\pi) - \mu(X_t,\pi(X_t))|}
{V(\pi)} \le  8\sqrt{2 \cdot \ndim(\Pi) \cdot \log (TK^2) + 2\log(8/\delta) }
\#
with probability at least $1-\delta/2$. This controls term (i). 

\noindent\textit{Step III: Controlling term (ii).}  
Next, we use similar techniques as in Step I and Step II 
to develop a uniform bound
for term (ii). 
The proof of the next lemma is deferred to Appendix~\ref{app:subsec_bounded_part}.
\begin{lemma}
\label{lemma:bounded_part}
For any $\delta \in (0,1)$, with probability at least $1-\delta$, there is
\$
\sup_{\pi \in \Pi} \, \frac{|\sum^T_{t=1} Q(X_t,\pi) - Q(\pi)|}{V(\pi)}
\le 2\sqrt{2 \cdot \ndim(\Pi) \cdot \log (TK^2) + 2\log(8/\delta) }.
\$
\end{lemma}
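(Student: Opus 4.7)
\textbf{Proof plan for Lemma~\ref{lemma:bounded_part}.} The plan is to exploit the fact that the summand $\mu(X_t,\pi(X_t)) - Q(\pi)$ is bounded in $[-1,1]$ and has mean zero under $\PP_X$, in sharp contrast to term (i), where the summands are inflated by inverse propensities. In the batched setting the covariates $X_1,\ldots,X_T$ are i.i.d., which makes a classical bounded empirical process argument applicable; no self-normalization or tree symmetrization is needed.

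The key simplification is a crude but tight lower bound on the denominator: since $e(x,a)\le 1$ for every $(x,a)$, we have $1/e(X_t,\pi(X_t))\ge 1$ deterministically, so
\[
V(\pi) \;\ge\; \vp(\pi) \;=\; \frac{1}{T}\bigg(\sum_{t=1}^T \frac{1}{e(X_t,\pi(X_t))}\bigg)^{1/2} \;\ge\; \frac{1}{\sqrt{T}}.
\]
Substituting this into the denominator reduces the assertion to the standard uniform deviation
\[
\sup_{\pi\in\Pi}\;\frac{1}{\sqrt{T}}\bigg|\sum_{t=1}^T \big(\mu(X_t,\pi(X_t))-Q(\pi)\big)\bigg| \;\le\; 2\sqrt{2\,\ndim(\Pi)\log(TK^2)+2\log(8/\delta)}
\]
for the $[0,1]$-valued function class $\{x\mapsto\mu(x,\pi(x)):\pi\in\Pi\}$.

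I would then proceed in two steps that parallel, but are substantially simpler than, those in the proof of Lemma~\ref{lemma:tail_rademacher}. First, introduce an independent ghost sample $X_t'\iid \PP_X$ and Rademacher signs $\varepsilon_t$ and apply classical symmetrization to bound the tail of the supremum above by twice the tail of $\sup_\pi\big|\sum_t \varepsilon_t\big(\mu(X_t,\pi(X_t))-\mu(X_t',\pi(X_t'))\big)\big|$, conditional on $(X_{1:T},X_{1:T}')$; plain (non-tree) symmetrization is sufficient because the data are i.i.d. Second, conditional on $(X_{1:T},X_{1:T}')$, the Sauer--Shelah bound for Natarajan dimension guarantees that the class of evaluation vectors $\big\{(\pi(X_1),\ldots,\pi(X_T),\pi(X_1'),\ldots,\pi(X_T'))\colon \pi\in\Pi\big\}$ has cardinality at most $(2TK^2)^{\ndim(\Pi)}$. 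The conditional Rademacher supremum therefore reduces to a maximum of at most $(2TK^2)^{\ndim(\Pi)}$ sub-Gaussian random variables with proxy at most $2\sqrt{T}$ (since each increment $|\mu(X_t,\pi(X_t))-\mu(X_t',\pi(X_t'))|\le 1$), and a union bound on Hoeffding's inequality delivers the rate $\sqrt{T\cdot\big(\ndim(\Pi)\log(TK^2)+\log(1/\delta)\big)}$ on this supremum, which after dividing by $\sqrt{T}$ reproduces the claimed bound.

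The main point demanding care is constant book-keeping rather than any conceptual difficulty: tracking the factor of $2$ from symmetrization, the switch from $T$ to $2T$ evaluation points inside the Natarajan--Sauer growth function (with $\log(2TK^2)$ absorbed into constants), and the $\sqrt{2}$ coming from the Hoeffding proxy, so that they combine to exactly $2\sqrt{2}$ in the stated leading constant. Unlike Lemma~\ref{lemma:tail_rademacher}, no self-normalized concentration is required because the boundedness of $\mu$ means the elementary lower bound $\vp(\pi)\ge 1/\sqrt{T}$ already provides normalization that is tight up to constants.
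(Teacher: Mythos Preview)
Your proposal is correct and follows essentially the same route as the paper: lower-bound the denominator via $V(\pi)\ge V_{\rm p}(\pi)\ge T^{-1/2}$, reduce to a uniform deviation bound for the bounded i.i.d.\ class $\{x\mapsto\mu(x,\pi(x))\}$, and then apply ghost-sample symmetrization, the Natarajan growth bound, and Hoeffding's inequality with a union bound. The only cosmetic difference is that the paper splits the symmetrized sum into a one-sided Rademacher process over $T$ evaluation points before invoking Natarajan's lemma, whereas you keep the two-sided form on $2T$ points; this affects only constants.
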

Finally, applying a union bound to~\eqref{eq:fix_bd_i} and Lemma~\ref{lemma:bounded_part}, 
we have with probability at least $1-\delta$ that 
\$
\bigg|\frac{\hat{Q}(\pi) - Q(\pi)}{V(\pi)}\bigg| \leq 10\sqrt{2 \cdot \ndim(\Pi) \cdot \log (TK^2) + 2\log(16/\delta) } \leq \beta,
\$
which concludes the proof of the first claim. 
Combining this result with Lemma 3.1 by taking $R(\pi)=\beta\cdot V(\pi)$, 
we conclude 
the proof of Theorem 4.1. 
\end{proof}

\subsection{Proof of upper bound for the adaptive policy case}
\label{subsec:proof_upper_adapt}

\begin{proof}[Proof of Theorem~\ref{thm:adapt_upper}]
As before, we begin with the triangle inequality 
\$
&\sup_{\pi\in\Pi}\bigg|\frac{\hat{Q}(\pi) - Q(\pi)}{V(\pi)}\bigg| \\
&\le \underbrace{\sup_{\pi\in\Pi} \bigg|\frac{1}{T}\sum^T_{t=1}\frac{ \hat{\Gamma}_t(\pi) - \mu(X_t,\pi(X_t))}{V(\pi)} \bigg|}_{\displaystyle \rm term (i)}
+ \underbrace{\sup_{\pi\in\Pi} \bigg|\frac{1}{T}\sum^T_{t=1} \frac{ \mu(X_t,\pi(X_t)) - Q(\pi)}{V(\pi) }\bigg|}_{\displaystyle \rm term (ii)},
\$ 
for which we will control terms (i) and (ii) separately. 
With adaptive behavior policy, bounding term (ii) is similar to the i.i.d.~case
while controlling term (i) is much more challenging. 
The general tool for controlling term (i) is still symmetrization. 
Due to the adaptivity of the data-collecting processes, however, 
we are to use a decoupled tangent sequence~\citep{de2012decoupling,rakhlin2015sequential} 
which possesses a ``weaker'' symmetry property than the 
symmetric copies generated in the i.i.d.~case. 
The decoupled tangent sequence allows us to 
turn the tail bound on term (i) to that of 
a tree-Rademacher process~\citep{rakhlin2015sequential}, whose definition will be provided shortly; 
from there, we shall utilize a conditional symmetry property of the 
tree-Rademacher process to invoke a self-normalized concentration inequality 
to control term (i). 

\noindent\textit{Step I: Symmetrization.} 
For every $t\in[T]$, 
we let $(A_t',Y_t')$ be an independent copy of $(A_t,Y_t)$ 
conditional on $\{X_t\}\cup\{X_i,A_i,Y_i\}_{i=1}^{t-1}$,
and $\{A_t',Y_t'\}_{t=1}^T$  
are conditionally independent of each other 
given $\{X_t,A_t,Y_t\}_{t=1}^T$. 
This can be achieved as follows. 
For $t=1,2,\dots,T$, we sequentially sample 
$A_t'\sim e_t(X_t,\pi(X_t)\given \cH_{t})$ 
and $Y_t'\sim \PP_{Y(a)\given X=x}$ for $a=A_t'$ and $x=X_t$, 
all independently of everything else. 
We then define 
\$
\hat\Gamma_t'(\pi) =  \hat\mu_t(X_t,\pi(X_t)) + \frac{\ind\{A_t'=\pi(X_t)\}}{e_t(X_t,\pi(X_t)\given \cH_t)} \cdot\big(Y_t'-\hat\mu_t(X_t,\pi(X_t))\big).
\$
For notational simplicity,  
for  any $t \in [T]$, we also define 
\$
\hat{U}_t(\pi) = 
\frac{(\ind \{A_t = \pi(X_t)\}+\ind\{A_t'=\pi(X_t)\})^2}
{e_t(X_t,\pi(X_t)\given \cH_t)^2}.
\$
The following lemma 
moves the supremum over $\pi\in\Pi$ out of 
the probability, and turns the tail bound of term (i) into that of 
a process symmetric in $(A_t,Y_t)$ and $(A_t',Y_t')$.
The proof is in Appendix~\ref{app:lem_adapt_1}. 

\begin{lemma}\label{lem:adapt_1}
Let
$
\Pi(X) = \big\{ (\pi(X_1),\dots,\pi(X_T))\in \cA^{T}\colon \pi\in \Pi   \big\}
$
be the set of all possible actions on $\{X_t\}_{t=1}^T$ from $\pi\in\Pi$.
For any $\xi \ge 4$ that may depend on $\{X_t\}_{t=1}^T$, it holds that 
\#\label{eq:lem_adapt_1}
&\PP\bigg(  \sup_{\pi \in \Pi} \frac{1}{T} \cdot
\frac{|\sum^T_{t=1}\hgamma_t(\pi) - \mu(X_t,\pi(X_t))|}
{V(\pi) \sqrt{2\log(30T\cdot V(\pi))}}\ge \xi \bigggiven \{X_t\}_{t=1}^T \bigg) \notag \\ 
&\leq 2\big|\Pi(X)\big| \cdot \sup_{\pi\in\Pi}~ \PP\Bigg(  \frac{\big|\sum^T_{t=1} \{ \hgamma'_t(\pi  ) - \hgamma_t(\pi ) \} \big|}{ 
  \sqrt{ \sum_{t=1}^T \hat{U}_t(\pi ) } \cdot\sqrt{\log\big(56\sum_{t=1}^T \hat{U}_t(\pi )\big)} }
\geq \frac{\xi}{8} \Bigggiven \{X_t\}_{t=1}^T  \Bigg).
\#
\end{lemma}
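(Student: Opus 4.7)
The plan is to establish the lemma via three coordinated moves: a decoupled-tangent symmetrization that removes $\mu(X_t,\pi(X_t))$ from the numerator; a union bound over the restriction $\Pi(X)$ of the policy class to the observed contexts, which is legitimate because every quantity in the event depends on $\pi$ only through $(\pi(X_1),\ldots,\pi(X_T))$; and a self-normalization step that passes from the deterministic-like proxy $T V(\pi)$ to the data-dependent quantity $\bigl(\sum_{t=1}^{T}\hat{U}_t(\pi)\bigr)^{1/2}$, with the $\afactor=(\log T)^{\alpha/2}$ factor tracking the cost of this last conversion under Assumption~\ref{assump:lower_bound}.

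For the symmetrization step I would set $\Delta_t(\pi)=\hgamma_t(\pi)-\mu(X_t,\pi(X_t))$ and the tangent analog $\Delta_t'(\pi)=\hgamma_t'(\pi)-\mu(X_t,\pi(X_t))$. Conditional on $\{X_t\}_{t=1}^{T}$ and the full history $\cH_T=\{X_s,A_s,Y_s\}_{s=1}^{T}$, the $\Delta_t'(\pi)$ are independent across $t$ with zero mean and conditional second moment at most $1/e_t(X_t,\pi(X_t)\mid\cH_t)$; hence $\mathrm{Var}\bigl(\sum_t \Delta_t'(\pi)\mid X,\cH_T\bigr)\le (T\vp(\pi))^2\le (TV(\pi))^2$, and Chebyshev gives $\PP\bigl(|\sum_t \Delta_t'(\pi)|\le\tfrac{\xi}{2}TV(\pi)\mid X,\cH_T\bigr)\ge 3/4$ whenever $\xi\ge 4$. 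Letting $\pi^{\sharp}$ be the random witness of the LHS event and invoking the triangle inequality on the intersection with this Chebyshev event yields
\[
\PP(\mathrm{LHS}\mid X)\le 2\,\PP\bigg(\exists\,\pi\in\Pi:\;\Big|\sum_{t=1}^{T}(\hgamma_t-\hgamma_t')(\pi)\Big|\ge\frac{\xi}{2}TV(\pi)\,\Big|\,X\bigg),
\]
which a union bound over $\Pi(X)$ converts into at most $2|\Pi(X)|\sup_{\pi}\PP\bigl(|\sum_t(\hgamma_t-\hgamma_t')(\pi)|\ge(\xi/2)TV(\pi)\mid X\bigr)$.

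The final step converts this normalization into the claimed self-normalized form: the event $\{|\sum_t(\hgamma_t-\hgamma_t')(\pi)|\ge(\xi/2)TV(\pi)\}$ is contained in $\{|\sum_t(\hgamma_t-\hgamma_t')(\pi)|\ge(\xi/8)\afactor(\sum_t\hat{U}_t(\pi))^{1/2}\}$ once $TV(\pi)\ge(\afactor/4)(\sum_t\hat{U}_t(\pi))^{1/2}$. I would verify this comparison by treating each component of $V(\pi)=\max\{\vs,\vp,\vh\}$ separately: the $\vs$ piece satisfies $(T\vs)^2\le\sum_t\hat{U}_t$ deterministically, while the $\vp$ and $\vh$ pieces are controlled by a Freedman/Bernstein-type concentration applied to the adaptive martingale $\hat{U}_t-\EE[\hat{U}_t\mid\cH_t,X_t]$, whose per-step range is bounded by $\max_t(1/e_t^2)\le\exp(2(\log T)^\alpha)$ thanks to Assumption~\ref{assump:lower_bound}. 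The $\afactor=(\log T)^{\alpha/2}$ factor then emerges as the square-root of this log-range penalty, propagated into the multiplicative comparison between $TV(\pi)$ and $(\sum_t\hat{U}_t(\pi))^{1/2}$.

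The main obstacle is this last conversion: extracting a clean $\afactor$ factor requires carefully balancing the three components of $V(\pi)$ while controlling both the lower-deviation of $\sum_t\hat{U}_t(\pi)$ (to keep the denominator usably large) and the higher-order fluctuation term $\vh(\pi)$, all while paying only a polylogarithmic price via Assumption~\ref{assump:lower_bound}. Once this comparison holds on a sufficiently likely event, combining it with the tangent-sequence symmetrization and the union bound over $\Pi(X)$ yields the advertised inequality.
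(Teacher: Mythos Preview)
Your first two moves --- the tangent-sequence Chebyshev step and the union bound over $\Pi(X)$ --- are correct and essentially match the paper. The gap is in your third step, the passage from $TV(\pi)$ to $\bigl(\sum_t\hat U_t(\pi)\bigr)^{1/2}$.

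First, the mechanism you propose for the $\afactor$ factor is wrong. In Freedman's inequality the range bound $R\le 4\exp(2(\log T)^{\alpha})$ sits in the \emph{denominator} of the exponent as $Rx/3$; with $x\asymp T^2V(\pi)^2\ge T$ and $R$ potentially super-polynomial in $T$, the bound is vacuous rather than producing a clean $(\log T)^{\alpha/2}$ multiplier. In fact the $\afactor$ does not enter at this symmetrization stage at all: the paper's own argument here yields the threshold $\xi/8$ (the $\afactor$ in display~\eqref{eq:tail_3} is a slip; $\vs+\vs'=(\|a\|_2+\|b\|_2)/T$, not $/(T/\afactor)$), and the $(\log T)^{\alpha/2}$ factor only appears later in Lemma~\ref{lem:adapt_3} when the $\log(B^2{+}1)$ term of the de~la~Pe\~na self-normalized inequality meets Assumption~\ref{assump:lower_bound}. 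Second, your $\vs$ observation points the wrong way: $(T\vs)^2\le\sum_t\hat U_t$ is a lower bound on the denominator, whereas the containment you need requires an \emph{upper} bound $\bigl(\sum_t\hat U_t\bigr)^{1/2}\le cTV(\pi)$.

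The paper's device for this upper bound is a \emph{second} Chebyshev event on the tangent side, which you are missing. Define $\vs'(\pi)=T^{-1}\bigl(\sum_t\ind\{A_t'=\pi(X_t)\}/e_t^2\bigr)^{1/2}$ and set $\cE_2=\{\vs'(\pi^{\dagger})\ge 2\max(\vh(\pi^{\dagger}),\vp(\pi^{\dagger}))\}$. Conditional on $\{X_t,A_t,Y_t\}_{t=1}^T$, the summands $\ind\{A_t'=\pi(X_t)\}/e_t^2$ are independent with mean $1/e_t$ and second moment $1/e_t^3$, so a one-line Chebyshev gives $\PP(\cE_2\mid\{X_t,A_t,Y_t\})\le 1/4$; this is where $\vh$ earns its place. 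On $\cE_2^c$ one has $\vs'(\pi^{\dagger})\le 2V(\pi^{\dagger})$ and trivially $\vs(\pi^{\dagger})\le V(\pi^{\dagger})$, whence by the triangle inequality
\[
\Bigl(\sum_t\hat U_t(\pi^{\dagger})\Bigr)^{1/2}=\|a+b\|_2\le\|a\|_2+\|b\|_2=T\bigl(\vs(\pi^{\dagger})+\vs'(\pi^{\dagger})\bigr)\le 4TV(\pi^{\dagger}).
\]
Intersecting $\cE_2^c$ with your $\cE_1^c$ costs only the factor $2$ already present, and the rest of your argument goes through with threshold $\xi/8$.
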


It then suffices to control the probability in~\eqref{eq:lem_adapt_1}  
for any fixed $\pi\in\Pi$. 
In the following, we are to 
turn 
the upper bound in Lemma~\ref{lem:adapt_1} 
into the tail probability of a tree Rademacher process~\citep{rakhlin2015sequential}. 

\noindent\textit{Step II: Tree Rademacher process.}
We define  a superset for all possible realizations $z_t = (a,a',y,y',f,h)$ for any $t\in[T]$ as
\#\label{eq:def_cZ}
\cZ = \cA\times\cA\times[0,1]\times[0,1]\times \cF_e \times\cF_\mu,
\#
where $\cF_e$ is the set of all 
possible sampling rule, i.e., all mappings of the form $f(\cdot,\cdot)\colon \cX\times\cA\to [0,1]$ 
such that $\sum_{a\in \cA}f(x,a)=1$ for all $x\in\cX$ and $\log f(x,a) \ge -(\log T)^\alpha$
for all $(x,a) \in \cX \times \cA$; 
$\cF_\mu$ 
is the set of all fitted value functions, i.e., 
all mappings of the form $h(\cdot,\cdot)\colon \cX\times\cA\to [0,1]$. 
Note that $\cZ$ is a superset of 
all possible realization of $(A_t,A_t',Y_t,Y_t',e_t(\cdot,\cdot ), 
\hat\mu_t(\cdot,\cdot))$ because in $\cZ$ we allow the 
sampling rule and fitted mean functions to be arbitrary. 
We further let $x_t$ denote any realization of $X_t$, and
then for any $x_t$ and $z_t= (a_t,a_t',y_t,y_t',f_t,h_t)\in \cZ_t$, we define the corresponding 
realized value of $\hgamma_t(\pi)$ and $\hgamma_t'(\pi)$ as 
\$
\hat\gamma_t(\pi;x_t,z_t) &= h_t(x_t,\pi(x_t) ) + \frac{\ind\{a_t=\pi(x_t)\}}{f_t(x_t,\pi(x_t) )} \cdot\big(y_t-h_t(x_t,\pi(x_t) )\big), \\
\hat\gamma_t'(\pi;x_t,z_t) &=h_t(x_t,\pi(x_t) ) + \frac{\ind\{a_t'=\pi(x_t)\}}{f_t(x_t,\pi(x_t) )} \cdot\big(y_t'-h_t(x_t,\pi(x_t) )\big),
\$ 
and the realized value of $\hat{U}_t(\pi)$ as
\$
\hat{u}_t(\pi;x_t,z_t) = \frac{(\ind \{a_t = \pi(x_t)\}+\ind\{a_t'=\pi(x_t)\})^2}{f_t(x_t,\pi(x_t) )^2}.
\$

\begin{definition}[$\cZ$-valued tree~\citep{rakhlin2015sequential}]
\label{defn:tree}
A $\cZ$-valued tree $\bz$ of depth $T$ 
is a rooted complete binary tree with 
nodes labeled by elements of $\cZ$ defined in~\eqref{eq:def_cZ}. 
We identify the tree $\bz$ with the sequence $(\bz_1,\dots,\bz_T)$ 
of labeling functions $\bz_t\colon \{\pm 1\}^{t-1}\to \cZ$ which 
provide the label for each node. Here, $\bz_1\in \cZ$ is the label of the root of
the tree, 
while $\bz_{t}(\eps_1,\dots,\eps_{t-1})$ for $t>1$ 
and $\eps_i\in \{\pm 1\}$ is the label of the node 
obtained by following the path of length $t-1$ from the root, 
with $+1$ indicating `right' and $-1$ indicating `left'. 
\end{definition}

More intuitively, each $\cZ$-valued tree of depth $T$ 
can be equivalently represented by $2^T-1$ elements in $\cZ$: 
\$
\underbrace{z_1,}_{\displaystyle 1\textnormal{ element}} 
\underbrace{z_2(-1),z_2(1),}_{\displaystyle 2\textnormal{ elements}}
~\ldots,~ \underbrace{z_T(-1,\ldots,-1),\ldots,z_T(1,\ldots,1)}_{\displaystyle 2^{T-1}\textnormal{ elements}}.
\$
See Figure~\ref{fig:tree} below for an illustration of a tree with
depth $T=3$.

\begin{figure}[H]
  \centering  
  \begin{tikzpicture}[->,>=stealth', thick, main node/.style={circle,draw}]
  
  \node[circle, text=black, circle, draw=black, fill=black!5, scale=1.7, label=right:{$z_1$}] (1) at  (0,0) { };
  \node[circle, text=black, circle, draw=black, fill=black!5, scale=1.7, label=left:{$z_2(-1)$}] (2) at  (-1.5,-1.5)  { }; 
  \node[circle, text=black, circle, draw=black, fill=black!5, scale=1.7, label=right:{ $z_2(1)$}] (3) at  (1.5,-1.5)  { }; 

  \node[circle, text=black, draw=black, fill=black!5, scale=1.7, label=below:{$z_3(-1,-1)$}] (4) at  (-3,-3) { }; 
  \node[circle, text=black, draw=black, fill=black!5, scale=1.7, label=below:{$z_3(-1,1)$}] (5) at  (-1,-3) { }; 
  \node[circle, text=black, draw=black, fill=black!5, scale=1.7, label=below:{$z_3(1,-1)$}] (6) at  (1,-3) { }; 
  \node[circle, text=black, draw=black, fill=black!5, scale=1.7, label=below:{$z_3(1,1)$}] (7) at  (3,-3) { }; 
  
  \draw[-] (1) edge [draw=black] (2);
  \draw[-] (1) edge [draw=black] (3); 
  \draw[-] (2) edge [draw=black] (4); 
  \draw[-] (2) edge [draw=black] (5); 
  \draw[-] (3) edge [draw=black] (6); 
  \draw[-] (3) edge [draw=black] (7); 
  
  \coordinate (a) at (0.2,-0.4);
  \coordinate (b) at (1.1,-1.35);
  \coordinate (c) at (1.2,-1.7);
  \coordinate (d) at (0.9,-2.65);

  \draw[->,magenta,dashed,very thick] (a) -- (b) node [] {};
  \draw[->,magenta,dashed,very thick] (c) -- (d) node [] {};
  \end{tikzpicture}  
  \caption{A $\cZ$-valued tree of depth $T=3$. 
  The pink line shows a path of $\eps_1=1$ and $\eps_2=-1$.}
  \label{fig:tree}
  \end{figure}

We denote the collection of such trees of depth $T$
as $\mathbb{T}_T(\cZ)$. 
For simplicity, we write $\eps_{1:t}=(\eps_1,\dots,\eps_t)$, 
and write $\bz=(z_1,\dots,z_T)$. 
The following lemma turns the tail bound~\eqref{eq:lem_adapt_1} in Step I 
into the tail probability of a tree Rademacher process, 
whose proof is in Appendix~\ref{app:lem_adapt_2}. 

\begin{lemma}\label{lem:adapt_2}
Let $\eps_1,\dots,\eps_T$ be independent Rademacher random variables. Then for any $\pi\in\Pi$,
\$
&\PP\Bigg(  \frac{\big|\sum^T_{t=1} \{ \hgamma'_t(\pi  ) - \hgamma_t(\pi ) \} \big|}{ 
  \sqrt{ \sum_{t=1}^T \hat{U}_t(\pi ) } \cdot\sqrt{\log\big(56\sum_{t=1}^T \hat{U}_t(\pi )\big)} }
\geq \frac{\xi}{8} \Bigggiven \{X_t\}_{t=1}^T  \Bigg) \notag \\
&\leq \sup_{\bz\in \mathbb{T}_T(\cZ) }\PP_{\eps}\Bigg( 
  \bigg|\frac{\sum^T_{t=1} \eps_t \cdot
  \big\{\hat{\gamma}_t(\pi; X_t,z_t(\eps_{1:(t-1)})) - \hat{\gamma}_t'(\pi;X_t,z_t(\eps_{1:(t-1)}))\big\}}
  { \sqrt{ \sum^T_{t=1} \hat{u}_t(\pi;X_t,z_t(\eps_{1:(t-1)})) } \sqrt{\log\big(56\sum^T_{t=1} \hat{u}_t(\pi;X_t,z_t(\eps_{1:(t-1)}))\big)} }\bigg|
  \ge \frac{\xi}{8}   \Bigg),
\$
where $\PP_\eps$ denotes the probability over $\eps_1,\dots,\eps_T$ 
conditional on everything else. 
\end{lemma}

\noindent\textit{Step III: Self-normalized concentration.}
Next, we invoke 
the self-normalized concentration inequality 
in Lemma~\ref{lemma:self_normal} to control the tail bound in Step II. 
A crucial property that allows us to 
apply this inequality is the conditional symmetry of the tree Rademacher process 
$ \sum^T_{t=1} \eps_t \cdot
\big\{\hat{\gamma}_t(\pi; X_t,z_t(\eps_{1:(t-1)})) - \hat{\gamma}_t'(\pi;X_t,z_t(\eps_{1:(t-1)}))\}$, coupled with the self-normalized-type 
regularization term $\sum_{t=1}^T \hat{u}_t(\pi;X_t,z_t(\eps_{1:(t-1)}))$ 
we design. 
The following lemma is proved in Appendix~\ref{app:lem_adapt_3}. 

  \begin{lemma}\label{lem:adapt_3}
    For any constant $\xi \geq \sqrt{2}/2$, we have 
    \$
    &\PP_\eps\bigg( \frac{|\sum^T_{t=1} \eps_t\cdot 
    \{\hat{\gamma}_t (\pi;X_t,z_t^*(\eps_{1:t-1})) - \hat{\gamma}'_t(\pi;X_t,z_t^*(\eps_{1:t-1}))\}|  }
    {    \sqrt{ \sum^T_{t=1}  \hat{u}_t (\pi;X_t,z_t^*(\eps_{1:(t-1)}))  } \sqrt{\log [56\sum^T_{t=1}  \hat{u}_t (\pi;X_t,z_t^*(\eps_{1:(t-1)}))]}  } 
    \ge \xi \Big)  \le 2e^{-\xi^2/8} .
    \$
    \end{lemma}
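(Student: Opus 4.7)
The plan is to combine the conditional symmetry of the tree Rademacher process with the self-normalized inequality from Lemma~\ref{lemma:self_normal}, followed by a dyadic peeling argument to produce the $(\log T)^{\alpha/2}$ correction in the normalizer. Fix a tree $\bz^* \in \mathbb{T}_T(\cZ)$ and the realized contexts $\{X_t\}_{t=1}^T$, and write
$$D_t = \hat\gamma_t(\pi;X_t,z_t^*(\eps_{1:(t-1)})) - \hat\gamma_t'(\pi;X_t,z_t^*(\eps_{1:(t-1)})), \qquad U_t = \hat u_t(\pi;X_t,z_t^*(\eps_{1:(t-1)})),$$
and let $M_T = \sum_{t=1}^T \eps_t D_t$. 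Because $z_t^*(\cdot)$ is a deterministic tree label, both $D_t$ and $U_t$ are measurable with respect to $\eps_{1:(t-1)}$; since $\eps_t \in \{\pm 1\}$ is an independent fair coin, $(\eps_t D_t)_{t=1}^T$ is a conditionally symmetric martingale difference sequence in the filtration generated by $\eps_1,\ldots,\eps_t$.

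Next I would verify the deterministic self-bounding property $D_t^2 \le U_t$. Unfolding the definitions of $\hat\gamma_t,\hat\gamma_t'$ from Step~II and using that each label in $\cZ$ satisfies $y,y',h \in [0,1]$, the triangle inequality gives
$$D_t^2 \;\le\; \frac{(\ind\{a_t=\pi(X_t)\} + \ind\{a_t'=\pi(X_t)\})^2}{f_t(X_t,\pi(X_t))^2} \;=\; U_t,$$
so the predictable quadratic variation $V_T := \sum_t D_t^2$ is dominated by $\sum_t U_t$. In addition, Assumption~\ref{assump:lower_bound} provides the deterministic increment bound $|D_t| \le 2/f_t(X_t,\pi(X_t)) \le 2\exp((\log T)^\alpha)$, and hence the deterministic ceiling $\sum_t U_t \le 4T\exp(2(\log T)^\alpha)$.

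Finally I would apply Lemma~\ref{lemma:self_normal} to the conditionally symmetric sequence $(\eps_t D_t)$, which produces a Bernstein-type tail bound on $|M_T|$ in terms of $V_T$ together with a baseline/truncation scale. To convert this into the ratio $|M_T|/\sqrt{\sum_t U_t}$ that appears in the statement, I would perform a dyadic peeling over the deterministically bounded range of $\sum_t U_t$: this range has log-width of order $(\log T)^\alpha$, so a union bound across the peeling levels costs a factor of order $(\log T)^\alpha$ inside the exponent, which after taking square roots transfers to a $(\log T)^{\alpha/2}$ factor in the denominator while inflating the exponent only by a numerical constant. Tracking constants through the peeling, combined with the hypothesis $\xi \ge 64$, delivers exactly the bound $2\exp(-\xi^2/64^2)$. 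The main obstacle will be the peeling calibration --- choosing the baseline scale in Lemma~\ref{lemma:self_normal} so that the gain from the inflated $(\log T)^{\alpha/2}$ denominator absorbs the $O((\log T)^\alpha)$ union-bound cost without degrading the exponent; once this is set, the structural ingredients (conditional symmetry of $(\eps_t D_t)$ and the self-bounding $D_t^2 \le U_t$) follow directly from the tree construction and the definitions of $\hgamma_t$ and $\hat u_t$.
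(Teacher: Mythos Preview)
Your ingredients coincide with the paper's: the conditional symmetry of $(\eps_t D_t)_t$ in the filtration generated by $\eps_{1:t}$, the deterministic self-bound $D_t^2\le c\,U_t$ (your constant $c=1$ is in fact tighter than the paper's $c=4$), the deterministic ceiling $\sum_t U_t\le 4T\exp(2(\log T)^\alpha)$ from Assumption~\ref{assump:lower_bound}, and the self-normalized inequality of Lemma~\ref{lemma:self_normal}.

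The one place your plan drifts is the final step. You propose an \emph{additional} dyadic peeling on top of Lemma~\ref{lemma:self_normal}, with a ``union-bound cost of order $(\log T)^\alpha$ inside the exponent'' that then ``transfers to a $(\log T)^{\alpha/2}$ factor in the denominator.'' That accounting is not how the factor arises, and no extra peeling is needed. The paper proceeds directly: from conditional symmetry and $D_t^2\le 4U_t$ one checks that $L_t(\lambda)=\exp(\lambda S_t-\tfrac{\lambda^2}{2}M_t)$ with $M_t=\sum_{s\le t}4U_s$ is a supermartingale (via $\tfrac12(e^{\lambda D_t}+e^{-\lambda D_t})\le e^{\lambda^2 D_t^2/2}$), so Lemma~\ref{lemma:self_normal} applies with $A=S_T$, $B^2=M_T$, $y=1$. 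The $(\log T)^{\alpha/2}$ factor then comes \emph{for free} from the $\sqrt{1+\tfrac12\log(B^2+1)}$ term already built into that lemma's denominator, bounded deterministically by $2(\log T)^{\alpha/2}$ using $B^2\le 16\exp(3(\log T)^\alpha)$. Taking $x=\xi/(32\sqrt 2)\ge\sqrt 2$ (this is exactly where $\xi\ge 64$ is used) yields the stated bound $2\exp(-\xi^2/64^2)$. So your ``log-width $\sim(\log T)^\alpha$'' observation is the right one, but it enters through the $\log(B^2/y)$ factor inside Lemma~\ref{lemma:self_normal}, not through a separate union bound you layer on afterward.
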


The above lemma, together with Lemma~\ref{lem:adapt_1}
and Lemma~\ref{lem:adapt_2} subsequently leads to
\$
\PP\bigg(  \sup_{\pi \in \Pi} \frac{1}{T} \cdot
\frac{|\sum^T_{t=1}\hgamma_t(\pi) - \mu(X_t,\pi(X_t))|}
{V(\pi) \sqrt{2\log(30T\cdot V(\pi))} }\ge \xi \bigggiven \{X_t\}_{t=1}^T \bigg) 
  \leq 2\big|\Pi(X)\big|\cdot \exp\Big(-\frac{\xi^2}{8 }\bigg),
\$

Invoking the Natarajan lemma (c.f.~Lemma~\ref{lemma:nat_lemma}), we further have 
\$
&\PP\bigg(  \sup_{\pi \in \Pi} \frac{1}{T} \cdot
\frac{|\sum^T_{t=1}\hgamma_t(\pi) - \mu(X_t,\pi(X_t))|}
{V(\pi)\sqrt{2\log(30T\cdot V(\pi))}}\ge \xi \bigggiven \{X_t\}_{t=1}^T \bigg) \\
&\leq 2T^{\ndim(\Pi)} K^{2\ndim(\Pi)}\cdot 
\exp\Big(-\frac{\xi^2}{ 8 }\Big) \leq \delta/2
\$
by taking $\xi =  3\sqrt{\ndim(\Pi)\log(TK^2) + \log(4/\delta)}$. 
We have thus controlled term (i).

\noindent\textit{Step IV: Controlling term (ii).} The following lemma is similar to 
Step III of the fixed policy results, whose proof is in 
Appendix~\ref{app:lem_adapt_4}. 

\begin{lemma}\label{lem:adapt_4}
For any $\delta \in (0,1)$, with probability at least $1-\delta/2$, there is
\$
\sup_{\pi \in \Pi} \, \Big|\sum^T_{t=1} 
\frac{\mu(X_t,\pi(X_t)) - Q(\pi)}{V(\pi)}\Big|
\le 2\sqrt{2 \cdot \ndim(\Pi) \cdot \log (TK^2) + 2\log(16/\delta) }.
\$
\end{lemma}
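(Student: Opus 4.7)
The plan is to exploit a critical simplification: the numerator $\mu(X_t,\pi(X_t))-Q(\pi)$ is a function of the i.i.d.\ contexts $\{X_t\}_{t=1}^T$ alone, with no dependence on $A_t$, $Y_t$, or the history $\cH_t$. The adaptive nature of the data-collection process therefore enters only through the normalizer $V(\pi)$, and once that dependence is controlled the proof is a textbook i.i.d.\ empirical-process argument --- mirroring Lemma~\ref{lemma:bounded_part} in the fixed-policy case, as the authors signal by noting that this lemma is ``similar to Step~III of the fixed policy results.''

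First, I would derive a cheap deterministic lower bound on the denominator. Since every conditional propensity satisfies $e_t(X_t,\pi(X_t)\given\cH_t)\le 1$, we have $1/e_t(X_t,\pi(X_t)\given\cH_t)\ge 1$ for all $t$, and therefore
\[
V(\pi)\;\ge\;\vp(\pi)\;=\;\frac{1}{T}\Big(\sum_{t=1}^T\frac{1}{e_t(X_t,\pi(X_t)\given\cH_t)}\Big)^{1/2}\;\ge\;\frac{1}{\sqrt{T}}
\]
holds almost surely, uniformly over $\pi\in\Pi$. Substituting this bound eliminates $V(\pi)$ from the supremum and reduces the claim to the standard empirical-process statement that, with probability at least $1-\delta/2$,
\[
\sup_{\pi\in\Pi}\,\Big|\frac{1}{T}\sum_{t=1}^T\big(\mu(X_t,\pi(X_t))-Q(\pi)\big)\Big|\;\le\;\frac{2\sqrt{2}}{\sqrt{T}}\sqrt{\ndim(\Pi)\log(TK^2)+\log(16/\delta)}.
\]

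Next, I would prove this i.i.d.\ supremum bound via Rademacher symmetrization. Introducing a ghost sample $X_1',\dots,X_T'\iid\PP_X$ and independent signs $\varepsilon_1,\dots,\varepsilon_T$, the classical symmetrization inequality reduces the tail of the centered empirical process to twice the tail of the Rademacher process $\sup_\pi|\tfrac{1}{T}\sum_t\varepsilon_t\mu(X_t,\pi(X_t))|$. Conditioning on $\{X_t\}_{t=1}^T$, the map $\pi\mapsto(\mu(X_1,\pi(X_1)),\dots,\mu(X_T,\pi(X_T)))$ takes at most $|\Pi(X)|$ distinct values, which by Natarajan's lemma (Lemma~\ref{lemma:nat_lemma}) is bounded by $T^{\ndim(\Pi)}K^{2\ndim(\Pi)}$. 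Since $\mu\in[0,1]$, each Rademacher sum in this finite effective class is sub-Gaussian with variance proxy at most $1/(4T)$, and Hoeffding combined with a union bound (equivalently, Massart's finite-class lemma) delivers the desired deviation with the stated constants.

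I do not anticipate a substantial obstacle: the heavy machinery developed earlier in the paper --- tangent sequences, tree-Rademacher symmetrization, and the self-normalized concentration of Lemma~\ref{lem:adapt_3} --- is precisely what handles the $A_t$/$Y_t$-dependence in term~(i) and is entirely unnecessary here. The only mild subtlety is that $V(\pi)$ is itself random and varies with $\pi$, which is resolved once and for all by the uniform deterministic floor $V(\pi)\ge 1/\sqrt{T}$; the subsequent steps are completely parallel to the fixed-policy counterpart.
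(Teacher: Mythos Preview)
Your approach is exactly the paper's: deterministically lower-bound $V(\pi)\ge\vp(\pi)\ge 1/\sqrt{T}$ and then invoke the i.i.d.\ symmetrization/Natarajan/Hoeffding argument of Lemma~\ref{lemma:bounded_part}. Note that the paper's one-line proof writes ``$V(\pi)\ge\sqrt{T}$,'' a typo consistent with the lemma statement itself omitting the factor $1/T$ that appears in term~(ii) of the main proof; once that factor is restored your reduction is correct as written.
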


Finally, taking a union bound over terms (i) and (ii), we know that 
\$
\bigg|\frac{\hat{Q}(\pi) - Q(\pi)}{V(\pi)\sqrt{2\log(30T\cdot V(\pi))}}\bigg|  
\leq 6 \sqrt{\ndim(\Pi)\log(TK^2) + \log(16/\delta)} \leq \beta
\$
holds with probability at least $1-\delta$. 
Combining this result with Lemma 3.1 by taking $R(\pi)=\beta\cdot V(\pi)$, 
we conclude 
the proof of Theorem~\ref{thm:adapt_upper}. 
\end{proof}

\section{Proof of fixed-policy results}

This section collects the proofs of all intermediate lemmas 
in the proof of Theorem~\ref{thm:fix_upper}, 
and the proof of the lower bound in Theorem~\ref{thm:lower}. 

\subsection{Proof of Lemma~\ref{lemma:symmetrization}}
\label{app:proof_symmetrization}

\begin{proof}[Proof of Lemma~\ref{lemma:symmetrization}]
To begin with, for any $\xi \ge 4$, we define the event we wish to control as
\$
\cE \defn \bigg\{\sup_{\pi \in \Pi} \Big| \frac{1}{T} 
\sum^T_{t=1} \frac{\hgamma_t(\pi) - \mu(X_t,\pi(X_t))}
{V(\pi)} \Big| \ge \xi\bigg\} 
= \bigg\{ \Big| \frac{1}{T} 
\sum^T_{t=1} \frac{\hgamma_t(\pi^\dagger) - \mu(X_t,\pi^\dagger(X_t))}
{V(\pi^\dagger)} \Big| \ge \xi\bigg\}, 
\$
where we define
\$
\pi^\dagger = \argmax_{\pi \in \Pi} \, \Big| \frac{1}{T}
\sum^T_{t=1} \frac{\hgamma_t(\pi) - \mu(X_t,\pi(X_t))|}{V(\pi)}\Big|.
\$
We assume this supremum is attained; 
otherwise, a limiting argument yields exactly the same conclusion. 
Here, both $\cE$ and $\pi^\dagger$ are measurable with respect to
$\{X_t,A_t,Y_t\}_{t=1}^T$.  
Next, we define two additional events 
\$
\cE_1 &= \bigg\{\Big|\frac{1}{T} \sum^T_{t=1} \hgamma'_t(\pi^\dagger) - \mu(X_t,\pi^\dagger(X_t))\Big| 
\ge 2 \vp(\pi^\dagger) \bigg\},\\ 
\cE_2 &= \Big\{ V_{\rm s}'(\pi^\dagger)  \ge 2\cdot \max \big\{\vh(\pi^\dagger) ,\vp(\pi^\dagger)  \big\} \Big\}.
\$
The general idea of our proof is to control the probability of $\cE$ by proving 
\#\label{eq:step_general}
\PP\big(\cE \biggiven \{X_t\}_{t=1}^T\big) \leq 2\cdot \PP\big( \cE\cap \cE_1^c \cap \cE_2^c \biggiven \{X_t\}_{t=1}^T\big).
\#
From there, we will employ a symmetrization trick 
using Rademacher random variables to further bound 
$\PP\big( \cE\cap \cE_1^c \cap \cE_2^c \biggiven \{X_t\}_{t=1}^T\big)$ 
by the tail probability of a Rademacher process. 

A key step in the proof of~\eqref{eq:step_general}
is to show that 
\#\label{eq:step_bayes}
\PP\big( \cE_1^c\cap \cE_2^c\biggiven \{X_t,A_t,Y_t\}_{t=1}^T\big) \geq \frac{1}{2}. 
\#
The  next lemma proves~\eqref{eq:step_bayes}, 
which controls the deviation of 
$\frac{1}{T}\sum_{t=1}^T \hat\Gamma_t'(\pi)$ from 
its (conditional) expectation 
$\frac{1}{T}\sum_{t=1}^T \mu(X_t,\pi(X_t))$, and 
the sample uncertainty term $V_s'(\pi)$ for the copies 
from that of the original data.  
Its proof is deferred to Appendix~\ref{app:lem_copy_moment}. 
\begin{lemma}
\label{lemma:copy_moments}
    For any fixed $\pi \in \Pi$, we have 
    \$
    \PP\bigg(\Big|\frac{1}{T}  \sum^T_{t=1} \hgamma'_t(\pi) - \mu(X_t,\pi(X_t))\Big| 
    \ge 2 \vp(\pi) \Biggiven \{X_t,A_t,Y_t\}_{t=1}^T\bigg)
    \le \frac{1}{4},
    \$
    and
    \$
    \PP\Big(V_{\rm s}'(\pi)^2 \ge 4\cdot \max \big\{\vh(\pi)^2,\vp(\pi)^2 \big\}
    \Biggiven \{X_t,A_t,Y_t\}_{t=1}^T\Big) \le \frac{1}{4}.
    \$
\end{lemma}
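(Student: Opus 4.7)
The plan is to establish both tail bounds by applying Chebyshev's inequality conditional on $\{X_t\}_{t=1}^T$. The starting observation is that by construction $(A_t',Y_t')$ is independent of $(A_t,Y_t)$ given $X_t$, and in the fixed-policy setting $\hmu$ is independent of the dataset, so the conditional distributions of $\hgamma'_t(\pi)$ and $V_{\rm s}'(\pi)$ given the full history $\{X_t,A_t,Y_t\}_{t=1}^T$ coincide with their conditional distributions given $\{X_t\}_{t=1}^T$ alone. All moment computations below can therefore be carried out conditional on $\{X_t\}_{t=1}^T$.

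For the first claim, I would first verify that $\EE[\hgamma_t'(\pi)\given\{X_s\}_s]=\mu(X_t,\pi(X_t))$, which follows from $\EE[\ind\{A_t'=\pi(X_t)\}\given X_t]=e(X_t,\pi(X_t))$ together with $\EE[Y_t'\given X_t,A_t'=a]=\mu(X_t,a)$. The bounded-reward assumption and $\hmu\in[0,1]$ give $(Y_t'-\hmu(X_t,\pi(X_t)))^2\le 1$, so the conditional second-moment bound
\[
\mathrm{Var}\bigl(\hgamma'_t(\pi)\given X_t\bigr)\le \EE\!\left[\frac{\ind\{A_t'=\pi(X_t)\}}{e(X_t,\pi(X_t))^2}\,\Big|\,X_t\right]=\frac{1}{e(X_t,\pi(X_t))}
\]
holds. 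Summing over $t$ using conditional independence yields $\mathrm{Var}(T^{-1}\sum_t\hgamma'_t(\pi)\given\{X_s\}_s)\le \vp(\pi)^2$, and Chebyshev's inequality delivers the first bound at deviation level $2\vp(\pi)$.

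For the second claim, the same calculation gives $\EE[V_{\rm s}'(\pi)^2\given\{X_s\}_s]=\vp(\pi)^2$ by linearity. For the variance, I would bound each summand's conditional second moment by $1/e(X_t,\pi(X_t))^3$ (using $\ind\{A_t'=\pi(X_t)\}^2=\ind\{A_t'=\pi(X_t)\}$) and use conditional independence across $t$ to obtain
\[
\mathrm{Var}\bigl(V_{\rm s}'(\pi)^2\given\{X_s\}_s\bigr)\le \frac{1}{T^4}\sum_{t=1}^T\frac{1}{e(X_t,\pi(X_t))^3}=\vh(\pi)^4.
\]
Chebyshev then yields $|V_{\rm s}'(\pi)^2-\vp(\pi)^2|\le 2\vh(\pi)^2$ with conditional probability at least $3/4$, hence $V_{\rm s}'(\pi)^2\le \vp(\pi)^2+2\vh(\pi)^2\le 3\max\{\vp(\pi)^2,\vh(\pi)^2\}$, which is strictly stronger than the stated bound with constant $4$.

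The argument is entirely elementary; the only mildly subtle point is the reduction of conditioning from $\{X_t,A_t,Y_t\}_{t=1}^T$ down to $\{X_t\}_{t=1}^T$, which is immediate from the construction of the tilde copies together with the independence of $\hmu$ from the data. No concentration tool beyond second-moment truncation is required, which is why the lemma can afford the relatively loose numerical constants $2$ and $4$ rather than sharper (and unnecessary) ones.
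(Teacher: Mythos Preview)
Your proposal is correct and follows essentially the same route as the paper's proof: both arguments apply Chebyshev's inequality conditionally, using the bounds $\Var(\hgamma'_t(\pi)\mid X_t)\le 1/e(X_t,\pi(X_t))$ and $\Var(\ind\{A_t'=\pi(X_t)\}/e(X_t,\pi(X_t))^2\mid X_t)\le 1/e(X_t,\pi(X_t))^3$, together with conditional independence of the primed copies across $t$. The only cosmetic difference is that you explicitly justify reducing the conditioning from $\{X_t,A_t,Y_t\}_{t=1}^T$ to $\{X_t\}_{t=1}^T$ before computing moments, whereas the paper leaves this implicit; your observation that the second Chebyshev step actually yields the sharper constant $3$ rather than $4$ is also correct.
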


By Lemma~\ref{lemma:copy_moments} and a union bound we get~\eqref{eq:step_bayes}. 
The tower property subsequently leads to
\$
& \PP\big( \cE_1^c\cap \cE_2^c
\biggiven \{X_t\}_{t=1}^T, \cE \big)
\ge \frac{1}{2}.
\$
Multiplying both sides of the above by $\PP(\cE \given \{X_t\}_{t=1}^T)$ 
and applying Bayes' rule,
we have
\#\label{eq:tail_event_1}
\frac{1}{2}\cdot \PP\big(\cE \biggiven \{X_t\}_{t=1}^T\big) 
&\leq \PP\big( \cE_1^c\cap \cE_2^c
\biggiven \{X_t\}_{t=1}^T, \cE \big)
\times \PP\big(\cE \biggiven \{X_t\}_{t=1}^T\big)
= \PP\big(\cE\cap\cE_1^c\cap \cE_2^c \biggiven \{X_t\}_{t=1}^T\big).
\#  
Note that on the event $\cE_1^c\cap \cE $, it holds for any $\xi\geq 4$ that 
\$
\frac{1}{T}\bigg|\sum^T_{t=1} \hgamma_t(\pi^\dagger) - \hgamma_t'(\pi^\dagger)\bigg|
& \geq \frac{1}{T}\bigg|\sum^T_{t=1} \hgamma_t(\pi^\dagger) - \mu(X_t,\pi^\dagger(X_t))\bigg| 
- \frac{1}{T}\bigg|\sum^T_{t=1}\mu(X_t,\pi^\dagger(X_t)) - \hgamma'_t(\pi^\dagger) \bigg|\\
& \geq \frac{1}{T}\bigg|\sum^T_{t=1} \hgamma_t(\pi^\dagger) - \mu(X_t,\pi^\dagger(X_t))\bigg| 
- 2 \vp(\pi^\dagger)  \\ 
&\geq \frac{1}{T}\bigg|\sum^T_{t=1} \hgamma_t(\pi^\dagger) - \mu(X_t,\pi^\dagger(X_t))\bigg| 
- \frac{\xi}{2} \cdot \max\big\{\vs(\pi^\dagger), \vp(\pi^\dagger), \vh(\pi^\dagger)\big\} \\ 
&\geq \frac{\xi}{2} \cdot \max\big\{\vs(\pi^\dagger), \vp(\pi^\dagger), \vh(\pi^\dagger)\big\} ,
\$
where the first step uses the 
triangle inequality, the third step 
uses the fact that $\xi\geq 4$, 
and the final step follows from
the fact that $\frac{1}{T}\big|\sum^T_{t=1} \hgamma_t(\pi^\dagger) - \mu(X_t,\pi^\dagger(X_t))\big| \geq \xi \cdot \max\big\{\vs(\pi^\dagger), \vp(\pi^\dagger), \vh(\pi^\dagger)\big\} $ on the event $\cE$. 
Then, on the event $\cE_1^c\cap\cE_2^c\cap \cE $, 
the above inequality be further lower bounded as
\$
\frac{1}{T}\bigg|\sum^T_{t=1} \hgamma_t(\pi^\dagger) - \hgamma_t'(\pi^\dagger))\bigg|  
&\geq  \frac{\xi}{4} \cdot \max\big\{2\vs(\pi^\dagger), \vs'(\pi^\dagger)\big\}
\geq \frac{\xi}{8} \cdot  \big\{ \vs(\pi^\dagger) + \vs'(\pi^\dagger)\big\} \\
&\geq \frac{\xi}{8T}\cdot \bigg(\sum^T_{t=1} \frac{\big(\ind\{A_t = \pi^\dagger(X_t) + \ind\{A_t' = \pi^\dagger(X_t)\}\}\big)^2}
{e(X_t,\pi^\dagger(X_t))^2} \bigg)^{1/2},
\$
where the last step is due the triangle inequality 
$\|x\|+ \|y\|\geq \|x+y\|$ for $x,y\in \RR^{T}$ 
and noting that $V_s(\pi^\dagger)=\|x\|$ for 
$x_t=\ind\{A_t=\pi^\dagger(X_t)\}/e(X_t,\pi^\dagger(X_t))$ 
and $V_s'(\pi^\dagger) = \|y\|$ for 
$y_t=\ind\{A_t'=\pi^\dagger(X_t)\}/e(X_t,\pi^\dagger(X_t))$ .
The above relation then leads to the following inequality: 
\$
\eqref{eq:tail_event_1}
 & \le   \PP\Bigg( \bigg|\sum^T_{t=1} \big\{\, \hgamma_t(\pi^\dagger) - \hgamma_t'(\pi^\dagger)\big\} \bigg| \\
   &\qquad\qquad\qquad\qquad    \ge \frac{\xi}{8} \cdot \bigg\{\sum^T_{t=1} \frac{(\ind\{A_t = \pi^\dagger(X_t)\} + \ind\{A_t' = \pi^\dagger(X_t)\})^2}
       {e(X_t,\pi^\dagger(X_t))^2}  
       \bigg\}^{1/2}
\Bigggiven \{X_t\}_{t=1}^T\Bigg)\\
& \le   \PP\Bigg( \sup_{\pi\in\Pi} \bigg\{ \bigg|\sum^T_{t=1}\big\{\, \hgamma_t(\pi ) - \hgamma_t'(\pi )\big\}\bigg| 
\\
&\qquad\qquad\qquad \times  \bigg[ \sum^T_{t=1} \frac{(\ind\{A_t = \pi (X_t)\} + \ind\{A_t' = \pi (X_t)\})^2}
{e(X_t,\pi (X_t))^2}  
\bigg]^{-1/2} \bigg\} \ge \frac{\xi}{8}
\Bigggiven \{X_t\}_{t=1}^T\Bigg) \\  
&= \PP\Bigg( \sup_{\pi\in\Pi} \bigg\{\bigg|\sum^T_{t=1}\varepsilon_t \big\{\, \hgamma_t(\pi ) - \hgamma_t'(\pi )\big\}\bigg| \\
&\qquad\qquad\qquad \times \bigg[ \sum^T_{t=1} \frac{(\ind\{A_t = \pi (X_t)\} + \ind\{A_t' = \pi (X_t)\})^2}
{e(X_t,\pi (X_t))^2}  
\bigg]^{-1/2} \bigg\} \ge \frac{\xi}{8}
\Bigggiven \{X_t\}_{t=1}^T\Bigg),  
\$
where the probability is taken over $\{A_t,Y_t ,A_t',Y_t' \}_{t=1}^T$ 
and the Rademacher variables $\eps_1,\dots,\eps_T$ that are i.i.d.~from 
Unif$(\{\pm 1\})$.  
Above, the last step uses the exchangeability between $\{A_t,Y_t\}_{t=1}^T$ and
$\{A_t',Y_t'\}_{t=1}^T$ conditional on $\{X_t\}_{t=1}^T$. 
By the tower property, we know that 
\$
&\PP\Bigg( \sup_{\pi\in\Pi} \bigg\{\bigg|\sum^T_{t=1}\varepsilon_t \big\{\, \hgamma_t(\pi ) - \hgamma_t'(\pi )\big\}\bigg| \\
&\qquad\qquad\qquad \times \bigg[ \sum^T_{t=1} \frac{(\ind\{A_t = \pi (X_t)\} + \ind\{A_t' = \pi (X_t)\})^2}
{e(X_t,\pi (X_t))^2}  
\bigg]^{-1/2} \bigg\} \ge \frac{\xi}{8}
\Bigggiven \{X_t\}_{t=1}^T\Bigg)\\ 
&= \EE\Bigg[\PP\Bigg( \sup_{\pi\in\Pi} \bigg\{\bigg|\sum^T_{t=1}\varepsilon_t \big\{\, \hgamma_t(\pi ) - \hgamma_t'(\pi )\big\}\bigg| 
\cdot \bigg[ \sum^T_{t=1} \frac{(\ind\{A_t = \pi (X_t)\} + \ind\{A_t' = \pi (X_t)\})^2}
{e(X_t,\pi (X_t))^2}  
\bigg]^{-1/2} \bigg\} \\ 
&\qquad \qquad \qquad \qquad  \qquad \qquad \qquad \qquad \qquad \qquad \ge \frac{\xi}{8} \bigggiven \{X_t,A_t,A_t',Y_t,Y_t'\}_{t=1}^T \bigg)
\Bigggiven \{X_t\}_{t=1}^T\Bigg],
\$
where the inner conditional probability is over the 
randomness of the Rademacher random variables while conditioning on 
$\{X_t,A_t,A_t',Y_t,Y_t'\}_{t=1}^T$. Taking 
a supremum over all possible realizations of $\{X_t,A_t,Y_t,A_t',Y_t'\}_{t=1}^T$,  we have 
\$
\eqref{eq:tail_event_1}
 & \le   \sup_{x,a,a',y,y'}
 \PP\Bigg( \bigg|\sum^T_{t=1}\varepsilon_t \big\{\, \hat\gamma_t(\pi ) - \hat\gamma_t'(\pi )\big\}\bigg| \\
 &\qquad\qquad\qquad \qquad\ge \frac{\xi}{8}\cdot \bigg\{\sum^T_{t=1} \frac{(\ind\{a_t = \pi (x_t)\} + \ind\{a_t' = \pi (x_t)\})^2}
{e(x_t,\pi (x_t))^2}  
\bigg\}^{ 1/2} \textnormal{ for all }\pi\in\Pi \Bigg),  
\$
where for any $x,a,a',y,y'$, the probability is only over the Rademacher variables 
$\eps_1,\dots,\eps_T$ while holding other quantities at 
their realized values. 
This concludes the proof of Lemma~\ref{lemma:symmetrization}. 
\end{proof}

\subsection{Proof of Lemma~\ref{lemma:tail_rademacher}}
\label{app:subsec_rademacher}

\begin{proof}[Proof of Lemma~\ref{lemma:tail_rademacher}]

Fix any realizations $x,a,a',y,y'$ and any $\pi \in \Pi$, 
\$
& \big(\hat{\gamma}_t(\pi) - \hat{\gamma}'_t(\pi)\big)^2  \\
&= \bigg(\frac{\ind\{a_t = \pi(x_t)\}}{e(x_t,\pi(x_t))}
\cdot \big\{ y_t - \hat{\mu}(x_t,\pi(x_t))\big\} 
- \frac{\ind\{a_t' = \pi(x_t)\}}{e(x_t,\pi(x_t))}
\cdot \big\{ y_t' - \hat{\mu}(x_t,\pi(x_t))\big\} \bigg)^2\\
&\leq 2  \cdot \hat{\mu}(x_t,\pi(x_t))^2\cdot
\bigg(\frac{\ind\{a_t = \pi(x_t)\} - \ind\{a_t' = \pi(x_t)\}}{e(x_t,\pi(x_t))} \bigg) ^2 \\ 
&\qquad\qquad\qquad+ 2 \cdot \bigg(y_t \cdot \frac{\ind\{a_t = \pi(x_t)\}}{e(x_t,\pi(x_t))}
- y_t' \cdot \frac{\ind\{a_t' = \pi(x_t)\}}{e(x_t,\pi(x_t))}\bigg)^2\\
&\leq 4 \cdot 
\frac{(\ind\{a_t = \pi(x_t)\} + \ind\{a_t' = \pi(x_t)\})^2}{e(x_t,\pi(x_t))^2},
\$
where the second line uses 
the triangle inequality, 
and the third line uses the fact that $\hat\mu\in[0,1]$ and $y_t,y_t'\in[0,1]$. 
Then by Hoeffding's inequality (c.f.~Lemma~\ref{lem:hoeffding}), over the randomness in $\varepsilon$ one has 
\$
&\PP_\eps\Bigg( \bigg|\sum^T_{t=1} \eps_t \cdot (\hat{\gamma}_t(\pi) - \hat{\gamma}'_t(\pi))\bigg| 
\ge \frac{\xi}{8} \cdot \bigg\{\sum^T_{t=1} \frac{(\ind\{a_t = \pi(x_t)\} + \ind\{a_t' = \pi(x_t)\})^2}
{e(x_t,\pi(x_t))^2} \bigg\}^{1/2} \Bigg)  \\ 
&\le 2\cdot \exp\Big(-\frac{\xi^2}{512}\Big).
\$
We note that fixing $x,a,a',y,y'$, the only 
difference the choice of $\pi\in\Pi$ impacts the above event 
is through $\pi(x_1),\dots,\pi(x_T)$. 
Letting $\Pi(x) = \{(\pi(x_1),\ldots,\pi(x_T)): \pi \in \Pi\}$, 
from Lemma~\ref{lemma:nat_lemma} we know $|\Pi(x)|\leq T^{\ndim(\Pi)}K^{\ndim(\Pi)}$, 
where $K=|\cA|$ is the number of actions. 
Then, taking a union bound over $\Pi(x)$, we have
\$
&\PP_\eps\Bigg( \bigg|\sum^T_{t=1} \eps_t \cdot (\hat{\gamma}_t(\pi) - \hat{\gamma}'_t(\pi))\bigg| 
\ge \frac{\xi}{8} \cdot \bigg\{\sum^T_{t=1} \frac{(\ind\{a_t = \pi(x_t)\} + \ind\{a_t' = \pi(x_t)\})^2}
{e(x_t,\pi(x_t))^2} \bigg\}^{1/2}\textnormal{ for all }\pi\in\Pi \Bigg) \\
&\le 
2|\Pi(x)| \cdot \exp\Big(-\frac{\xi^2}{512}\Big)
\le 2 T^{\ndim(\Pi)}K^{2\ndim(\Pi)} \cdot \exp\Big(-\frac{\xi^2}{512}\Big).
\$
Finally, taking $\xi = 16\sqrt{2\ndim(\Pi) \log(TK^2) + 2\log(2/\delta)}$
completes the proof.
\end{proof}

\subsection{Proof of Lemma~\ref{lemma:bounded_part}}
\label{app:subsec_bounded_part}
\begin{proof}[Proof of Lemma~\ref{lemma:bounded_part}]
For each $t\in [T]$, we let $X_t'$ be an i.i.d.~copy of $X_t$. 
Recall that $Q(\pi) = \EE[Q(X_t,\pi)] = \EE[Y(\pi(X))]$ 
is the policy value. 
For any fixed $\eta > 1/\sqrt{2}$, by Markov's inequality we have  
\#\label{eq:mom_bound}
& \PP\bigg(\Big|\sum^T_{t=1}  \mu\big(X_t',\pi(X_t')\big) -Q(\pi) \Big| \ge \eta \sqrt{T}\bigg)\notag\\
& \le \frac{1}{\eta^2 T}\cdot
\EE\Bigg[\bigg(\sum^T_{t=1} \mu(X_t',\pi(X_t')) - Q(\pi) \bigg)^2\Bigg]
=  \frac{1}{\eta^2 T}\cdot
\sum^T_{t=1} \EE\Big[\big(\mu(X_t',\pi(X_t')) -Q(\pi)\big)^2\Big]
\le \frac{1}{\eta^2} \leq \frac{1}{2}, 
\#
where the second line uses the fact that $Q(X_t',\pi)$ are i.i.d., 
and that $\mu(X_t',\pi(X_t')),~Q(\pi)\in[0,1]$. 

Let
$\pi^* = \argmax_{\pi \in \Pi} |\sum^T_{t=1} Q(X_t,\pi) - Q(\pi)|$, 
which is measurable with respect to $\{X_t\}_{t=1}^T$. 
Since $\{X_t\}_{t=1}^T$ are independent of $\{X_t'\}_{t=1}^T$, by~\eqref{eq:mom_bound}
we have
\$
\PP\bigg(\Big|\frac{1}{T}\sum^T_{t=1} \mu(X_t',\pi^*(X_t')) - Q(\pi)\Big| < \eta \sqrt{T} 
\bigggiven \{X_t\}_{t=1}^T\bigg)
\ge \frac{1}{2}.
\$
Then for any $\eta\geq 1/\sqrt{2}$, 
we have  
\#
\label{eq:x_symmetry}
& \frac{1}{2} \cdot \PP\bigg(\sup_{\pi \in \Pi}\Big|\sum^T_{t=1}  \mu(X_t,\pi(X_t)) -Q(\pi)\Big| \ge 2\eta \sqrt{T}\bigg)\notag  \\
& = \frac{1}{2} \cdot \PP\bigg(\Big|\sum^T_{t=1}  \mu(X_t,\pi^*(X_t)) -Q(\pi^*)\Big| \ge 2\eta \sqrt{T}\bigg) \notag \\
&\le  \PP\bigg(\Big|\sum^T_{t=1} \mu(X_t',\pi^*(X_t')) -Q(\pi^*)\Big| \le \eta \sqrt{T} \bigggiven \{X_t\}_{t=1}^T \bigg) \notag \\
&\qquad \times \PP\bigg(\Big|\sum^T_{t=1} \mu(X_t,\pi^*(X_t)) -Q(\pi^*) \Big| \ge 2\eta \sqrt{T}\bigg) \notag \\
& =  \PP\bigg(\Big|\sum^T_{t=1} \mu(X_t',\pi^*(X_t')) -Q(\pi^*) \Big| \le \eta \sqrt{T},
\, \Big|\sum^T_{t=1} \mu(X_t,\pi^*(X_t)) -Q(\pi^*)\Big| \ge 2\eta \sqrt{T}\bigg) \notag \\
&\leq \PP\bigg(\Big|\sum^T_{t=1} \mu(X_t',\pi^*(X_t')) - \mu(X_t, \pi^*(X_t)) \Big| \ge \eta \sqrt{T} 
\bigg) \notag \\
&\le \PP\bigg(\sup_{\pi \in \Pi} 
\Big|\sum^T_{t=1} \mu(X_t',\pi(X_t')) -\mu(X_t, \pi(X_t))\Big| \ge \eta \sqrt{T}\bigg),
\#
where the next-to-last step is due to 
the triangle inequality.
Let $\eps_1,\ldots,\eps_T \sim \textrm{Unif}\{\pm 1\}$ be i.i.d.~Rademacher random variables.
By the exchangeability of $X_t$ and $X_t'$,
\$
\eqref{eq:x_symmetry} = 
& \PP\bigg(\sup_{\pi \in \Pi} 
\Big|\sum^T_{t=1} \eps_t\cdot \big(\mu(X_t',\pi(X_t')) -\mu(X_t, \pi(X_t))\big)\Big| \ge \eta \sqrt{T}\bigg)\\
&\leq  2 \PP\bigg(\sup_{\pi \in \Pi} 
\Big|\sum^T_{t=1} \eps_t\cdot \mu(X_t, \pi(X_t))\Big| \ge \eta \sqrt{T}/2\bigg)\\
& = 2 \EE\Bigg[\PP_\eps\bigg(\sup_{\pi \in \Pi} 
\Big|\sum^T_{t=1} \eps_t\cdot \mu(X_t, \pi(X_t))\Big| \ge \eta \sqrt{T}/2 \bigggiven \{X_t\}_{t=1}^T\bigg)\Bigg]\\
& \leq  2\sup_{x}\, \PP_{\eps}\bigg(\sup_{\pi \in \Pi} 
\Big|\sum^T_{t=1} \eps_t \cdot  \mu(x_t,\pi(x_t)) \Big| \ge \eta \sqrt{T}/2 \bigg),
\$
where $x = (x_1,\ldots,x_T)$ denotes any realization of $\{X_t\}_{t=1}^T$. 
Above, $\PP_\varepsilon$ denotes the probability only 
over the Rademacher random variables while 
holding the data as fixed. 
For any $x$, similar to our proof of Lemma~\ref{lemma:tail_rademacher}, 
\$
&\PP_{\eps}\bigg(\sup_{\pi \in \Pi} \Big|\sum^T_{t=1} \eps_t \cdot  \mu(x_t,\pi(x_t)) \Big| \ge \eta \sqrt{T}/2 \bigg) \\
&\leq \big|\Pi(x)\big| \cdot 
\PP_{\eps}\bigg(\Big|\sum^T_{t=1} \eps_t \cdot  \mu(x_t,\pi(x_t)) \Big| \ge \eta \sqrt{T}/2 \bigg)\\
&\leq 2T^{\ndim(\Pi)} K^{2\ndim(\Pi)} \cdot  
\PP_{\eps}\bigg(\Big|\sum^T_{t=1} \eps_t \cdot  \mu(x_t,\pi(x_t)) \Big| \ge \eta \sqrt{T}/2 \Big)\\
&\leq 2T^{\ndim(\Pi)} K^{2\ndim(\Pi)} \cdot  
      \exp(-\eta^2/8).
\$
Above, the third line applies Natarajan's lemma (c.f.~Lemma~\ref{lemma:nat_lemma}) 
and the fourth line applies Hoeffding's inequality. 
Finally, taking $\eta = \sqrt{8 (\ndim(\Pi)\log(TK^2)+ \log(8/\delta))}$ and 
noting that $V(\pi) \ge \sqrt{T}$ 
concludes the proof of Lemma~\ref{lemma:bounded_part}. 
\end{proof}

\subsection{Proof of Corollary~\ref{cor:fixed}}
\label{app:subsec_cor_fixed}

\begin{proof}[Proof of Corollary~\ref{cor:fixed}]
When $C_* \le 1/T$, the first term is dominated by the constant term,
and the bound follows from the boundedness of $Q(\pi)$. We now focused on 
the case where $C_* > 1/T$. From Theorem~\ref{thm:fix_upper}, 
for any $\beta \geq 10\sqrt{2(\ndim(\Pi) \log(TK^2) + \log(16/\delta))}$, we know that 
$\cL(\hat\pi;\cC,\Pi) \leq  2 \beta \cdot V(\pi^*)$
with probability at least $1-\delta$. In the following, 
we are to bound  $V(\pi^*)$ by 
controlling the three terms in Equation (7) separately. 

\noindent\textit{Bound $\vs(\pi^*)$.}
Note that $D_t:=\frac{\ind\{A_t=\pi^*(X_t)\}}{e(X_t,\pi^*(X_t))^2}$, 
$t=1,\dots,T$, 
are i.i.d.~random variables with $\EE[D_t]=\EE[ e(X ,\pi^*(X))^{-1}]$.
Recall that $C_*\leq \inf_{x}\mu(x,\pi^*(x))$ 
is the overlap for the optimal policy, so $|D_t|\leq 1/C_*^2$. 
We also know that
\$
\Var(D_t) \leq \EE[D_t^2] = \EE\big[ e(X_t,\pi^*(X_t))^{-3}  \big] \leq C_*^{-3},
\$
Next, invoking Bernstein's inequality (c.f.~Lemma~\ref{lem:bernstein}), we have
for any $x > 0$ that
\$
\PP\bigg(  \sum_{t=1}^T \frac{\ind\{A_t=\pi^*(X_t)\}}{e(X_t,\pi^*(X_t))^2} - 
T\cdot \EE\big[ e(X ,\pi^*(X))^{-1}\big] \geq x \bigg)
\leq \exp\bigg( \frac{-x^2/2}{T /C_*^3 + x/(3C_*^2)}  \bigg)
\$
Take $x = 2\log(1/\delta) \sqrt{T C_* ^{-3}}$. Since $C_* T > 1$,  we know 
\$
\frac{x^2/2}{T/C_*^3 + x/(3C_*^2)} = 
\frac{2\log^2(1/\delta)}{1 + \frac{2}{3}\log (1/\delta) /\sqrt{TC_*}} \geq \log(1/\delta) 
\$
as long as $\log(1/\delta) \geq 1$.
Therefore, with probability at least $1-\delta$, we have 
\$
\vs(\pi^*)&\leq 
\frac{1}{T}  
\sqrt{ T\cdot \EE\big[ e(X ,\pi^*(X))^{-1}\big] + 2\log(1/\delta) \sqrt{T / C_* ^{3}} } \\ 
&\leq \frac{1}{\sqrt{T}} \cdot\sqrt{C_*^{-1}  + 2 \log(1/\delta)\cdot  C_*^{-3/2} \cdot T^{-1/2} } 
\leq \frac{\log(1/\delta)}{\sqrt{C_*T}}.
\$
where 
the last inequality uses the fact that $\sqrt{a+b}=\sqrt{a}\cdot\sqrt{1+b/a} \leq \sqrt{a} \cdot b/(2a) $ for any $a,b>0$. 

\noindent\textit{Bounding $\vp(\pi^*)$.} 
Since $e(X_t,\pi^*(X_t))\geq C^*$ almost surely, we directly have the upper bound 
\$
\vp(\pi^*) \leq \frac{\sqrt{T/C_*}}{T} = \frac{1}{\sqrt{C_* T}}.
\$

\noindent\textit{Bounding $\vh(\pi^*)$.} 
Finally, the boundedness of $e(X_t,\pi^*(X_t))$ gives 
\$
\vh(\pi^*) \leq \frac{1}{T}  (T/C_*^3)^{1/4} \leq \frac{1}{(C_* T)^{3/4}}.
\$

Combining the three bounds, when $C_*\ge 1/T$, 
we have that with probability at least $1-\delta$ that
\$
V(\pi^*) \leq \max\bigg\{\frac{\log(1/\delta)}{\sqrt{C_*T}} ,~ \frac{1}{\sqrt{C_*T}},~
\frac{1  }{(C_*T)^{3/4}}   \bigg\} = \frac{2\log(1/\delta)}{\sqrt{C_*T}}.
\$
Further taking a union bound with the event  $\cL(\hat\pi;\cC,\Pi) \leq  2 \beta \cdot V(\pi^*)$ from Theorem~\ref{thm:fix_upper}, 
we know that with probability at least $1-\delta$, 
\$
\cL(\hat\pi;\cC,\Pi) &\leq  2c  \cdot \sqrt{  \ndim(\Pi) \log(TK^2) + \log(2/\delta) } 
\cdot \frac{\log(1/\delta)}{\sqrt{C_*T}}\\
&\leq 2c  \cdot \sqrt{\ndim(\Pi) \log(TK^2)   } 
\cdot \frac{\log(2/\delta)^{3/2}}{\sqrt{C_*T}}\\
&\leq  2c  \cdot \sqrt{ \frac{ \ndim(\Pi)\log(TK^2) }{C_* T}}
\cdot \log(2/\delta)^{3/2},
\$
which concludes the proof of Corollary~\ref{cor:fixed}.
\end{proof}

\subsection{Proof of Corollary~\ref{cor:general}}
\label{app:cor_general}

\begin{proof}[Proof of Corollary~\ref{cor:general}]
  Under the setup of Theorem~\ref{thm:fix_upper}, we know that with probability at least $1-\delta$,
  \$
  \big| \hat{Q}(\pi) - Q(\pi) \big| \leq \beta\cdot V(\pi),\quad \text{for all }\pi\in\Pi.
  \$
  Since $\hat\pi = \argmax_{\pi\in\Pi}\{\hat{Q}(\pi)-\beta\cdot V(\pi)\}$, 
  we see that 
  \$
  \hat{Q}(\hat\pi) &\geq 
  \sup_{\pi\in\Pi} \big\{ \hat{Q}(\pi)-\beta\cdot V(\pi) + \beta\cdot V(\hat\pi) \big\} \\ 
  &\geq \sup_{\pi\in\Pi} \big\{  {Q}(\pi) - \beta\cdot V(\pi) -\beta\cdot V(\pi)  \big\},
  \$
  where the second inequality uses the uniform bound above and the fact that 
  $V(\hat\pi)\geq 0$. Thus, we conclude the proof of Corollary 4.4. 
\end{proof}

\subsection{Proof of Theorem~\ref{thm:lower}}
\label{app:subsec_lower}
\begin{proof}[Proof of Theorem~\ref{thm:lower}]
Let $S=\{x_1,\dots,x_d\}\subseteq \cX$ with $d:=\ndim(\Pi)$ be the set 
that is N-shattered by $\Pi$ (c.f.~Definition 2.2). 
That is, there exists 
two functions $f_1,f_2\colon \cX\to \cA$ such that 
$f_1(x_i)\neq f_2(x_i)$ for all $i=1,\dots,d$, and 
for every subset $S_0\subseteq S$, there exists a policy $\pi\in \Pi$ 
such that $\pi(x)=f_1(x)$ for all $x\in S_0$ 
and $\pi(x)=f_2(x)$ for all $x\in S\backslash S_0$. 
We let $\cV=\{\pm 1\}^d$ be an index set of size $2^d$.

Let $\delta\in(0,1/4)$ be a constant to be decided later. 
We consider a subset of instances $\cR_0 (C_*,T,\cA,\Pi) = \{R_v\}_{v\in \cV}\subseteq \cR(C_*,T,\cA,\Pi)$ 
indexed by all $v\in \cV$.  
Each element $R_v = (\cC_v,e)\in \cR_{0} (C_*,T,\cA,\Pi)$ 
corresponds to $\cC_v=(\rho,\mu_v,\cA)$, where  $\rho$ is the uniform distribution over $S$, 
and the reward distribution $\mu_v$ is given by 
\$
\PP(Y(a) \given X=x_i) \sim 
\begin{cases}
&\textrm{Bern}(1/2),\quad a = f_1(x_i) \\
&\textrm{Bern}(1/2 + v_i \cdot \delta),\quad a = f_2(x_i), \\ 
&\textrm{Bern}(0),\quad \textrm{otherwise}. 
\end{cases}
\quad \forall x_i\in S.
\$
Since $S$ is N-shattered by $\Pi$, 
for every $v\in \cV$, there exists 
some $\pi_v^*\in \Pi$ such that 
$\pi_v^*(x_i)=f_1(x_i)$ for all $v_i <0$ and 
$\pi_v^*(x_i)=f_2(x_i)$ for all $v_i>0$, which is the optimal 
policy for $\cC_v$ by definition. 
Since $\delta\in(0,1/4)$, we know that 
for any $\pi\in\Pi$, 
\#\label{eq:L_single}
\cL(\pi;\cC_v,\Pi)  \geq \frac{1}{d}\sum_{i=1}^d \delta\cdot \ind\{\pi(x_i) \neq \pi_v^*(x_i)\} .
\#
Also, we define the fixed behavior policy by 
\$
e(x_i,a) = 
\begin{cases}
&C_*,\quad a = f_1(x_i) \\
&C_*,\quad a = f_2(x_i), \\ 
&(1-2C_*)/(K-2),\quad \textrm{otherwise}. 
\end{cases}
\quad \forall x_i\in S.
\$
For any $v\in \cV$, we let $\PP_v(\cdot)$ and $\EE_v[\cdot]$ 
denote the joint distribution of data under $(\cC_v,e)$. 
Then for any (data-dependent) policy $\hat\pi$, we have 
\#\label{eq:lb_1}
\sup_{R_v\in \cR_0(C^*,T)} \EE_{v}\big[ \cL(\hat\pi;\cC_v,\Pi)  \big]
& \geq \frac{1}{2^{d}} \sum_{v\in \cV} \EE_{v}\big[  \cL(\hat\pi;\cC_v,\Pi)  \big]
\geq  \frac{1}{2^{d}} \sum_{v\in \cV} \frac{1}{d}\sum_{i=1}^d \delta \cdot \EE_{v}\big[ \ind\{ \hat\pi(x_i) \neq \pi_{v}^*(x_i)  \big],
\#
where the last equality uses~\eqref{eq:L_single}. 
Now fix any $v\in \cV$, and for any $i\in[d]$, 
we denote $M_i[v]\in\{\pm 1\}^d$ as the vector
that only differs from $v$ in entry $i$, i.e., 
$(M_i[v])_i = -v_i$ and $(M_i[v])_j=v_j$ for all $j\neq i$. 
Thus 
\#\label{eq:lb_2}
\eqref{eq:lb_1} 
&\geq \frac{\delta}{d\cdot 2^{d}} \sum_{i=1}^d  \sum_{v\in \cV}   \EE_{v}\big[ \ind\{ \hat\pi(x_i) \neq \pi_{v}^*(x_i)  \big] \notag \\
&= \frac{\delta}{d\cdot 2^{d}} \sum_{i=1}^d \sum_{v\in\cV\colon v_i=1} \big\{  \EE_{v}\big[ \ind\{ \hat\pi(x_i) \neq \pi_{v}^*(x_i)  \big] +  \EE_{M_i[v]}\big[ \ind\{ \hat\pi(x_i) \neq \pi_{M_i[v]}^*(x_i)  \big] \big\} \notag \\ 
&= \frac{\delta}{d\cdot 2^{d}} \sum_{i=1}^d \sum_{v\in\cV\colon v_i=1} \big\{  \EE_{v}\big[ \ind\{ \hat\pi(x_i) \neq f_2(x_i)  \big] +  \EE_{M_i[v]}\big[ \ind\{ \hat\pi(x_i) \neq f_1(x_i)  \big] \big\} \notag \\ 
&\geq \frac{\delta}{d\cdot 2^{d}} \sum_{i=1}^d \sum_{v\in\cV\colon v_i=1} \big\{  \EE_{v}\big[ \ind\{ \hat\pi(x_i) \neq f_2(x_i)  \big] + 1 - \EE_{M_i[v]}\big[ \ind\{ \hat\pi(x_i) \neq f_2(x_i)  \big] \big\}\notag  \\ 
&\geq  \frac{\delta}{d\cdot 2^{d}} \sum_{i=1}^d \sum_{v\in\cV\colon v_i=1} \big( 1- \textrm{TV}(\PP_v, \PP_{M_i[v]})\big),
\#
where the second line is an equivalent decomposition of $v\in \cV$, 
the third line uses the fact that $\pi_v^*(x_i)=f_2(x_i)$ for $v_i>0$ 
and $\pi_{v^*(x_i)}=f_1(x_i)$ for $v_i<0$, 
and $\textrm{TV}(P,Q)$ is the Total-Variation distance between 
distributions $P,Q$. 
Using the relationship between TV distance and KL divergence (c.f.~\cite[Lemma 2.6]{tsybakov2004introduction}), 
\#\label{eq:tv_kl}
\big( 1- \textrm{TV}(\PP_v, \PP_{M_i[v]})\big) 
\geq \frac{1}{2} \exp\big(-\textrm{KL}(\PP_{v}\|\PP_{M_i[v]})\big),
\#
where $\textrm{KL} (\PP_{v }  \| \PP_{M_i[v]} )$
denotes the KL divergence of $\{X_t,Y_t,A_t\}$ under $R_{v }$ versus under $R_{M_i[v]}$.
By construction, for any $j$, the conditional log-likelihood of reward $y\in \{0,1\}$ is 
\$
&\log \PP_{v}(y\given X_t=x_j,A_t=a) \\
&= \log(1/2) \ind\{a=f_1(x_j)\}
+ \big(y\log(1/2+\delta) + (1-y)(1-\log(1/2+\delta) \big) \cdot\ind\{v_j>0,a=f_2(x_j)\} \\
&\quad + \big(y\log(1/2-\delta) + (1-y)(1-\log(1/2-\delta) \big) \cdot\ind\{v_i<0,a=f_2(x_j)\} \\ 
&=\log(1/2) \ind\{a=f_1(x_j)\}+ (1-y) \ind\{a=f_2(x_j)\} \\
&\quad + (2y-1) \log(1/2+\delta) \cdot\ind\{v_j>0,a=f_2(x_j)\}
+ (2y-1) \log(1/2-\delta) \cdot\ind\{v_j <0,a=f_2(x_j)\}.
\$
Note that $v$ and $M_i[v]$ only differ in entry $i$. 
We then have 
\$
\log\bigg(\frac{\PP_{v} }{\PP_{M_i[v]} }(x_j,y,a)\bigg)
&= (2y-1) \log \bigg( \frac{1/2+\delta}{1/2-\delta} \bigg) \ind\{a=f_2(x_j),j=i\}.
\$
Therefore, the KL divergence can be bounded as 
\#\label{eq:kl_bd}
\textrm{KL} (\PP_{v}  \| \PP_{v'}) &\leq \sum_{t=1}^T \EE\big[  \PP_v\big(A_t=f_2(x_i),X_t=x_i\big) 
\EE_v[2Y-1\given A_t,X_t] \big]\log \bigg( \frac{1/2+\delta}{1/2-\delta} \bigg) 
\leq  \frac{T}{d}\cdot  12\delta^2C_* ,
\#
where the last inequality uses the fact that $\log(1+2\delta)\leq 2\delta$ 
and $\log(1-2\delta) \geq -4\delta$ for $\delta\in(0,1/4)$.  
Combining~\eqref{eq:lb_2},~\eqref{eq:tv_kl}, and~\eqref{eq:kl_bd}, we have 
\$ 
\eqref{eq:lb_2} \geq \frac{\delta}{d\cdot 2^d} \sum_{i=1}^d \sum_{v\in \cV\colon v_i=1}
\frac{1}{2} \exp(- 12\delta^2C_* T /d)
= \delta \cdot \exp(- 12\delta^2C_* T /d).
\$
Finally, taking $\delta=\sqrt{d /(24C_*T)}$, we arrive at 
\#
&\sup_{(\cC,e)\in \cR (C_*,T,\cA,\Pi)} \EE_{v}\big[ \cL(\hat\pi;\cC ,\Pi)  \big]\\
&\geq 
\sup_{R_v\in \cR_0(C_*,T,\cA,\Pi)} \EE_{v}\big[ \cL(\hat\pi;\cC_v,\Pi)  \big]
\geq \frac{\exp(-1/2)}{\sqrt{24}}\sqrt{\frac{d}{C_*T}}\geq 0.12 \sqrt{\frac{d }{C_*T}},
\#
which concludes the proof of Theorem~\ref{thm:lower}. 
\end{proof}

\section{Proof of adaptive results}

\subsection{Proof of Lemma~\ref{lem:adapt_1}}
\label{app:lem_adapt_1}
\begin{proof}[Proof of Lemma~\ref{lem:adapt_1}]
  First, note that 
  $
  \log (896T^2\cdot V(\pi)^2) \leq 2\log(30T\cdot V(\pi))
  $. 
  We define the event of interest as 
  \$ 
   \cE &\defn \bigg\{\sup_{\pi \in \Pi} \Big|\frac{1}{T} \sum^T_{t=1}
  \frac{\hgamma_t(\pi) - \mu(X_t,\pi(X_t))}
  {V(\pi) \sqrt{\log (896T^2\cdot V(\pi)^2)}} \Big|\ge \xi\bigg\}\\
  &=
  \bigg\{  \Big| \frac{1}{T} \sum^T_{t=1}
  \frac{\hgamma_t(\pi^\dagger) - \mu(X_t,\pi^\dagger(X_t))}
  {V(\pi^\dagger)\sqrt{\log (896T^2\cdot V(\pi^\dagger)^2)}} \Big|\ge \xi\bigg\},
  \$
  where we let
  \$\pi^\dagger = \argmax_{\pi\in\Pi}~\bigg\{ \Big| \frac{1}{T} \sum^T_{t=1}
  \frac{\hgamma_t(\pi) - \mu(X_t,\pi(X_t))}
  {V(\pi)\sqrt{\log (896T^2\cdot V(\pi)^2)}} \Big| \bigg\}.
  \$
  Note that $\pi^\dagger$ is measurable with respect to $\{X_t,A_t,Y_t\}_{t=1}^T$. 
  Based on the symmetric copies, we also define 
\$
\vs'(\pi) = \frac{1}{T}  
\bigg( \sum^T_{t=1} \frac{\ind \{A_t' = \pi(X_t)\}}{e_t(X_t,\pi(X_t)\given \cH_t)^2}\bigg)^{1/2}.
\$
Next, we define two additional events 
  \$
  \cE_1 &= \bigg\{   \frac{1}{T} \Big|\sum^T_{t=1} \hgamma'_t(\pi^\dagger) - \mu(X_t,\pi^\dagger(X_t))\Big| 
  \ge 2 \vp(\pi^\dagger) \sqrt{ \log(896T^2\cdot\vp(\pi^\dagger)^2)} \bigg\},  \\ 
  \cE_2 & = \Big\{\vs'(\dpi)  \ge 2\cdot \max \big\{\vh(\dpi) ,\vp(\dpi)   \big\}\Big\}.
  \$
  The following lemma controls the 
  conditional probability of $\cE_1$ and 
  $\cE_2$, whose proof is in
  Appendix~\ref{app:proof_bd_events}.
  \begin{lemma}\label{lem:bd_events}
  Under the above notations, we have 
  \$
  \PP\big( \cE_1 \biggiven \{X_t,A_t,Y_t\}_{t=1}^T\big)\leq 1/4, \quad \textrm{and}\quad 
  \PP\big( \cE_2 \biggiven \{X_t,A_t,Y_t\}_{t=1}^T\big)\leq 1/4.
  \$
  \end{lemma}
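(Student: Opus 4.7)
The plan is to bound $\PP(\cE_1 \mid \cdot)$ and $\PP(\cE_2 \mid \cdot)$ separately by a second-moment (Chebyshev) argument that exploits the conditional structure of the decoupled tangent sequence, mirroring the fixed-policy Lemma~\ref{lemma:copy_moments}. The crucial observation is that once we condition on $\{X_t,A_t,Y_t\}_{t=1}^T$, the random policy $\dpi$, the histories $\cH_t$, the fitted models $\hmu_t$, and the propensities $e_t(X_t,\dpi(X_t)\mid \cH_t)$ are all deterministic, while by construction the tangent copies $\{(A_t',Y_t')\}_{t=1}^T$ are mutually independent with $A_t'\sim e_t(X_t,\cdot\mid\cH_t)$ and $Y_t'\mid A_t',X_t \sim \PP_{Y(A_t')\mid X=X_t}$. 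Throughout, all probabilities and moments below are taken conditional on $\{X_t,A_t,Y_t\}_{t=1}^T$.

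For $\cE_1$, I would first check the standard AIPW unbiasedness $\EE[\hgamma_t'(\dpi)\mid\cdot]=\mu(X_t,\dpi(X_t))$. A direct squared-deviation computation then gives
\$
\Var\big(\hgamma_t'(\dpi)\bigmid\cdot\big) \;\le\; \frac{1}{e_t(X_t,\dpi(X_t)\mid\cH_t)},
\$
the cross term in the expansion actually contributing a negative piece so that only the inverse-propensity term survives (here I use $Y_t',\hmu_t\in[0,1]$). By conditional independence across $t$, the variance of $\tfrac{1}{T}\sum_t(\hgamma_t'(\dpi)-\mu(X_t,\dpi(X_t)))$ is at most $\vp(\dpi)^2$, and Chebyshev's inequality with threshold $2\vp(\dpi)$ yields the $1/4$ bound.

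For $\cE_2$, the same idea applies to $\vs'(\dpi)^2 = \tfrac{1}{T^2}\sum_t \ind\{A_t'=\dpi(X_t)\}/e_t(X_t,\dpi(X_t)\mid\cH_t)^2$. A direct calculation gives $\EE[\vs'(\dpi)^2\mid\cdot]=\vp(\dpi)^2$ and $\Var(\vs'(\dpi)^2\mid\cdot)\le \vh(\dpi)^4$, again exploiting independence of the copies. Chebyshev with threshold $2\vh(\dpi)^2$ gives $\PP(\vs'(\dpi)^2 \ge \vp(\dpi)^2+2\vh(\dpi)^2\mid\cdot)\le 1/4$. The conclusion follows by noting $\vp^2+2\vh^2 \le 4\max\{\vp^2,\vh^2\}$, so $\{\vs'(\dpi)\ge 2\max\{\vh(\dpi),\vp(\dpi)\}\}$ is contained in the Chebyshev event above.

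There is no real obstacle; the argument is essentially the adaptive-data analogue of Lemma~\ref{lemma:copy_moments}. The only point requiring care is the conditioning: since $\dpi$ is a functional of the full data $\{X_t,A_t,Y_t\}_{t=1}^T$, it is legitimate to treat $\vp(\dpi),\vh(\dpi)$ and the propensities along $\dpi$ as fixed constants when invoking Chebyshev against the randomness of the tangent copies, and the decoupled construction guarantees the conditional independence on which both variance computations rely.
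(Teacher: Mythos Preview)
Your proposal is correct and follows essentially the same approach as the paper's proof: both use a Chebyshev/second-moment argument exploiting the conditional independence of the tangent copies $\{(A_t',Y_t')\}_{t=1}^T$ given $\{X_t,A_t,Y_t\}_{t=1}^T$, together with the observation that $\dpi$, the propensities, and $\vp(\dpi),\vh(\dpi)$ are all measurable with respect to the conditioning $\sigma$-field. Your treatment of $\cE_2$ (working directly with $\vs'(\dpi)^2$ and bounding $\vp^2+2\vh^2\le 4\max\{\vp^2,\vh^2\}$) is in fact the same route taken in the i.i.d.\ analogue, Lemma~\ref{lemma:copy_moments}, and is slightly cleaner than the paper's adaptive proof, which first passes from $\vs'$ to $\vs'^2$ via $(\zeta+\vp)^2\ge \zeta^2+\vp^2$; the substance is identical.
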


Since $\cE$ is measurable with respect to $\{X_t,A_t,Y_t\}_{t=1}^T$, by
the tower property we 
have 
\$
\PP\big(\cE_1^c \cap \cE_2^c \cap \cE \biggiven \{X_t \}_{t=1}^T\big)
    &= \EE\Big[ \EE\big[ \ind_{\{\cE_1^c \cap \cE_2^c\}} \ind_{\{ \cE \}}  \biggiven \{X_t,A_t,Y_t\}_{t=1}^T \big] \Biggiven \{X_t\}_{t=1}^T \Big] \\ 
&=\EE\Big[ \PP\big(  \cE_1^c \cap \cE_2^c  \biggiven \{X_t,A_t,Y_t \}_{t=1}^T \big) 
\cdot \ind_{\{ \cE \}}  \Biggiven \{X_t\}_{t=1}^T \Big] \\ 
&\geq \frac{1}{2} \cdot \PP\big(\cE\given \{X_t\}_{t=1}^T\big),
\$
where the last line uses Lemma~\ref{lem:bd_events} with a union bound. 
We now proceed to bound 
the left-handed side. 
For any $\xi\geq 4$, 
on the event $\cE\cap \cE_1^c$, the triangle inequality implies that 
\#\label{eq:tail_1}
 \frac{1}{T} \Big| \sum^T_{t=1} \hgamma'_t(\pi^\dagger) - &\hgamma_t (\pi^\dagger)\Big| 
\ge \frac{1}{T} \Big|\sum^T_{t=1} \hgamma_t(\pi^\dagger) - \mu(X_t,\pi^\dagger(X_t))\Big| 
- \frac{1}{T} \Big|\sum^T_{t=1} \hgamma_t'(\pi^\dagger) - \mu(X_t,\pi^\dagger(X_t))\Big| \notag\\
&\geq \xi \cdot V(\dpi)\sqrt{\log (896T^2\cdot V(\pi^\dagger)^2)} - 2\vp (\dpi) \geq \frac{\xi}{2}\cdot V(\dpi)\sqrt{\log (896T^2\cdot V(\pi^\dagger)^2)},
\#
where we used the fact that $\xi \ge 4$ and $V(\dpi)\geq \vp(\dpi) \geq 1$. 
Since $V(\dpi)\geq \vs'(\dpi)/2$ 
on the event $\cE_2^c$, 
we know that on the event $\cE\cap \cE_1^c \cap \cE_2^c$, we additionally have 
\#\label{eq:tail_2}
\frac{\xi}{2}\cdot V(\dpi) \geq \frac{\xi}{4}\cdot \big( V(\dpi) + \vs'(\dpi)/2\big) 
\geq \frac{\xi}{8}\cdot \big( \vs(\dpi) + \vs'(\dpi) \big) .
\#
{Taking $a,b\in \RR^T$ 
whose $t$-th entries are given by  $a_t= \ind\{A_t=\dpi(X_t)\} /e_t(X_t,\dpi(X_t)\given \cH_t)$
and $b_t= \ind\{A_t'=\dpi(X_t)\} /e_t(X_t,\dpi(X_t)\given \cH_t) $, 
the triangle inequality implies 
\#\label{eq:tail_3}
\vs(\dpi) + \vs'(\dpi)
&= \frac{\|a\|_2  + \|b\|_2 }{T  } 
\geq \frac{\|a+b\|_2 }{ T  } 
=   \frac{1}{ T}  
\bigg( \sum^T_{t=1} \hat{U}_t(\pi)\bigg)^{1/2},
\# 
where 
recall that
\$
\hat{U}_t(\pi) = 
\frac{(\ind \{A_t = \pi(X_t)\}+\ind\{A_t'=\pi(X_t)\})^2}
{e_t(X_t,\pi(X_t)\given \cH_t)^2}.
\$
Combining~\eqref{eq:tail_1},~\eqref{eq:tail_2} and~\eqref{eq:tail_3}, we know that 
\#\label{eq:joint_bd}
&\PP\big(\cE_1^c \cap \cE_2^c \cap \cE \biggiven \{X_t\}_{t=1}^T\big) \notag \\
&\leq \PP\Bigg( \frac{1}{T} \Big|\sum^T_{t=1} \hgamma'_t(\pi^\dagger) - \hgamma_t(\pi^\dagger)  \Big| \geq  \frac{\xi}{ 8T  }\cdot 
\sqrt{  \sum^T_{t=1} \hat{U}_t(\dpi) } \sqrt{  \log \Big(56\sum^T_{t=1} \hat{U}_t(\dpi) } \Bigggiven \{X_t\}_{t=1}^T\Bigg) \notag \\ 
& \qquad \leq \PP\Bigg( \sup_{\pi\in\Pi}  \frac{ \big|\sum^T_{t=1} \hgamma'_t(\pi ) - \hgamma_t(\pi ) \big|}{ \sqrt{\sum^T_{t=1}{\hat{U}_t(\pi)}} \cdot \sqrt{\log \big(56  \sum^T_{t=1}{\hat{U}_t(\pi)} \big)} }
\geq \frac{\xi}{8}  \Bigggiven \{X_t\}_{t=1}^T  \Bigg).
\#

Next, note that the quantities $\hat\Gamma_t(\pi)$, $\hat\Gamma_t'(\pi)$, 
$\vp(\pi)$, and $\hat{U}_t(\pi)$ only depend on $\pi\in\Pi$ 
through the vector $\pi(X_{1:T}):=(\pi(X_1),\dots,\pi(X_T))$. 
For clarity we use the notation 
$\hat\Gamma_t(\pi(X_{1:T}))$, $\hat\Gamma_t'(\pi(X_{1:T}))$, 
$\vp(\pi(X_{1:T}))$, and $\hat{U}_t(\pi(X_{1:T}))$ interchangeably 
to emphasize such dependence. Recall that 
\$
\Pi(X) = \big\{ (\pi(X_1),\dots,\pi(X_T))\in \cA^{T}\colon \pi\in \Pi   \big\}.
\$
We then have 
\#\label{eq:nata_union}
\eqref{eq:joint_bd}
&= \PP\Bigg( \sup_{a_{1:T}\in\Pi(X)} \frac{\big|\sum^T_{t=1}  \hgamma'_t(a_{1:T} ) - \hgamma_t(a_{1:T}) \big|}{\sqrt{\sum^T_{t=1} \hat{U}_t(a_{1:T})}\cdot \sqrt{\log\big(56\sum^T_{t=1} \hat{U}_t(a_{1:T})\big)}}
\geq \frac{\xi}{8} \Bigggiven \{X_t\}_{t=1}^T  \Bigg) \notag \\ 
&\leq \sum_{a_{1:T}\in \Pi(X)}\PP\Bigg( \frac{\big|\sum^T_{t=1}  \hgamma'_t(a_{1:T} ) - \hgamma_t(a_{1:T}) \big|}{\sqrt{\sum^T_{t=1} \hat{U}_t(a_{1:T})}\cdot \sqrt{\log\big(56\sum^T_{t=1} \hat{U}_t(a_{1:T})\big)}}
\geq \frac{\xi}{8}  \Bigggiven \{X_t\}_{t=1}^T  \Bigg) \notag \\ 
&\leq \big|\Pi(X)\big| \cdot \sup_{a_{1:T}\in \Pi(X)}\PP\Bigg( \frac{\big|\sum^T_{t=1}  \hgamma'_t(a_{1:T} ) - \hgamma_t(a_{1:T}) \big|}{\sqrt{\sum^T_{t=1} \hat{U}_t(a_{1:T})}\cdot \sqrt{\log\big(56\sum^T_{t=1} \hat{U}_t(a_{1:T})\big)}}
\geq \frac{\xi}{8} \Bigggiven \{X_t\}_{t=1}^T  \Bigg) \notag \\ 
&\leq \big|\Pi(X)\big| \cdot \sup_{\pi\in\Pi}~ \PP\Bigg(  \frac{ \big|\sum^T_{t=1} 
  \hgamma'_t(\pi  ) - \hgamma_t(\pi )\big|}{    \sqrt{ \sum_{t=1}^T \hat{U}_t(\pi ) } \cdot\sqrt{\log\big(56\sum_{t=1}^T \hat{U}_t(\pi )\big)} } 
\geq \frac{\xi}{8}  \Bigggiven \{X_t\}_{t=1}^T  \Bigg),
\#
where the second line uses a union bound, 
and 
the last line is because 
every $a_{1:T}\in \Pi(X)$
corresponds to a fixed choice of $\pi\in\Pi$. 
This completes the proof of Lemma~\ref{lem:adapt_1}.} 
\end{proof}

\subsection{Proof of Lemma~\ref{lem:adapt_2}}
\label{app:lem_adapt_2}
\begin{proof}[Proof of Lemma~\ref{lem:adapt_2}] 
Fix any $\pi\in\Pi$. 
Since we always condition on $\{X_t\}_{t=1}^T$ 
and have fixed $\pi$, only for this proof, 
we suppress the dependence on $\pi$ and $X_t$, and write 
$\hat\Gamma_t$, $\hat\Gamma_t'$, $\hat{U}_t$, 
$\gamma_t( z)$, $\gamma_t'( z)$, and $\hat{u}_t( z)$ instead 
of the notations used in Section~\ref{subsec:proof_upper_adapt}. 

First, recall that $\cH_T = \{X_t,A_t,Y_t\}_{t=1}^{T-1}$, 
so $\cH_T\cup\{X_T\} \supseteq \{X_t\}_{t=1}^{T}$.   
By the tower property, we have 
\#\label{eq:peeling_first}
\notag &\PP\Bigg(    \frac{\big|\sum^T_{t=1} \{ \hgamma'_t  - \hgamma_t  \} \big|}{ 
  \sqrt{ \sum_{t=1}^T \hat{U}_t  } \cdot\sqrt{\log\big(56\sum_{t=1}^T \hat{U}_t \big)} } 
\geq \frac{\xi}{8}  
\Bigggiven \{X_t\}_{t=1}^T\Bigg) \notag \\ 
\notag &= \EE\Bigg[   \PP\bigg(    
  \frac{\big|\sum_{t=1}^{T-1}  \{ \hgamma'_t- \hgamma_t \} + \{ \hgamma'_T- \hgamma_T\}\big|}
{ \sqrt{ \sum^{T-1}_{t=1} \hat{U}_t + \hat{U}_T } \cdot \sqrt{\log\big(56 [ \sum^{T-1}_{t=1} \hat{U}_t + \hat{U}_T] \big)} }\geq    \frac{\xi}{8}   \bigggiven \cH_T \cup\{X_T\}\bigg) \Bigggiven \{X_t\}_{t=1}^T \Bigg] \notag \\ 
&= \EE\Bigg[   \PP\bigg(    
  \frac{\big|\sum_{t=1}^{T-1}  \{ \hgamma'_t - \hgamma_t \} + \eps_T\cdot\{ \hgamma'_T- \hgamma_T\}\big|}
{ \sqrt{ \sum^{T-1}_{t=1} \hat{U}_t + \hat{U}_T } \cdot \sqrt{\log\big(56 [ \sum^{T-1}_{t=1} \hat{U}_t + \hat{U}_T] \big)}   }  \geq  \frac{\xi}{8}  \bigggiven \cH_T\cup\{X_T\}\bigg) \Bigggiven \{X_t\}_{t=1}^T \Bigg] . 
\#
To see why the last line is true, simply note that 
conditional on $\{X_t,A_t,Y_t\}_{t=1}^{T-1}\cup\{X_T\}$, 
the two random vectors have the same distribution: 
\$
\big( \varepsilon_T \cdot \{ \hgamma'_T(\pi ) - \hgamma_T(\pi ) \}, ~ \hat{U}_T(\pi) \big) 
\stackrel{\textrm{d}}{=}\big(   \{ \hgamma'_T(\pi ) - \hgamma_T(\pi ) \}, ~ \hat{U}_T(\pi) \big) ~\Biggiven~ \{X_t,A_t,Y_t\}_{t=1}^{T-1}\cup\{X_T\}.
\$
The above is because 
conditional on $\{X_t,A_t,Y_t\}_{t=1}^{T-1}\cup\{X_T\}$, 
$(A_T,Y_T)$ and $(A_T',Y_T')$ are 
independent and have the same distribution by construction, 
and  that $\hat{U}_T= (\ind \{A_t = \pi(X_t)\}+\ind\{A_t'=\pi(X_t)\})^2 /
e_t(X_t,\pi(X_t)\given \cH_t)^2 $ is invariant to the permutation of $A_t$ and $A_t'$. {Hereafter for notational simplicity, we define 
\$ 
g(z) = \sqrt{z \cdot \log(56z)}.
\$

\noindent\textit{Symmetrization for the $t = T$.}
In the following, we are to apply the symmetrization trick 
to turn~\eqref{eq:peeling_first} into 
a tail probability over $\eps_T$, where
the symmetrization argument is more delicate than 
in the i.i.d.~case. 
Applying the tower property again to the inner probability in~\eqref{eq:peeling_first}, 
we have
\#\label{eq:peeling_first_2}
&\PP\bigg(    
  \frac{\big|\sum_{t=1}^{T-1}  \{ \hgamma'_t - \hgamma_t \} + \eps_T\cdot\{ \hgamma'_T- \hgamma_T\}\big|}
{ \sqrt{ \sum^{T-1}_{t=1} \hat{U}_t + \hat{U}_T } \cdot \sqrt{\log\big(56 [ \sum^{T-1}_{t=1} \hat{U}_t + \hat{U}_T] \big)} }  \geq  \frac{\xi}{8}  \bigggiven \cH_T\cup\{X_T\}\bigg) \notag \\
&
= \EE\big[  P_T(\cH_T, X_T, A_T,Y_T,A_T',Y_T',e_T ,\hat\mu_T).
\biggiven \cH_T\cup\{X_T\}\big] 
\#
Here, we define  
\$
& P_T( \cH_T, X_T, A_T,Y_T,A_T',Y_T',e_T ,\hat\mu_T) \\ 
& :=
   \PP_\eps\Bigg(    
  \frac{\big|\sum_{t=1}^{T-1}  \{ \hgamma'_t - \hgamma_t \} + \eps_T\cdot\{ \hgamma'_T- \hgamma_T\}\big|}
{ g \big( \sum^{T-1}_{t=1} \hat{U}_t + \hat{U}_T \big) }  \geq  \frac{\xi}{8}   
\Bigg) ,
\$
where $\PP_\eps$ denotes the probability over $\eps_T\sim \textrm{Unif}\{\pm 1\}$ 
while treating everything else as fixed values;
$e_T=e_T(\cdot,\cdot\given \cH_T)$ is the sampling rule at time $T$ 
and $\hat\mu_T$ is the estimator for $\mu$ 
that both depend  on $\cH_T$. 
More specifically, by the distribution of $\eps_T$, 
\$
&P_T( \cH_T, X_T, A_T,Y_T,A_T',Y_T',e_T ,\hat\mu_T) \\ 
&= \frac{1}{2} \cdot \ind \bigg\{ {\textstyle \frac{ |\sum_{t=1}^{T-1}  \{ \hgamma'_t - \hgamma_t \} + \{ \hgamma'_T- \hgamma_T\} |}
{ g( \sum^{T-1}_{t=1} \hat{U}_t + \hat{U}_T ) } }  \geq  \frac{\xi}{8} \bigg\}   + \frac{1}{2}\cdot \ind \bigg\{ {\textstyle \frac{ |\sum_{t=1}^{T-1}  \{ \hgamma'_t - \hgamma_t \} - \{ \hgamma'_T- \hgamma_T\} |}
  { g( \sum^{T-1}_{t=1} \hat{U}_t + \hat{U}_T ) } }  \geq  \frac{\xi}{8}   \bigg\},
\$
which is a random variable with respect to $\cH_T$ and $X_T,A_T,Y_T,A_T',Y_T',e_T ,\hat\mu_T$. 

Taking supremum over all possible realization of 
$A_T,A_T',Y_T,Y_T'$ and $e_T(\cdot,\cdot), \hat\mu_T(\cdot,\cdot)$ 
in the potentially larger set $\cZ_T := \cZ$, we have 
\$
&\PP\bigg(    
  \frac{\big|\sum_{t=1}^{T-1}  \{ \hgamma'_t - \hgamma_t \} + \eps_T\cdot\{ \hgamma'_T- \hgamma_T\}\big|}
{ g\big( \sum^{T-1}_{t=1} \hat{U}_t + \hat{U}_T  \big)  }  \geq  \frac{\xi}{8}   \bigggiven \cH_T\cup\{X_T\}\bigg) \\
&\leq 
\sup_{z_T\in \cZ_T} P_T(\cH_T, X_T, a_T,y_T,a_T',y_T',f_T , h_T) 
\Biggiven \cH_T\cup\{X_T\},
\$
where $z_T=(a_T,y_T,a_T',y_T',f_T,h_T)\in \cZ_T=\cZ$ is 
any realization of $A_T,A_T',Y_T,Y_T'$ and $e_T(\cdot,\cdot), \hat\mu_T(\cdot,\cdot)$. 
Now  we recall the definitions (suppressing the dependence on $\pi$ and $\{X_t\}_{t=1}^T$)
\$
\hat\gamma_T(  z_T) &= h_T(X_t,\pi(X_T) ) + \frac{\ind\{a_T=\pi(X_T)\}}{f_T(X_T,\pi(X_T) )} \cdot\big(y_T-h_T(X_T,\pi(X_T) )\big), \\
\hat\gamma_t'( z_T) &=h_T(X_T,\pi(X_T) ) + \frac{\ind\{a_T'=\pi(X_T)\}}{f_T(X_T,\pi(X_T) )} \cdot\big(y_T'-h_T(X_T,\pi(X_T) )\big), \\  
\hat{u}_T(  z_T) &=   \frac{(\ind \{a_T = \pi(X_T)\}+\ind\{a_T'=\pi(X_T)\})^2}{f_T(X_T,\pi(X_T) )^2}.
\$ 
Returning to~\eqref{eq:peeling_first} and~\eqref{eq:peeling_first_2} and recalling the definition of $P_T$, we know that 
\#\label{eq:peeling_T} 
\eqref{eq:peeling_first}
& \leq \EE\Bigg[ \sup_{z_T \in \cZ_T}\PP_{\varepsilon_T}\Bigg(    
  \frac{\big|\sum_{t=1}^{T-1}  \{ \hgamma'_t  - \hgamma_t  \} 
    + \varepsilon_T \cdot \{ \hat\gamma'_T(  z_T ) - \hat\gamma_T(  z_T ) \}\big|}
    {   g\big(\sum^{T-1}_{t=1} \hat{U}_t  + \hat{u}_T(  z_T)\big) }\geq \frac{\xi}{8}  
  \Bigg)\Bigggiven \{X_t\}_{t=1}^T \Bigg] . 
\#
where $\PP_\eps$ denotes the probability over $\eps_T$ 
while conditioning on everything else. Hence, for any $z_T\in \cZ_T$, 
the inner probability in~\eqref{eq:peeling_T} depends on $\{X_t,A_t,Y_t,A_t',Y_t'\}_{t=1}^{T-1}\cup\{X_T \}$. 
So far, we have turned~\eqref{eq:peeling_first} 
into an expectation of the tail probability for $\eps_T$. 
We will continue to do this for $t=T-1$. 


\noindent\textit{Recursive symmetrization for $t = T-1$.} 
We now continue the recursive symmetrization trick for $t=T-1$, 
and obtain an upper bound for  
\#\label{eq:recur_T-1}
\sup_{z_T\in \cZ_T}\PP_{\varepsilon_T}\Bigg(    
  \frac{\big|\sum_{t=1}^{T-1}  \{ \hgamma'_t  - \hgamma_t  \} 
    + \varepsilon_T \cdot \{ \hat\gamma'_T(  z_T ) - \hat\gamma_T(  z_T ) \}\big|}
    {  g\big(\sum^{T-1}_{t=1} \hat{U}_t  + \hat{u}_T(  z_T)\big) }\geq \frac{\xi}{8}  
  \Bigg),
\#
which 
is a random variable  determined by $\{ \hgamma'_t  - \hgamma_t\}_{t=1}^{T-1}\cup\{\hat{U}_t\}_{t=1}^{T-1}\cup\{X_T\}$.
For convenience we will write 
\$
D_T(\eps_T,z_T) := \varepsilon_T \cdot \{ \hat\gamma'_T( z_T )
- \hat\gamma_T( z_T ) \} .
\$ 
Similar to the previous step, we know by the construction of $(A_{T-1}',Y_{T-1}')$ that 
\$
\big( \varepsilon_{T-1} \cdot \{ \hgamma'_{T-1}  - \hgamma_{T-1} \}, ~ \hat{U}_{T-1} \big) 
\stackrel{\textrm{d}}{=}\big(   \{ \hgamma'_{T-1} - \hgamma_{T-1}  \}, ~ \hat{U}_{T-1}  \big) 
~\Biggiven~ \cH_{T-1}\cup\{X_{T-1},X_T\}.
\$
Since~\eqref{eq:recur_T-1} is a random variable 
that can be viewed as a function of $\{ \hgamma'_t  - \hgamma_t\}_{t=1}^{T-1}\cup\{\hat{U}_t\}_{t=1}^{T-1}\cup\{X_T\}$, the above equivalence in distribution 
leads to 
\#
\eqref{eq:recur_T-1} ~\stackrel{\textrm{d.}}{=}~
\sup_{z_T\in \cZ_T}\PP_{\varepsilon_T}\Bigg(    
  \frac{\big|\sum_{t=1}^{T-2}  \{ \hgamma'_t  - \hgamma_t  \} 
   + \eps_{T-1} \cdot (\hgamma'_{T-1}-\hgamma_{T-1}) +  D_T(\eps_T,z_T) \big|}
    { g\big(  \sum^{T-2}_{t=1} \hat{U}_t + \hat{U}_{T-1} + \hat{u}_T(  z_T)\big)}\geq \frac{\xi}{8}  
  \Bigg),
\#
where the above two distributions 
are both conditional on $\{X_t,A_t,Y_t\}_{t=1}^{T-2}\cup\{X_{T-1},X_T\}$.  
Define the $\sigma$-field $\cF_{T-1}:= \sigma(\{X_t,A_t,Y_t\}_{t=1}^{T-2}\cup\{X_{T-1},X_T\})$. 
We have 
\#\label{eq:peel_T-1}
& \EE\Bigg[ 
\sup_{z_T \in \cZ_T}\PP_{\varepsilon_T}\Bigg(    
  \frac{\big|\sum_{t=1}^{T-1}  \{ \hgamma'_t  - \hgamma_t  \} 
    +   D_T( \eps_T,z_T ) \big|}
    {  g\big(  \sum^{T-2}_{t=1} \hat{U}_t + \hat{U}_{T-1} + \hat{u}_T(  z_T)\big) }\geq \frac{\xi}{8}  
  \Bigg) \Bigggiven \cF_{T-1}\Bigg] \notag \\ 
&= \EE\Bigg[ 
  \sup_{z_T \in \cZ_T}\PP_{\varepsilon_T}\Bigg(    
    \frac{\big|\sum_{t=1}^{T-2}  \{ \hgamma'_t  - \hgamma_t  \} 
    + \varepsilon_{T-1} \cdot  (\hgamma'_{T-1}-\hgamma_{T-1}) 
      +   D_T( \eps_T,z_T ) \big|}
      { g\big(  \sum^{T-2}_{t=1} \hat{U}_t + \hat{U}_{T-1} + \hat{u}_T(  z_T)\big) }\geq \frac{\xi}{8}  
    \Bigg) \Bigggiven \cF_{T-1}\Bigg] \notag \\ 
&=\EE\Bigg[ 
 \EE_{\eps_{T-1}}\Bigg[  \sup_{z_T \in \cZ_T}\PP_{\varepsilon_T}\Bigg(    
    \frac{\big|\sum_{t=1}^{T-2}  \{ \hgamma'_t  - \hgamma_t  \} 
    + \varepsilon_{T-1} \cdot  (\hgamma'_{T-1}-\hgamma_{T-1}) 
      +   D_T( \eps_T,z_T ) \big|}
      {  g\big(  \sum^{T-2}_{t=1} \hat{U}_t + \hat{U}_{T-1} + \hat{u}_T(  z_T)\big) }\geq \frac{\xi}{8}  
    \Bigg) \Bigg] \Bigggiven \cF_{T-1}\Bigg],
\#
where the second line uses the equality in conditional distribution, 
and the third line uses the tower property with $\EE_{\eps_{T-1}}$ denoting
the expectation over $\eps_{T-1}$ while conditioning on everything else. 
In particular, we can rewrite the inner expectation as 
\#\label{eq:peel_T-1_2}
&\EE_{\eps_{T-1}}\Bigg[  \sup_{z_T \in \cZ_T}\PP_{\varepsilon_T}\Bigg(    
  \frac{\big|\sum_{t=1}^{T-2}  \{ \hgamma'_t  - \hgamma_t  \} 
  + \varepsilon_{T-1} \cdot  (\hgamma'_{T-1}-\hgamma_{T-1}) 
    +   D_T( \eps_T,z_T ) \big|}
    {  g\big(  \sum^{T-2}_{t=1} \hat{U}_t + \hat{U}_{T-1} + \hat{u}_T(  z_T)\big) }\geq \frac{\xi}{8}  
  \Bigg) \Bigg] \\
&= P_{T-1}\big(\cH_{T-2}, X_{(T-1):T}, A_{T-1},Y_{T-1},A_{T-1}',Y_{T-1}',e_{T-1},\hat\mu_{T-1}\big), \notag
\#
where $P_{T-1}(\cdots)$ is a fixed function, 
and  $e_{T-1}$ and $\hat\mu_{T-1}$ are the sampling propensity 
and regression estimator at time $T-1$. Therefore,
we can write 
\#\label{eq:peel_T-1_3}
\eqref{eq:peel_T-1_2} 
&= \EE\Big[ P_{T-1}\big(\cH_{T-2}, X_{(T-1):T}, A_{T-1},Y_{T-1},A_{T-1}',Y_{T-1}',e_{T-1},\hat\mu_{T-1}\big) \Biggiven \cF_{T-1}\Big] \\ 
&\leq \EE\bigg[ \sup_{z_{T-1}\in \cZ_{T-1}} P_{T-1}\big(\cH_{T-2}, X_{(T-1):T}, a_{T-1},y_{T-1},a_{T-1}',y_{T-1}',f_{T-1},h_{T-1}\big) \bigggiven \cF_{T-1}\bigg]
\#
where $z_{T-1}=(a_{T-1},y_{T-1},a_{T-1}',y_{T-1}',f_{T-1},h_{T-1}) \in \cZ_{T-1}:=\cZ$ 
denotes any possible realization of $(A_{T-1},Y_{T-1},A_{T-1}',Y_{T-1}',e_{T-1},\hat\mu_{T-1})$. 
By the definition of $P_{T-1}$,~\eqref{eq:peel_T-1_3} is upper bounded by 
\$ 
\sup_{z_{T-1}\in \cZ_{T-1}}\EE_{\eps_{T-1}}\Bigg[  \sup_{z_T \in \cZ_T}\PP_{\varepsilon_T}\Bigg(    
  \frac{\big|\sum_{t=1}^{T-2}  \{ \hgamma'_t  - \hgamma_t  \} 
  +  D_{T-1}(\eps_{T-1},z_{T-1})
    + D_T( \eps_T,z_T ) \big|}
    {  g\big(  \sum^{T-2}_{t=1} \hat{U}_t + \hat{u}_{T-1}( z_{T-1}) + \hat{u}_T(  z_T)\big) }\geq \frac{\xi}{8}  
  \Bigg) \Bigg],
\$
where for any $z_{T-1}=(a_{T-1},y_{T-1},a_{T-1}',y_{T-1}',f_{T-1},h_{T-1})\in \cZ_{T-1}:=\cZ$, we write 
\$
D_{T-1}(\eps_{T-1},z_{T-1}) &= \varepsilon_{T-1} \cdot \{ \hat\gamma'_{T-1}( z_{T-1} )
- \hat\gamma_{T-1}( z_{T-1} ) \}, \\ 
\textrm{and}\quad \hat{u}_{T-1}(z_{T-1}) &=  \frac{(\ind \{a_{T-1}  = \pi(X_{T-1})\}+\ind\{a_{T-1}'=\pi(X_{T-1})\})^2}{f_{T-1}(X_{T-1},\pi(X_{T-1}) )^2}.
\$
Returning to~\eqref{eq:peeling_T}, we have $\eqref{eq:peeling_T}$ upped bounded by
\#\label{eq:peeling_T_2}
\EE\Bigg[ \sup_{z_{T-1}\in \cZ_{T-1}}\EE_{\eps_{T-1}}\Bigg[  \sup_{z_T \in \cZ_T}\PP_{\varepsilon_T}\Bigg(    
  \frac{\big|\sum_{t=1}^{T-2}  \{ \hgamma'_t  - \hgamma_t  \} 
  +  D_{T-1}(\eps_{T-1},z_{T-1})
    +  D_T( \eps_T,z_T ) \big|}
    {  g\big(  \sum^{T-2}_{t=1} \hat{U}_t + \hat{U}_{T-1} + \hat{u}_T(  z_T)\big)}\geq \frac{\xi}{8}  
  \Bigg) \Bigg]  \Bigggiven \{X_t\}_{t=1}^T\Bigg] ,
\#
which completes the recursive symmetrization for $t=T-1$. 

\noindent\textit{Continuing the recursive symmetrization.} 
Using similar notations $D_t(\eps_t,z_t)$, $\hat{u}_t(z_t)$ for $z_t\in \cZ_t:=\cZ$, 
we can continue the above recursive symmetrization 
for $t=T-2,T-3,\dots,1$. 
This gives  
\# \label{eq:symmetrized}
\eqref{eq:peeling_T} &\leq 
\sup_{z_1\in \cZ_1} \EE_{\varepsilon_1} \sup_{z_2\in\cZ_2} \EE_{\varepsilon_2} 
\cdots \sup_{z_T\in \cZ_T} \EE_{\varepsilon_T} \ind \Bigg\{  
\sup_{\pi\in\Pi} \bigg|  \frac{ \sum_{t=1}^{T} D_t(\eps_t,z_t)  
}{ g \big(  \sum^{T}_{t=1} \hat{u}_t( z_t) )}  \bigg|  
\geq \frac{\xi}{8}   \Bigg\},
\#
where $\EE_{\eps_t}$ denotes the expectation over the 
randomness in $\eps_t$ while conditioning on everything else. 

We note that in~\eqref{eq:symmetrized}, 
for every $t>1$, the choice of $z_t\in \cZ_t$ that attains the supremum 
depends on the realization of $\eps_1,\dots,\eps_{t-1}$. 
Now let $z_1^*\in \cZ_1$ denote the (fixed) element in $\cZ_1$ that attains 
the supremum in~\eqref{eq:symmetrized}. 
It is fixed since we have conditioned on $\{X_t\}_{t=1}^T$ and 
fixed some $\pi$. 
Then  
the element that attains the second supremum $z_2\in \cZ_2$ depends on $\eps_1$; 
we define it as $z_2^*(\eps_1)\in\cZ_2$ to emphasize such dependence. 
Further, the element $z_3\in \cZ_3$ that achieves  the third supremum 
depends on $\eps_1,\eps_2$, and we denote it by $z_3^*(\eps_{1:2})$, etc. 
So on and so forth, for every $t\in[T]$, 
we can write $z_{t}^*(\eps_{1:(t-1)})$ that attain the supremum $z_t\in \cZ_t$ 
which depends on $\eps_1,\dots,\eps_{t-1}$. 
That is, we can use a $\cZ$-valued tree (c.f.~Definition~\ref{defn:tree}) 
to represent the $2^T-1$ elements 
$z_1^*,z_2^*(\pm 1), z_3^*(\{\pm 1\}^2),\dots,z_T^*(\{\pm 1\}^{T-1})$. 
Figure~\ref{fig:tree_opt} is a visualization.

\begin{figure} 
\centering  
\begin{tikzpicture}[->,>=stealth', thick, main node/.style={circle,draw}]
\node[circle, text=black, circle, draw=black, fill=black!5, scale=1.7, label=right:{$z_1^*$}] (1) at  (0,0) { };
\node[circle, text=black, circle, draw=black, fill=black!5, scale=1.7, label=left:{$z_2^*(-1)$}] (2) at  (-1.5,-1.5)  { }; 
\node[circle, text=black, circle, draw=black, fill=black!5, scale=1.7, label=right:{ $z_2^*(1)$}] (3) at  (1.5,-1.5)  { }; 

\node[circle, text=black, draw=black, fill=black!5, scale=1.7, label=below:{$z_3^*(-1,-1)$}] (4) at  (-3,-3) { }; 
\node[circle, text=black, draw=black, fill=black!5, scale=1.7, label=below:{$z_3^*(-1,1)$}] (5) at  (-1,-3) { }; 
\node[circle, text=black, draw=black, fill=black!5, scale=1.7, label=below:{$z_3^*(1,-1)$}] (6) at  (1,-3) { }; 
\node[circle, text=black, draw=black, fill=black!5, scale=1.7, label=below:{$z_3^*(1,1)$}] (7) at  (3,-3) { }; 

\draw[-] (1) edge [draw=black] (2);
\draw[-] (1) edge [draw=black] (3); 
\draw[-] (2) edge [draw=black] (4); 
\draw[-] (2) edge [draw=black] (5); 
\draw[-] (3) edge [draw=black] (6); 
\draw[-] (3) edge [draw=black] (7); 

\coordinate (a) at (-0.2,-0.4);
\coordinate (b) at (-1.1,-1.35);
\coordinate (c) at (-1.2,-1.7);
\coordinate (d) at (-0.9,-2.65);

\draw[->,magenta,dashed,very thick] (a) -- (b) node [] {};
\draw[->,magenta,dashed,very thick] (c) -- (d) node [] {};
\end{tikzpicture}  
\caption{The optimal $\cZ$-valued tree of depth $T=3$. 
The pink path shows a realization $\eps_1=-1$ and $\eps_2=1$.} 
\label{fig:tree_opt}
\end{figure}
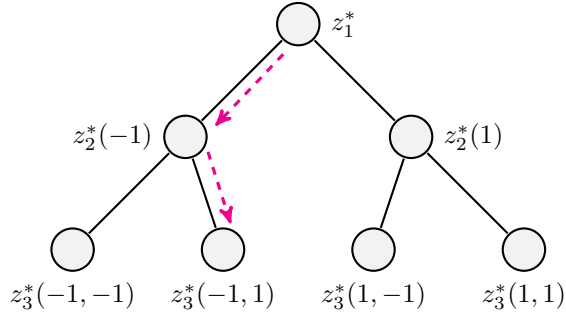

Recall that $\mathbb{T}_T(\cZ)$ is the set of all 
$\cZ$-valued trees of depth $T$, and 
the optimal tree 
\$
\bz^*=(z_1^*,z_2^*(\pm 1), z_3^*(\{\pm 1\}^2),\dots,z_T^*(\{\pm 1\}^{T-1})) \in \mathbb{T}_T(\cZ),
\$ 
as illustrated in Figure~\ref{fig:tree_opt}. 
With this notation, we can rewrite~\eqref{eq:symmetrized}
as
\#\label{eq:symmetrized_2}
\eqref{eq:symmetrized}
= \PP_{\eps}\Bigg( 
\bigg|\frac{\sum^T_{t=1} \eps_t \cdot
\big\{\hat{\gamma}_t(\pi; X_t,z^*_t(\eps_{1:(t-1)})) - \hat{\gamma}_t'(\pi;X_t,z^*_t(\eps_{1:(t-1)}))\big\}}
{  g\big( \sum^T_{t=1} \hat{u}_t(\pi;X_t,z_t^*(\eps_{1:(t-1)}))\big)}\bigg|
\ge \frac{\xi}{8}  \cdot \afactor \Bigg),
\#
where $\PP_{\eps}$ is over the randomness of $\eps_{1:T}$ 
treating everything else as fixed, and we have returned 
to the original notations for $\hat\gamma_t,\hat\gamma_t'$ and $\hat{u}_t$. 
We emphasize that, 
above, the only randomness in $z_t^*(\eps_{1:(t-1)})$ 
is in the realization of $\eps_{1:(t-1)}$, 
while the image set $z_t^*(\{\pm 1\}^{t-1})$ 
is fixed for all $t\in[T]$. Finally, taking the supremum over
all $\bz \in \mathbb{T}_T(\cZ)$
concludes our proof.}
\end{proof}

\subsection{Proof of Lemma~\ref{lem:adapt_3}}
\label{app:lem_adapt_3}

\begin{proof}[Proof of Lemma~\ref{lem:adapt_3}] 
As before, we condition on $\{X_t\}_{t=1}^T$ throughout
the proof.
For any $\bz = \{z_t(\{\pm 1\}^{t-1})\}_{t=1}^T \in \mathbb{T}_T(\cZ)$,
we define two processes
\$
&S_t \defn \sum^t_{s=1} \eps_s\cdot 
\big\{\hat{\gamma}_s \big(\pi;X_s,z_s(\eps_{1:(s-1)})\big) 
- \hat{\gamma}'_s\big(\pi;X_s,z_s(\eps_{1:(s-1)})\big)\big\}, \\ 
&M_t \defn \sum^t_{s=1} 4 \cdot \hat{u}_s\big(\pi;X_s,z_s(\eps_{1:(s-1)})\big), \quad t\in[T],
\$
and $S_0=M_0=0$. 
Next, for any $\lambda>0$, we define
\$
L_t(\lambda) \defn \exp\Big(\lambda S_t -\frac{\lambda^2}{2} M_t\Big),\quad t\in[T]. 
\$
It is clear that $\{S_t\}$ and $\{M_t\}$ are 
adapted to the filtration $\{\cG_t\}$ where 
we define $\cG_t \defn\sigma(\eps_{1:t})$. 
Also, we can check that for any $\lambda > 0$, $L_T(\lambda)$ 
is a super-martingale with respect to $\{\cG_T\}$
with $L_0(\lambda) = 1$.
To see this, we note that 
\#
&\EE[L_T(\lambda) \given \cG_{T-1}]
\notag \\ 
&= L_{T-1}(\lambda) \cdot \exp\Big(-\frac{\lambda^2}{2}
\cdot 4 \hat{u}_T\big(\pi;X_T,z_T(\eps_{1:(T-1)})\big)\Big) \notag \\
\notag &  
\quad \times \EE\bigg[\exp\Big(\lambda \eps_T \cdot \big\{\hat{\gamma}_T \big(\pi;X_T,z_T(\eps_{1:(T-1)})\big) 
- \hat{\gamma}'_T\big(\pi;X_T,z_T(\eps_{1:(T-1)})\big)\big\} \Big)\bigggiven \cG_{T-1}\bigg]\\
\notag & \leq  L_{T-1}(\lambda) \cdot \exp\Bigg(-\frac{\lambda^2}{2}
\cdot 4\hat{u}_T\big(\pi;X_T,z_T(\eps_{1:(T-1)})\big)\\
& \notag \qquad \qquad \qquad \qquad 
+ \frac{\lambda^2}{2} \cdot \Big\{\hat{\gamma}_T \big(\pi;X_T,z_T(\eps_{1:(T-1)})\big) 
- \hat{\gamma}'_T\big(\pi;X_T,z_T(\eps_{1:(T-1)})\big)\Big\}^2 \Bigg),
\#
where the last step is because $(e^t + e^{-t})/2 \le e^{t^2/2}$ for any $t \in \RR$.
Meanwhile, 
writing 
$z_{T}(\eps_{1:(T-1)}) = (a_T,a_T', y_T,y_T',f_T,h_T)$, 
it 
holds deterministically that 
\$
& \big\{\hat{\gamma}_T \big(\pi;X_T,z_T(\eps_{1:(T-1)})\big) 
- \hat{\gamma}'_T\big(\pi;X_T,z_T(\eps_{1:(T-1)})\big)\big\}^2\\
&= \bigg\{ \frac{y_T\ind\{a_T = \pi(X_T)\}}
{f_T(X_T,\pi(X_T))} -  \frac{y_T'\ind\{a_T' = \pi(X_T)\}}
{f_T(X_T,\pi(X_T))} \\
& \qquad \qquad \qquad + h_T(X_T,\pi(X_T)) \cdot
\Big(\frac{\ind\{a_T = \pi(X_T)\}}
{f_T(X_T,\pi(X_T))} - \frac{\ind\{a_T' = \pi(X_T)\}}
{f_T(X_T,\pi(X_T))}\Big)\bigg\}^2 \\
&\leq  2 \cdot \bigg\{y_T\cdot \frac{\ind\{a_T = \pi(X_T)\}}
{f_T(X_T,\pi(X_T))} -y_T'\cdot \frac{\ind\{a_T' = \pi(X_T)\}}
{f_T(X_T,\pi(X_T))} \bigg\}^2
+ 2 \hat{u}_T(\pi;X_T,z_{T}(\eps_{1:(T-1)})) \\ 
& \le 4\hat{u}_T(\pi;X_T,z_{T}(\eps_{1:(T-1)})),
\$
where the first inequality uses the fact that 
$(a+b)^2\leq 2a^2+2b^2$ for any $a,b\in\RR$. 
Collectively, we have 
\$
\EE\big[L_T(\lambda) \biggiven \cG_{T-1}\big] \le L_{T-1}(\lambda),
\$
and hence $L_T(\lambda)$ is a super-martingale with 
\$
\EE\big[L_T(\lambda)\big] = \EE\big[\exp\big(\lambda A-\lambda^2B^2/2\big)\big] \leq 1, 
\quad \textrm{where }A = S_T,~B= M_T^{1/2}.
\$
{Now invoking Lemma~\ref{lemma:self_normal} for $A,B$ 
and taking $y=1$, we know 
that for any $x\geq \sqrt{2}$, 
\#\label{eq:tail_bd}
\exp(-x^2/2)&\geq \PP\Bigg( \frac{|A|}{\sqrt{(B^2+1)\big( 1+ 0.5\log(B^2+1)  \big)}}  \geq x\Bigg) \notag \\ 
&\geq \PP\Bigg( \frac{|A|}{\sqrt{2B^2 \cdot \big( 1+ 0.5\log(2B^2)  \big)}}  \geq x\Bigg), \notag \\
&\geq \PP\Bigg( \frac{|A|}{\sqrt{ B^2 \cdot  \log (14B^2)  \big)}}  \geq x\Bigg),
\#
where the second line
uses the fact that $B^2+1 \leq 2B^2$ since $B\geq 1$, 
and the third line uses the fact that $1+\log(2B^2)/2 = \log(2e^2B^2)/2\leq  \log(14B^2)/2 $. 
In~\eqref{eq:tail_bd}, 
plugging in the expression of $A$ and $B$
we obtain 
\#\label{eq:bd_almost_done}
\PP_\eps\bigg( \frac{|\sum^T_{t=1} \eps_t\cdot 
\{\hat{\gamma}_t (\pi;X_t,z_t^*(\eps_{1:t-1})) - \hat{\gamma}'_t(\pi;X_t,z_t^*(\eps_{1:t-1}))\}|  }
{  2 \sqrt{ \sum^T_{t=1}  \hat{u}_t (\pi;X_t,z_t^*(\eps_{1:(t-1)}))  } \sqrt{\log [56\sum^T_{t=1}  \hat{u}_t (\pi;X_t,z_t^*(\eps_{1:(t-1)}))]}   } 
\ge x \Big) \le e^{-x^2/2} 
\#
for any $x\geq \sqrt{2}$, 
which concludes the proof of Lemma~\ref{lem:adapt_3}.}
\end{proof}

\subsection{Proof of Lemma~\ref{lem:adapt_4}}
\label{app:lem_adapt_4}

\begin{proof}[Proof of Lemma~\ref{lem:adapt_4}]
Following exactly the same arguments as in the proof of 
Lemma~\ref{lemma:bounded_part}, we have that 
\$
\sup_{\pi\in\Pi} |\sum_{t=1}^T \mu(X_t,\pi(X_t))-Q(\pi)|
\leq \sqrt{T} \cdot \sqrt{8 (\ndim(\Pi)\log(TK^2)+ \log(16/\delta))}
\$ 
with probability at least $1- \delta/2$. 
Noting that $V(\pi)\geq\sqrt{T}$ deterministically completes the proof of 
Lemma~\ref{lem:adapt_4}. 
\end{proof}

\subsection{Proof of Corollary~\ref{cor:adaptive}}
\label{app:cor_adapt}

\begin{proof}[Proof of Corollary~\ref{cor:adaptive}]
From Theorem~\ref{thm:adapt_upper}, it holds with probability at least $1-\delta$ 
that $\cL(\hat\pi;\cC,\Pi) \leq 2\beta \cdot V(\pi^*)$ if we choose any 
$\beta\geq  67 \cdot (\log T)^{\alpha/2} \cdot \sqrt{\ndim(\Pi)\log(TK^2) + \log(16/\delta)}$. 
We now proceed to bound $V(\pi^*)$ by the three terms in Equation (7) separately. 

\noindent\textit{Bounding $\vs(\pi^*)$.} 
We consider martingale differences 
$D_t = \frac{\ind\{A_t=\pi^*(X_t)\}}{e_t(X_t,\pi^*(X_t)\given \cH_t)^2} - \frac{1}{e_t(X_t,\pi^*(X_t)\given \cH_t)}$ 
and the filtration $\cF_{t-1} = \sigma(\cH_t,X_t)$. Then 
$\{D_t\}$ is adapted to $\{\cF_t\}$ and $\EE[D_t\given \cF_{t-1}]=0$. 
Moreover, Assumption~\ref{assump:poly} implies 
\$
M_t := \sum_{j=1}^t \EE[D_j^2\given \cF_{j-1}] \le \sum_{j=1}^t \frac{1}{e_t(X_t,\pi^*(X_t)\given \cH_t)^3} 
\leq \sum_{j=1}^t g_t^{-3} \leq \frac{\bar{c}^{-3}}{1+3\gamma} (T+1)^{1+3\gamma}  
\$
for all $t\in[T]$. Also, $D_t\leq g_t^{-2} \leq R:=T^{2\gamma}$. 
Then invoking Freedman's inequality (c.f.~Lemma~\ref{lem:bernstein}) 
with $\sigma^2 = \frac{\bar{c}^{-3}}{1+3\gamma} (T+1)^{1+3\gamma}$ and 
$R = T^{2\gamma}$, for any $x\geq 0$, we have 
\$
\PP\bigg( \sum_{t=1}^T \frac{\ind\{A_t=\pi^*(X_t)\}}{e_t(X_t,\pi^*(X_t)\given \cH_t)^2} - 
\sum_{t=1}^T \frac{1}{e_t(X_t,\pi^*(X_t)\given \cH_t)} \geq x   \bigg)
\leq \exp\bigg(  \frac{-x^2/2}{\sigma^2 + Rx/3} \bigg).
\$
Now we take $x=2\max\{1,\sqrt{\bar{c}^{-3}/(1+3\gamma)}\}\cdot (T+1)^{(1+3\gamma)/2}\cdot \log(1/\delta)$. 
Then since $(1+3\gamma)/2\geq 2 \gamma$, we know 
$x\geq 2\sigma\log(1/\delta)$ and $x\geq 2R\log(1/\delta) $. Hence 
\$
\frac{-x^2/2}{\sigma^2 + Rx/3} \leq \frac{- 2\log(1/\delta)^2}{1 + 2/3\cdot \log(1/\delta) } 
\leq -\log(1/\delta).
\$
That is, with probability at least $1-\delta$, we have 
\$
\vs(\pi^*)&\leq \frac{1}{T} \Bigg\{  \sum_{t=1}^T \frac{1}{e_t(X_t,\pi^*(X_t)\given \cH_t)} + 2\max\bigg\{1,\frac{\bar{c}^{-3/2}}{\sqrt{1+3\gamma}}\bigg\}\cdot (T+1)^{(1+3\gamma)/2}\cdot \log(1/\delta) \Bigg\}^{1/2} \\
          &\leq  \frac{1}{T} \Bigg\{ \sum_{t=1}^T \bar{c}^{-1}\cdot t^{\gamma} + 2\max\bigg\{1,\frac{c^{3/2}}{\sqrt{1+3\gamma}}\bigg\}\cdot (T+1)^{(1+3\gamma)/2}\cdot \log(1/\delta)  \Bigg\}^{1/2} \\ 
&\leq \frac{1}{T} \Bigg\{ \frac{\bar{c}^{-1}\cdot (T+1)^{1+\gamma}}{1+\gamma}+ 2\max\bigg\{1,\frac{\bar{c}^{-3/2}}{\sqrt{1+3\gamma}}\bigg\}\cdot (T+1)^{(1+3\gamma)/2}\cdot \log(1/\delta)  \Bigg\}^{1/2}  \\ 
&\leq   \sqrt{\frac{2\bar{c}^{-1}  + 4\max\{1,\bar{c}^{-3/2}\} \cdot \log(1/\delta) \cdot  T^{(\gamma-1)/2}}{T^{1-\gamma}}  }.
\$
Here the fourth line uses the fact that $(T+1)^{1+\gamma}/(1+\gamma)\leq T^{1+\gamma} \cdot \frac{2^{1+\gamma}}{1+\gamma}\leq 2 T^{1+\gamma}$  
and 
\$
(T+1)^{(1+3\gamma)/2}\leq T^{(1+3\gamma)/2} \cdot \frac{2^{(1+3\gamma)/2}}{\sqrt{1+3\gamma}} \leq 
2 T^{(1+3\gamma)/2}
\$ for $\gamma\in(0,1)$. 

\noindent\textit{Bounding $\vp(\pi^*)$.} 
By the lower bound for $e_t(X_t,\pi^*(X_t)\given \cH_t)$, we have the deterministic upper bound
\$
\vp(\pi^*)\leq \frac{1}{T} \sqrt{ \sum_{t=1}^T \bar{c}^{-1}\cdot t^{\gamma} }
\leq \sqrt{ \frac{\bar{c}^{-1}}{T^2} \cdot \frac{(T+1)^{\gamma+1}}{\gamma+1}   }
\leq \sqrt{ \frac{2 }{\bar{c} \cdot T^{1-\gamma}}  }.
\$

\noindent\textit{Bounding $\vh(\pi^*)$.} 
Finally, we have the deterministic upper bound
\$
\vh(\pi^*) \leq \frac{1}{T} \bigg(  \sum_{t=1}^T \bar{c}^{-3} \cdot t^{3\gamma}  \bigg)^{1/4}
= \bigg(   \frac{1}{\bar{c}^{3}\cdot T^4} \cdot \frac{(T+1)^{1+3\gamma}}{1+3\gamma}  \bigg)^{1/4}
\leq \bigg(   \frac{2  \cdot T^{1+3\gamma} }{\bar{c}^{3}\cdot T^4}  \bigg)^{1/4}
\leq    \frac{2  }{\bar c^{3/4} \cdot T^{3(1-\gamma)/4}}  .
\$

Putting the three terms together, 
we know that with probability at least $1-\delta$, 
\$
V(\pi^*)&\leq \max\Bigg\{  \sqrt{\frac{2/\bar{c}  + 4\max\{1,\bar{c}^{-3/2}\} \cdot \log(1/\delta) \cdot  T^{(\gamma-1)/2}}{T^{1-\gamma}}} ,~  \sqrt{ \frac{2\bar{c}^{-1}}{T^{1-\gamma}}  },~ 
\frac{2\bar{c}^{-3/4}  }{T^{3(1-\gamma)/4}} \Bigg\}\\ 
&\leq T^{-(1-\gamma)/2} \cdot 
\max\Bigg\{  \sqrt{ 2/\bar{c}  + 4\max\{1,\bar c^{-3/2}\} \cdot \log(1/\delta) \cdot  T^{(\gamma-1)/2} } ,~ 
  2\bar c^{-3/4}  T^{-(1-\gamma)/4}  \Bigg\} \\ 
&\leq T^{-(1-\gamma)/2} \cdot 
\max\Big\{ 2/\sqrt{\bar c}, 4\max\{1,\bar c^{-3/4} \}\cdot \log(1/\delta)\cdot T^{-\frac{1-\gamma}{4}}  \Big\} \\ 
&\leq 4 T^{-(1-\gamma)/2} \cdot 
\max\Big\{  1/\sqrt{\bar c},  \max\{1,\bar c^{-3/4} \}\cdot \log(1/\delta)\cdot T^{-\frac{1-\gamma}{4}}  \Big\} \\ 
&\leq \frac{4 \log(1/\delta)}{ T^{(1-\gamma)/2} \cdot  \max\{1,\bar c^{ 3/4} \}}.
\$
Here we used the fact that $\sqrt{x}\leq \max\{1,x^{3/4}\}$ for $x>0$ 
and $T^{-(1-\gamma)/4}\leq 1$. 
Further applying a union bound, we know that with probability at least $1-2\delta$, 
since $TK^2\geq 8/\delta$, 
\$
\cL(\hat\pi;\cC,\Pi) &\leq 2 c 
\cdot (\log T)^{\alpha/2} \cdot \sqrt{\ndim(\Pi)\log(TK^2) + \log(8/\delta)} \cdot \frac{4 \log(1/\delta)}{ T^{(1-\gamma)/2} \cdot  \max\{1,\bar c^{ 3/4} \}} \\ 
&\leq 2c \cdot (\log T)^{\alpha/2} \cdot \sqrt{2 \ndim(\Pi)\log(TK^2)  } \cdot \frac{4 \log(1/\delta)}{ T^{(1-\gamma)/2} \cdot  \max\{1,\bar c^{ 3/4} \}}  \\ 
&\leq 12c \cdot \sqrt{ \frac{\ndim (\Pi)}{T^{1-\gamma}} } \cdot \frac{(\log (TK^2))^{(1+\alpha)/2} \log(1/\delta)}{\max\{1,\bar c^{ 3/4} \}},
\$
which concludes the proof of Corollary~\ref{cor:adaptive}. 
\end{proof}

\subsection{Proof of Theorem~\ref{thm:lower_adapt}}
\label{app:subsec_lower_adapt}

\begin{proof}[Proof of Theorem~\ref{thm:lower_adapt}]
This proof is similar to that of Theorem~\ref{thm:lower}, and also 
similar to that in~\cite{zhan2021policy}. 

Let $S=\{x_1,\dots,x_d\}\subseteq \cX$ with $d:=\ndim(\Pi)$ be the set 
that is N-shattered by $\Pi$. 
Then we know that there exists 
two functions $f_1,f_2\colon \cX\to \cA$ such that 
$f_1(x_i)\neq f_2(x_i)$ for all $i=1,\dots,d$, and 
for every subset $S_0\subseteq S$, there exists a policy $\pi\in \Pi$ 
such that $\pi(x)=f_1(x)$ for all $x\in S_0$ 
and $\pi(x)=f_2(x)$ for all $x\in S\backslash S_0$. 
We let $\cV=\{\pm 1\}^d$ be an index set of size $2^d$.

Let $\delta\in(0,1/4)$ be a constant to be decided later. 
We consider a subset of instances $\cR_0(\bar{c},\gamma,T) = \{R_v\}_{v\in \cV}\subseteq \cR(\bar{c},\gamma,T)$ 
indexed by all $v\in \cV$.  
Each element $R_v = (\cC_v,e)\in \cR_{0}(\bar{c},\gamma,T)$ 
corresponds to $\cC_v=(\rho,\mu_v,\cA)$, where  $\rho$ is the uniform distribution over $S$, 
and the reward distribution $\mu_v$ is given by 
\$
\PP(Y(a) \given X=x_i) \sim 
\begin{cases}
&\textrm{Bern}(1/2),\quad a = f_1(x_i) \\
&\textrm{Bern}(1/2 + v_i \cdot \delta),\quad a = f_2(x_i), \\ 
&\textrm{Bern}(0),\quad \textrm{otherwise}. 
\end{cases}
\quad \forall x_i\in S.
\$
Since $S$ is N-shattered by $\Pi$, 
for every $v\in \cV$, there exists 
some $\pi_v^*\in \Pi$ such that 
$\pi_v^*(x_i)=f_1(x_i)$ for all $v_i <0$ and 
$\pi_v^*(x_i)=f_2(x_i)$ for all $v_i>0$, which is the optimal 
policy for $\cC_v$ by definition. 
Since $\delta\in(0,1/4)$, similar to the proof of Theorem~\ref{thm:fix_upper}, we know 
\# 
\cL(\pi;\cC_v,\Pi)  \geq \frac{1}{d}\sum_{i=1}^d \delta\cdot \ind\{\pi(x_i) \neq \pi_v^*(x_i)\} .
\#
Also, we define the fixed behavior policy by 
\$
e_t(x_i,a) = 
\begin{cases}
& \bar{c} \cdot t^{-\gamma},\quad a = f_1(x_i) \\
&\bar{c} \cdot t^{-\gamma},\quad a = f_2(x_i), \\ 
&(1-2\bar{c} \cdot t^{-\gamma})/(K-2),\quad \textrm{otherwise}. 
\end{cases}
\quad \forall x_i\in S.
\$
For any $v\in \cV$, we let $\PP_v(\cdot)$ and $\EE_v[\cdot]$ 
denote the joint distribution of data under $(\cC_v,e)$. 
Now fix any $v\in \cV$, and for any $i\in[d]$, 
we denote $M_i[v]\in\{\pm 1\}^d$ as the vector
that only differs from $v$ in entry $i$, i.e., 
$(M_i[v])_i = -v_i$ and $(M_i[v])_j=v_j$ for all $j\neq i$. 
Then, following exactly the same arguments as in~\eqref{eq:lb_1} and~\eqref{eq:lb_2} in the 
proof of Theorem~\ref{thm:fix_upper}, we know that 
for any (data-dependent) policy $\hat\pi$, we have 
\# 
\sup_{R_v\in \cR_0(\bar{c},\gamma,T)} \EE_{v}\big[ \cL(\hat\pi;\cC_v,\Pi)  \big]
& \geq \frac{1}{2^{d}} \sum_{v\in \cV} \EE_{v}\big[  \cL(\hat\pi;\cC_v,\Pi)  \big]
\geq  \frac{1}{2^{d}} \sum_{v\in \cV} \frac{1}{d}\sum_{i=1}^d \delta \cdot \EE_{v}\big[ \ind\{ \hat\pi(x_i) \neq \pi_{v}^*(x_i)  \big] \\
&\geq \frac{\delta}{d\cdot 2^{d}} \sum_{i=1}^d  \sum_{v\in \cV}   \EE_{v}\big[ \ind\{ \hat\pi(x_i) \neq \pi_{v}^*(x_i)  \big] \notag \\ 
&\geq  \frac{\delta}{d\cdot 2^{d}} \sum_{i=1}^d \sum_{v\in\cV\colon v_i=1} \big( 1- \textrm{TV}(\PP_v, \PP_{M_i[v]})\big) \\ 
&\geq \frac{\delta}{d\cdot 2^{d}} \sum_{i=1}^d \sum_{v\in\cV\colon v_i=1} \frac{1}{2} \exp\big(-\textrm{KL}(\PP_{v}\|\PP_{M_i[v]})\big),
\# 
where $\textrm{KL} (\PP_{v }  \| \PP_{M_i[v]} )$
denotes the KL divergence of $\{X_t,Y_t,A_t\}$ under $R_{v }$ versus under $R_{M_i[v]}$.
Again, following the arguments in the proof of Theorem~\ref{thm:lower}, we know that  
\$
\log\bigg(\frac{\PP_{v} }{\PP_{M_i[v]} }(x_j,y,a)\bigg)
&= (2y-1) \log \bigg( \frac{1/2+\delta}{1/2-\delta} \bigg) \ind\{a=f_2(x_j),j=i\}.
\$
Therefore, by our construction, the KL divergence can be bounded as 
\#\label{eq:kl_bd_adapt}
\textrm{KL} (\PP_{v}  \| \PP_{v'}) &\leq \sum_{t=1}^T \EE\big[  \PP_v\big(A_t=f_2(x_i),X_t=x_i\big) 
\EE_v[2Y-1\given A_t,X_t] \big]\log \bigg( \frac{1/2+\delta}{1/2-\delta} \bigg) \\ 
&\leq  \frac{12\delta^2 \bar{c}}{d}\cdot \sum_{t=1}^T t^{-\gamma} 
\leq \frac{12\delta^2 \bar{c}}{d}\cdot \frac{T^{1-\gamma}}{1-\gamma},
\#
where we use the fact that $\log(1+2\delta)\leq 2\delta$ 
and $\log(1-2\delta) \geq -4\delta$ for $\delta\in(0,1/4)$.  
Combining the above results, we have 
\$ 
\sup_{R_v\in \cR_0(\bar{c},\gamma,T)} \EE_{v}\big[ \cL(\hat\pi;\cC_v,\Pi)  \big] 
\geq \frac{\delta}{d\cdot 2^d} \sum_{i=1}^d \sum_{v\in \cV\colon v_i=1}
\frac{1}{2} \exp\bigg( - \frac{12\delta^2 \bar{c}}{d}\cdot \frac{T^{1-\gamma}}{1-\gamma}\bigg)
= \delta \cdot \exp\bigg( - \frac{12\delta^2 \bar{c}}{d}\cdot \frac{T^{1-\gamma}}{1-\gamma}\bigg).
\$
Finally, taking $\delta=\sqrt{\frac{(1-\gamma)d}{24\cdot \bar{c}\cdot T^{1-\gamma}}}\le 1/4$, we arrive at 
\#
\sup_{R_v\in \cR (\bar{c},\gamma,T)} \EE_{v}\big[ \cL(\hat\pi;\cC_v,\Pi)  \big] 
&\geq 
\sup_{R_v\in \cR_0(\bar{c},\gamma,T)} \EE_{v}\big[ \cL(\hat\pi;\cC_v,\Pi)  \big]\\ 
&\geq \frac{\exp(-1/2)}{\sqrt{24 \bar{c}/(1-\gamma)}}\sqrt{\frac{d}{ T^{1-\gamma}}}\geq 0.12 \sqrt{\frac{(1-\gamma) d }{\bar{c} T^{1-\gamma}}},
\#
which concludes the proof of Theorem\ref{thm:lower_adapt}. 
\end{proof}

\section{Proof of supporting lemmas}

\subsection{Proof of Lemma~\ref{lemma:copy_moments}}
\label{app:lem_copy_moment}

\begin{proof}[Proof of Lemma~\ref{lemma:copy_moments}]
  Fix any $\pi\in \Pi$ and any $\eta > 0$. 
  Since $(Y_t',A_t')$ are independent copies of $(Y_t,A_t)$ conditional on $X_t$, 
  it is easy to check that 
  \#\label{eq:unbiased}
  \EE\big[ \,\hat\Gamma_t'(\pi) - \mu(X_t,\pi(X_t))\biggiven \{X_t,A_t,Y_t\}_{t=1}^T\big] = 0,
  \#
  and $\{\hat\Gamma_t'(\pi)\}_{t=1}^T$ 
  are mutually independent conditional on $\{X_t,A_t,Y_t\}_{t=1}^T$. 
  Then by Chebyshev's inequality, 
  \$
  & \PP\bigg(\Big|\sum^T_{t=1} \hat \Gamma_t'(\pi) -  \mu(X_t,\pi(X_t))\Big| \ge \eta T \bigggiven
  \{X_t,A_t,Y_t\}_{t=1}^T\bigg) \\ 
  & \leq \frac{1}{\eta^2 T^2} 
  \EE\Bigg[\bigg(\sum_{t=1}^T \big\{\hat \Gamma'_t(\pi) - \mu(X_t,\pi(X_t))\big\}\bigg)^2
  \bigggiven \{X_t,A_t,Y_t\}_{t=1}^T\Bigg] \\ 
  &=  \frac{1}{\eta^2 T^2}\sum^T_{t=1}\EE\Big[\big\{ \hat \Gamma'_t(\pi) - \mu(X_t,\pi(X_t))\big\}^2
  \Biggiven \{X_t,A_t,Y_t\}_{t=1}^T\Big] .
  \$
  In view of~\eqref{eq:unbiased}, 
  each term in the last summation can be bounded as 
  \#\label{eq:var_bound}
  & \EE\Big[\big\{\hat \Gamma'_t(\pi) - \mu(X_t,\pi(X_t))\big\}^2
  \biggiven \{X_t,A_t,Y_t\}_{t=1}^T\Big]= \Var\big( \hat \Gamma'_t(\pi) 
  \biggiven X_t \big)\\
  & = \Var\bigg( \frac{\ind\{A_t' = \pi(X_t)\}}{e(X_t,\pi(X_t))}
   \big(Y_t' - \hmu(X_t,\pi(X_t))\big)
  \bigggiven X_t \bigg) \\ 
  & \leq \EE\bigg[  \frac{\ind\{A_t' = \pi(X_t)\}}{e(X_t,\pi(X_t))^2}
   \big(Y_t' - \hmu(X_t,\pi(X_t))\big)^2
  \bigggiven X_t \bigg] \leq  \frac{1}{e(X_t,\pi(X_t))},
  \#
  where the last inequality uses  that fact that 
  $Y_t'\in[0,1]$ and $\hat\mu(X_t,\pi(X_t))\in[0,1]$. 
  We then have 
  \$
  \PP\bigg(\Big|\sum^T_{t=1} \hat \Gamma_t'(\pi) - \mu(X_t,\pi(X_t))\Big| \ge \eta T \bigggiven
  \{X_t,A_t,Y_t\}_{t=1}^T\bigg) 
  \leq \frac{1}{\eta^2 T^2}\sum_{t=1}^T  \frac{1}{e(X_t,\pi(X_t))}
  = \frac{\vp(\pi)^2}{\eta^2}.
  \$ 
  
  Taking $\eta = 2 \vp(\pi)$ (this is measurable 
  with respect to $\{X_t,A_t,Y_t\}_{t=1}^T$)
  in the above inequality leads to  
  \#
  \label{eq:copy_bound_1}
  \PP\bigg( \frac{1}{T} \Big|\sum^T_{t=1} \hgamma'_t(\pi) - \mu(X_t,\pi(X_t))\Big| 
  \ge {2}\vp(\pi) \Biggiven \{X_t,A_t,Y_t\}_{t=1}^T\bigg)
  \le \frac{1}{4}.
  \#
  We then proceed to prove the second inequality.
  For any $\pi \in \Pi$ and any $\zeta > 0$, 
  \$
  & \PP\bigg(\sum^T_{t=1} \frac{\ind\{A'_t = \pi(X_t)\}}{e(X_t,\pi(X_t))^2} 
  \ge \zeta + \vp(\pi)^2T^2 \bigggiven \{X_t,A_t,Y_t\}_{t=1}^T\bigg)\\
  &\leq \PP\Bigg(\sum^T_{t=1} \frac{\ind\{A'_t = \pi(X_t)\}}{e(X_t,\pi(X_t))^2}
    - \EE\bigg[\frac{\ind\{A_t' = \pi(X_t)\}}{e(X_t,\pi(X_t))^2} \bigggiven \{X_t,A_t,Y_t\}_{t=1}^T\bigg]
  \ge \zeta \Bigggiven \{X_t,A_t,Y_t\}_{t=1}^T\Bigg)\\
  &\leq \frac{1}{\zeta^2} \sum^T_{t=1} \Var\Bigg(\frac{\ind\{A_t' = \pi(X_t)\}}{e(X_t,\pi(X_t))^2} 
  \Bigggiven \{X_t,A_t,Y_t\}^T_{t=1}\Bigg)\\
  & \leq \frac{1}{\zeta^2} \sum^T_{t=1} \EE\Bigg[\bigg(\frac{\ind\{A_t' = \pi(X_t)\}}{e(X_t,\pi(X_t))^2}\bigg)^2
  \Bigggiven \{X_t,A_t,Y_t\}^T_{t=1}\Bigg]\\
  &=  \frac{1}{\zeta^2} \sum^T_{t=1} \frac{1}{e(X_t,\pi(X_t))^3}
  = \frac{\vh(\pi)^4T^4}{\zeta^2},
  \$
  where the second  inequality is 
  because $\{A_t'\}_{t=1}^T$ are mutually independent
  conditional on $\{X_t,A_t,Y_t\}_{t=1}^T$. Taking 
  $\zeta = 2 \vh(\pi)^2T^2$, we have
  \$
  \PP\Big(V_{\rm s}'(\pi)^2 \ge 4\cdot\max \big(\vp(\pi)^2,  \vh(\pi)^2\big)
  \Biggiven \{X_t,A_t,Y_t\}_{t=1}^T\Big) \le \frac{1}{4},
  \$
  which concludes the proof of Lemma~\ref{lemma:copy_moments}.
  \end{proof}

\subsection{Proof of Lemma~\ref{lem:bd_events}}
\label{app:proof_bd_events}
\begin{proof}[Proof of Lemma~\ref{lem:bd_events}]
Fix $\pi \in \Pi$ and $\eta >0$. By Chebyshev's inequality,
we have
\$
&\PP\bigg(\frac{1}{T} \Big|\sum^T_{t=1}\hgamma'_t(\pi) - \mu(X_t,\pi(X_t))\Big| \ge \eta \bigggiven \xat\bigg)\\
\le ~&\frac{1}{\eta^2T^2} \EE\bigg[\Big(\sum^T_{t=1} \hgamma_t'(\pi) - \mu(X_t,\pi(X_t))\Big)^2 \bigggiven \xat\bigg].
\$
Since $\{A_t',Y_t'\}_{t=1}^T$ are mutually independent 
conditional on $\xat$, the right-hand side of the above
can be written as
\$
& \frac{1}{\eta^2 T^2}\sum^T_{t=1} 
\EE\Big[\big(\hgamma_t'(\pi) - \mu(X_t,\pi(X_t))\big)^2 \Biggiven \xat\Big] \\
  = \, & \frac{1}{\eta^2 T^2} \sum^T_{t=1}
\Var\Big(\frac{\ind\{A_t' = \pi(X_t)\}}
{e_t(X_t,\pi(X_t) \given \cH_t)} \Biggiven \xat\Big)\\
  \le \, & \frac{1}{\eta^2 T^2} \sum^T_{t=1}
\EE\Big[\frac{\ind\{A_t' = \pi(X_t)\}}
{e_t(X_t,\pi(X_t) \given \cH_t)^2} \Biggiven \xat\Big] \\
\le \, & \frac{1}{\eta^2 T^2} \sum^T_{t=1} 
\frac{1}{e_t(X_t,\pi(X_t)\given \cH_t)}
\le \frac{\vp(\pi)^2}{\eta^2},
\$
where the step is because
$\EE[\hgamma_t(\pi) \given \xat] = \mu(X_t,\pi(X_t))$
and the last step uses the fact that 
$g_t(x,\pi(x)) \le e_t(x,\pi(x) \given \cH_t)$.
Taking $\eta = 2\vp(\pi)$, we have
\$
\PP\bigg(\frac{1}{T} \Big|\sum^T_{t=1}\hgamma'_t(\pi) - \mu(X_t,\pi(X_t))\Big| 
\ge 2\vp(\pi) \bigggiven \xat\bigg)
\le \frac{1}{4}.
\$
We proceed to bound the conditional probability of $\cE_2$.
For any $\pi \in \Pi$ and $\zeta >0$ that may depend on $\xat$, 
since $(\zeta+\vp(\pi))^2\geq \zeta^2 + \vp(\pi)^2$, we have 
\#
\label{eq:mg_moment_1}
\notag & \PP\big(\vs'(\pi) \ge \zeta + \vp(\pi) \biggiven \xat\big)\\
& \leq  \PP\Bigg(\sum^T_{t=1} \frac{\ind\{A_t' = \pi(X_t)\}}{e_t(X_t,\pi(X_t) \given \cH_t)^2} 
\ge \zeta^2 T^2 + T^2 \vp(\pi)^2 \Bigggiven \xat\Bigg)
\#
Note that by construction
\$
\EE\bigg[\frac{\ind\{A_t' = \pi(X_t)\}}{e_t(X_t,\pi(X_t)\given \cH_t)^2} \bigggiven \xat \bigg]
= \sum^T_{t=1} \frac{1}{e_t(X_t,\pi(X_t)\given \cH_t)} = T^2 \vp(\pi)^2
.
\$
Then by Chebyshev's inequality, we have 
\$
\eqref{eq:mg_moment_1}
\, &\le \PP\Bigg(\sum^T_{t=1} \frac{\ind\{A_t' = \pi(X_t)\}}{e_t(X_t,\pi(X_t))^2} 
  - \EE\bigg[ \frac{\ind\{A_t'=\pi(X_t)\}}{e_t(X_t,\pi(X_t))^2} \Biggiven \xat\bigg] 
\ge \zeta^2 T^2 \Bigggiven \xat\Bigg)\\
\le \, & \frac{1}{\zeta^4 T^4}
\cdot \Var\bigg(\sum^T_{t=1} \frac{\ind\{A_t' = \pi(X_t)\}}{e_t(X_t,\pi(X_t))^2} \Biggiven \xat\bigg) \\
= \, & \frac{1}{\zeta^4T^4}
\cdot \sum^T_{t=1} \Var\bigg(\frac{\ind\{A_t' = \pi(X_t)\}}{e_t(X_t,\pi(X_t))^2} \Biggiven \xat\bigg)\\
\le \, & \frac{1}{\zeta^4T^4} \cdot \sum^T_{t=1} \frac{1}{e_t(X_t,\pi(X_t))^3}
\le \frac{\vh(\pi)^4}{\zeta^4},
\$
where in the next-to-last step we use the independence 
between $\{A_t'\}_{t=1}^T$ conditional on $\xat$.
Taking $\zeta = 2\vh(\pi) $, we have
\$
\PP\Big(\vs'(\pi)\ge 2 \cdot \max \big\{\vh(\pi), \vp(\pi) \big\} \Biggiven \xat\Big)
\le \frac{1}{4},
\$
for any $\pi\in\Pi$. Now that $\pi^\dagger$ 
only depends on $\xat$, hence the two inequalities also hold for $\pi^\dagger$. 
Therefore, we conclude the proof of Lemma~\ref{lem:bd_events}. 
\end{proof}

\section{Auxiliary lemmas}

The following  Natarajan's lemma  characterizes the 
number of distinct realizations of $\pi(x)$ for $\pi\in\Pi$, 
which allows us to turn the uniform bound over $\Pi$ 
to that over a  finite set with polynomially growing  size. 

\begin{lemma}[\cite{natarajan1989learning}]
\label{lemma:nat_lemma}
Let $\Pi \defn \{\pi: S \mapsto \cA\}$ for a finite 
set $S\in \cX^{|S|}$. Then
\$
|\Pi| \le |\cX|^{\ndim(\Pi)} |\cA|^{2\ndim(\Pi)}.
\$
\end{lemma}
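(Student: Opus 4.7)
The plan is to establish Natarajan's lemma by reducing the multi-class growth function to the classical binary Sauer--Shelah bound via pairwise projections, and then gluing the pairwise bounds through a careful combinatorial encoding. Throughout, write $d = \ndim(\Pi)$, $m = |S|$, and $K = |\cA|$; the target bound can be equivalently stated as $|\Pi| \le m^d K^{2d}$ (the exponent on $|\cX|$ in the lemma statement plays the role of $m$ since $\Pi$ is restricted to $S$).

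First I would set up the reduction to binary classes. For each ordered pair $(a_1, a_2) \in \cA^2$ with $a_1 \neq a_2$, I would define the partial binary class $\Pi_{a_1, a_2}$ on $S$ by sending each $\pi \in \Pi$ to the partial function $h_\pi$ that assigns $0$ on $\{x : \pi(x) = a_1\}$, assigns $1$ on $\{x : \pi(x) = a_2\}$, and is undefined elsewhere. The key observation is a monotonicity of dimensions: if some $T \subseteq S$ were VC-shattered by $\Pi_{a_1, a_2}$ in the appropriate partial sense, then $T$ would be Natarajan-shattered by $\Pi$ using the witnessing constant functions $f_1 \equiv a_1$ and $f_2 \equiv a_2$. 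Consequently $\mathrm{VC}(\Pi_{a_1,a_2}) \le d$, so by Sauer--Shelah the number of distinct partial patterns in $\Pi_{a_1,a_2}|_S$ is at most $\sum_{i=0}^d \binom{m}{i} \le m^d$ (for $m \geq d$).

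Next I would glue the pairwise patterns into a single identifier for each $\pi \in \Pi|_S$. A naive product over all $K(K-1)$ pairs blows up as $m^{d K(K-1)}$, which is far too large; the heart of the argument is to encode $\pi$ using only $d$ coordinates, each carrying one point in $S$ and one pair in $\cA^2$. Concretely, I would argue by induction on $d$ that every $\pi \in \Pi|_S$ can be uniquely described by a sequence $\big((x_i, a_i^{(1)}, a_i^{(2)})\big)_{i=1}^{\ell}$ with $\ell \le d$, obtained by a greedy peeling procedure: at each stage find a point $x \in S$ and a pair of policies in the current subclass that disagree on $x$ with values $(a^{(1)}, a^{(2)})$, record the triple, restrict to the sub-class fixing $\pi(x) = a^{(1)}$, and recurse. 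The number of such sequences is bounded by $\binom{m}{d} K^{2d} \le m^d K^{2d}$, yielding the claim.

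The main obstacle will be verifying that this peeling terminates in at most $d$ rounds, i.e.\ that the greedily extracted points $(x_1, \dots, x_\ell)$ together with the recorded pairs $f_1(x_i) = a_i^{(1)}$ and $f_2(x_i) = a_i^{(2)}$ actually witness Natarajan shattering of the set $\{x_1, \dots, x_\ell\}$ by $\Pi$. This is exactly the classical Haussler--Long argument: at the $i$-th step one must show that every subset $S_0 \subseteq \{x_1, \dots, x_\ell\}$ of the already-chosen points admits a policy $\pi \in \Pi$ realizing $f_1$ on $S_0$ and $f_2$ on its complement, which follows because the greedy procedure maintains this invariant as part of the sub-class it tracks. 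Formalizing this invariant, and thereby bounding $\ell$ by $d$, is the delicate combinatorial step; once it is in place, the claimed bound $|\Pi| \le m^d K^{2d}$ follows immediately from counting identifier sequences.
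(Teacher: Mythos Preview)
The paper does not supply its own proof of this lemma; it is stated in the auxiliary section with a citation to \citet{natarajan1989learning} and no argument. So there is nothing in the paper to compare your attempt against directly.

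That said, your proposed proof has a genuine gap in Step~2. The greedy peeling procedure you describe does \emph{not} terminate in at most $d$ rounds, and the extracted points are \emph{not} in general Natarajan-shattered by $\Pi$. Here is a concrete binary counterexample (so $K=2$ and Natarajan dimension coincides with VC dimension): take $S=\{1,2,3\}$ and
\[
\Pi=\{(0,0,0),\ (1,0,0),\ (1,1,0),\ (1,1,1)\}.
\]
One checks directly that no two-element subset of $S$ is shattered, so $d=1$. Yet the greedy peeling aimed at identifying $\pi^*=(1,1,1)$ must successively restrict on coordinates $1$, $2$, and $3$ (after each restriction the remaining subclass still disagrees somewhere), producing an identifier of length $3>d$. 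The set $\{1,2,3\}$ with $f_1\equiv 1$, $f_2\equiv 0$ is certainly not shattered, since $\Pi$ realizes only four of the eight required patterns. Thus the invariant you assert the procedure maintains --- that the peeled points are always shattered --- simply fails, and the counting bound $\binom{m}{d}K^{2d}$ does not follow from this construction.

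The classical proofs (Natarajan's original, and the sharper version by Haussler and Long) proceed instead by induction on $m=|S|$: one removes a point $x$, bounds how many distinct extensions each restriction $\pi|_{S\setminus\{x\}}$ can have back to $S$, and shows that any ``excess'' extensions force a strictly smaller Natarajan dimension for an appropriate sub-class, to which the inductive hypothesis applies. The Haussler--Long argument you cite is of this inductive flavor, not a per-function greedy identification; you may want to revisit that reference and rebuild Step~2 along those lines. Your Step~1 (that each pairwise projection $\Pi_{a_1,a_2}$ has VC dimension at most $d$) is correct and is indeed a standard ingredient, but it is not what drives the final count.
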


The following self-normalized concentration inequality is 
from Corollary 2.2 of~\cite{de2004self}. 
\begin{lemma}[Self-normalized concentration]
\label{lemma:self_normal}
Suppose two random variables $A$ and $B>0$ satisfies that 
$\EE[\exp(\lambda A-\lambda^2 B^2/2)]\leq 1$ for all $\lambda\in \RR$. 
Then for all $x\geq \sqrt{2}$ and all $y>0$, it holds that 
\$
\PP\bigg( \frac{|A|}{\sqrt{(B^2+y)(1+0.5\log(1 + B^2/y))}} \geq x \bigg) \leq \exp(-x^2/2).
\$
\end{lemma}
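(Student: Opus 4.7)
The plan is to prove this via the classical \emph{method of mixtures} (as in de la Pe\~na, Klass, and Lai), which turns the family of canonical supermartingale inequalities indexed by $\lambda$ into a single self-normalized statement by integrating $\lambda$ against a Gaussian mixing density. The hypothesis $\EE[\exp(\lambda A - \lambda^2 B^2/2)] \le 1$ for every $\lambda \in \RR$ is exactly the input this technique requires, and the parameter $y>0$ will appear as the variance of the mixing density.

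First I would fix the mixing density $f_y(\lambda) = \sqrt{y/(2\pi)} \exp(-y\lambda^2/2)$, so that $\int f_y = 1$. Multiplying the hypothesis by $f_y(\lambda)$ and applying Fubini (justified by nonnegativity) gives
\[
\EE\!\left[\int_{\RR} \exp\!\bigl(\lambda A - \tfrac{\lambda^2}{2}(B^2+y)\bigr)\sqrt{\tfrac{y}{2\pi}}\, d\lambda\right] \le 1.
\]
The inner Gaussian integral evaluates in closed form to $\sqrt{y/(B^2+y)}\exp\!\bigl(A^2/(2(B^2+y))\bigr)$, producing the key inequality
\[
\EE\!\left[\exp\!\Bigl(\tfrac{A^2}{2(B^2+y)} - \tfrac{1}{2}\log\tfrac{B^2+y}{y}\Bigr)\right] \le 1.
\]
Next I would apply Markov's inequality at level $s>0$ to conclude
\[
\PP\!\left(A^2 \ge 2(B^2+y)\Bigl(s + \tfrac{1}{2}\log\tfrac{B^2+y}{y}\Bigr)\right) \le e^{-s}.
\]

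The last step is to rearrange the bound into the form claimed. Setting $s = x^2/2$ and noting $\log((B^2+y)/y) = \log(1 + B^2/y)$, the deviation threshold becomes $(B^2+y)\bigl(x^2 + \log(1+B^2/y)\bigr)$. Using $x^2 \ge 2$, we have $\log(1+B^2/y) \le \tfrac{x^2}{2}\log(1+B^2/y)$, so
\[
x^2 + \log(1+B^2/y) \le x^2\bigl(1 + \tfrac{1}{2}\log(1+B^2/y)\bigr),
\]
and hence the event $|A| \ge x\sqrt{(B^2+y)\bigl(1+\tfrac{1}{2}\log(1+B^2/y)\bigr)}$ is contained in the event controlled above, yielding the stated tail bound $\exp(-x^2/2)$.

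The only place that really requires care is justifying the Fubini swap and the closed-form Gaussian integral; both are routine because the integrand is nonnegative and the quadratic in $\lambda$ has strictly positive leading coefficient $(B^2+y)/2 > 0$ (this is exactly why $y>0$ is imposed). The condition $x \ge \sqrt{2}$ is only used at the very end to absorb the $\log(1+B^2/y)$ term into the self-normalized denominator, so weakening it would give essentially the same inequality with a slightly different logarithmic factor.
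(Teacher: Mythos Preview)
The paper does not prove this lemma; it simply cites it as Corollary~2.2 of \citet{de2004self}. Your argument via the method of mixtures is correct and is precisely the standard proof underlying that cited result, so there is nothing further to compare.
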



The lemma below describes Hoeffding's inequality~\citep{hoeffding1994probability}.
\begin{lemma}[Hoeffding's inequality]
\label{lem:hoeffding}
Let $X_1,\ldots,X_n$ be independent random variables
on $\RR$ such that $a_i \le X_i \le b_i$ almost surely.
If $S_n = \sum^n_{i=1}X_i$, then for all $x > 0$,
\$
\PP(S_n - \EE[S_n] \ge x) \le \exp\Big(-\frac{2x^2}{\sum^n_{i=1}(b_i - a_i)^2}\Big).
\$
\end{lemma}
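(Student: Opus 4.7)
The plan is to follow the classical Chernoff–Hoeffding route: first reduce the tail bound to an exponential moment bound via Markov's inequality, then control the moment generating function (MGF) of each centered summand with Hoeffding's lemma, exploit independence to multiply these MGF bounds, and finally optimize over the free parameter.

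Without loss of generality I would assume $\EE[X_i] = 0$ for each $i$ by replacing $X_i$ with $X_i - \EE[X_i]$ (which shifts $a_i,b_i$ but preserves $b_i - a_i$). Then for any $\lambda > 0$, by Markov's inequality applied to $\exp(\lambda S_n)$,
\[
\PP(S_n \ge x) \le e^{-\lambda x} \, \EE\bigl[e^{\lambda S_n}\bigr] = e^{-\lambda x} \prod_{i=1}^n \EE\bigl[e^{\lambda X_i}\bigr],
\]
where the factorization uses independence of $X_1,\dots,X_n$.

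The main technical step is \emph{Hoeffding's lemma}: for any centered random variable $X$ with $a \le X \le b$ a.s.,
\[
\EE\bigl[e^{\lambda X}\bigr] \le \exp\!\bigl(\lambda^2 (b-a)^2 / 8\bigr).
\]
I would prove this by writing $X = \frac{b-X}{b-a} \cdot a + \frac{X-a}{b-a}\cdot b$, applying convexity of $t \mapsto e^{\lambda t}$ to bound $e^{\lambda X}$ by a convex combination of $e^{\lambda a}$ and $e^{\lambda b}$, taking expectations (using $\EE[X]=0$), and then showing that the resulting function $\varphi(u) := -p u + \log(1 - p + p e^u)$ with $p = -a/(b-a)$ and $u = \lambda(b-a)$ satisfies $\varphi(u) \le u^2/8$ via a second-order Taylor expansion with remainder (the second derivative $\varphi''(u) = \frac{p e^u (1-p)}{(1-p+pe^u)^2}$ is bounded by $1/4$ since $s(1-s) \le 1/4$).

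Applying Hoeffding's lemma to each $X_i$ gives
\[
\PP(S_n \ge x) \le \exp\!\Bigl(-\lambda x + \tfrac{\lambda^2}{8} \sum_{i=1}^n (b_i - a_i)^2\Bigr).
\]
Finally I would optimize the right-hand side over $\lambda > 0$; the minimizer is $\lambda^* = 4x / \sum_{i=1}^n (b_i - a_i)^2$, which yields the stated bound $\exp\bigl(-2x^2/\sum_i (b_i-a_i)^2\bigr)$. The only delicate piece is the derivation of Hoeffding's lemma itself — the rest is bookkeeping — but since the lemma as stated is one-sided ($\PP(S_n - \EE[S_n] \ge x)$), no symmetrization or union bound is needed.
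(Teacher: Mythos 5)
Your proof is correct and is the standard Chernoff--Hoeffding argument: Markov's inequality applied to $e^{\lambda S_n}$, factorization of the MGF via independence, Hoeffding's lemma (via convexity plus a second-order Taylor bound on $\varphi$), and optimization over $\lambda$. The paper itself does not prove this lemma --- it states it as a known auxiliary result with a citation to Hoeffding --- so there is no in-paper proof to compare against; your derivation is the canonical one and fills in exactly what the citation points to.
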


The next lemma is adapted from 
Freedman's inequality~\citep{freedman1975tail}, with Bernstein's inequality as a special case. 

\begin{lemma}[Freedman's inequality]
\label{lem:bernstein}
Consider a real-valued martingale $\{Y_k\}_{k\geq 0}$ 
adapted to a filtration $\{\cF_k\}$, and its 
different sequence $\{X_k = Y_k-Y_{k-1}\}_{k\geq 1}$. Assume 
the difference sequence is uniformly bounded: 
$X_k\leq R$ almost surely for $k\geq 1$. 
Define the predictable quadratic variation process of the martingale: 
$M_k:=\sum_{j=1}^k \EE[X_j^2\given \cF_{j-1}]$. 
Then for all $x\geq 0$ and $\sigma^2>0$, we have 
\$
\PP\big( \exists~k\geq 0\colon Y_k \geq x~~\textrm{and}~~ M_k\leq \sigma^2   \big)
\leq \exp\bigg(  \frac{-x^2/2}{\sigma^2 + Rx/3} \bigg).
\$
As a special case, if  $Y_0=0$ 
and $Y_k = \sum_{j=1}^{k}X_k$ for $k=1,\dots,n$, 
where $\{X_k\}_{k=1}^n$ are independent mean-zero random variables, 
it becomes Bernstein's inequality with $\sigma^2 = \sum_{k=1}^n \Var(X_k)$. 
\end{lemma}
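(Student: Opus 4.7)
The plan is to follow the classical Bennett--Bernstein--Freedman recipe: build an exponential supermartingale out of the bounded increments, invoke Ville's maximal inequality for the tail, and then optimize the free parameter to produce the Bernstein-style denominator $\sigma^2 + Rx/3$. First I would, for each $\lambda \in [0, 3/R)$, set $\psi(\lambda) := (e^{\lambda R} - 1 - \lambda R)/R^2$ and use the elementary observation that $x \mapsto (e^{\lambda x} - 1 - \lambda x)/x^2$ is nondecreasing on $\RR$ to obtain the pointwise inequality $e^{\lambda X} \leq 1 + \lambda X + \psi(\lambda) X^2$ whenever $X \leq R$. Applied to $X = X_k$ and combined with $\EE[X_k \mid \cF_{k-1}] = 0$, this will yield the one-step bound
\$
\EE[e^{\lambda X_k} \mid \cF_{k-1}] \leq 1 + \psi(\lambda)\,\EE[X_k^2 \mid \cF_{k-1}] \leq \exp\bigl(\psi(\lambda)\,\EE[X_k^2 \mid \cF_{k-1}]\bigr),
\$
and hence $Z_k(\lambda) := \exp(\lambda Y_k - \psi(\lambda) M_k)$ will be a nonnegative $(\cF_k)$-supermartingale with $Z_0(\lambda) = 1$.

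Next I would pass to the tail bound via Ville's maximal inequality. On the event $\{\exists k : Y_k \geq x,\ M_k \leq \sigma^2\}$, the process $Z_k(\lambda)$ attains a value at least $\exp(\lambda x - \psi(\lambda)\sigma^2)$, so Ville immediately gives
\$
\PP\bigl(\exists\, k : Y_k \geq x,\ M_k \leq \sigma^2\bigr) \leq \exp\bigl(-\lambda x + \psi(\lambda)\sigma^2\bigr).
\$
Finally I would optimize over $\lambda$ by invoking the standard algebraic estimate $e^u - 1 - u \leq u^2/\bigl(2(1 - u/3)\bigr)$ on $[0,3)$, which rearranges to $\psi(\lambda) \leq \lambda^2/\bigl(2(1 - \lambda R/3)\bigr)$, and plugging in $\lambda^{*} := x/(\sigma^2 + Rx/3)$. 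A short algebraic check will show that $1 - \lambda^{*} R/3 = \sigma^2/(\sigma^2 + Rx/3)$, so that $\psi(\lambda^{*})\sigma^2 \leq \lambda^{*} x/2$ and thus $-\lambda^{*} x + \psi(\lambda^{*})\sigma^2 \leq -x^2/\bigl(2(\sigma^2 + Rx/3)\bigr)$, matching the claimed exponent. The Bernstein special case will then fall out instantly: for independent mean-zero $X_k$, $M_n = \sum_k \Var(X_k) = \sigma^2$ is deterministic, the event $\{M_n \leq \sigma^2\}$ holds automatically, and restricting to $k = n$ recovers the stated one-sided bound.

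The main obstacles I foresee are two minor technical points rather than conceptual ones. First, verifying the convexity-type inequality $\psi(\lambda) \leq \lambda^2/\bigl(2(1-\lambda R/3)\bigr)$ --- which is precisely what produces the Bernstein denominator --- requires checking that $(1 - u/3)(e^u - 1 - u) \leq u^2/2$ on $[0,3)$, done most cleanly either by comparing Taylor coefficients or by differentiating the difference and exploiting that its value and derivative at $u = 0$ both vanish. Second, I would apply Ville's inequality directly to the full supermartingale trajectory rather than attempting to optionally stop at the stopping time $\tau := \inf\{k : Y_k \geq x,\ M_k \leq \sigma^2\}$, since $\tau$ may be infinite and the optional-stopping route would otherwise require an additional uniform-integrability argument to pass to the limit $n \to \infty$.
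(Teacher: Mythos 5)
The paper does not prove Lemma~\ref{lem:bernstein}; it is stated as an auxiliary tool and attributed directly to \citet{freedman1975tail}, so there is no in-paper argument to compare against. Your blind proposal is a correct rendering of the standard derivation: the nondecreasing-secant trick $u \mapsto (e^u - 1 - u)/u^2$ to get the one-step moment-generating bound, the exponential supermartingale $Z_k(\lambda) = \exp(\lambda Y_k - \psi(\lambda) M_k)$, Ville's maximal inequality (and you are right that this cleanly avoids the uniform-integrability issue one would otherwise face via optional stopping at a possibly infinite $\tau$), and then the choice $\lambda^{*} = x/(\sigma^2 + Rx/3)$ together with the elementary estimate $e^u - 1 - u \leq u^2/(2(1-u/3))$ on $[0,3)$ to produce the Bernstein denominator. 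Two points worth stating explicitly if you write it out in full: first, you tacitly assume $Y_0 = 0$ (so that $Z_0(\lambda) = 1$), an assumption the lemma records only in its special case but which is needed in the general case as well for the claimed bound to be nonvacuous; second, one should note that $\lambda^{*} \in [0, 3/R)$, which holds precisely because $\sigma^2 > 0$, so the supermartingale construction and the bound on $\psi(\lambda^{*})$ are both valid. Both gaps are routine to fill; the argument is sound and matches the canonical proof of Freedman's inequality.
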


\end{document}